\definecolor{newcolor}{rgb}{.8,.349,.1}
\journal{Fuzzy Sets and Systems}
\newcommand{\new}{}
\newcommand{\reels}{\mathbb{R}}
\newcommand{\esp}{\mathbb{E}}
\renewcommand{\Pr}{\mathbb{P}} 
\newcommand{\bbX}{{\overline{X}}}
\def\cov{\text{Cov}}
\def\var{\text{Var}}
\newcommand{\one}{\mathbf{1}}
\newcommand{\height}{\textsf{hgt}}
\newcommand{\calS}{{\cal S}}
\newcommand{\calN}{{\cal N}}
\newcommand{\calX}{{\cal X}}
\newcommand{\tF}{{\widetilde{F}}}
\newcommand{\tG}{{\widetilde{G}}}
\newcommand{\tA}{{\widetilde{A}}}
\newcommand{\tB}{{\widetilde{B}}}
\newcommand{\tC}{{\widetilde{C}}}
\newcommand{\tP}{{\widetilde{P}}}
\newcommand{\tN}{{\widetilde{N}}}
\newcommand{\tX}{{\widetilde{X}}}
\newcommand{\oh}{{\overline{h}}}
\newcommand{\tTheta}{{\widetilde{\Theta}}}
\newcommand{\ttheta}{{\widetilde{\theta}}}
\newcommand{\tmu}{{\widetilde{\mu}}}
\newcommand{\tbmu}{{\widetilde{\bmu}}}
\newcommand{\tsigma}{{\widetilde{\sigma}}}
\newcommand{\tbSigma}{{\widetilde{\bSigma}}}
\newcommand{\obH}{{\overline{\bH}}}
\newcommand{\tY}{{\widetilde{Y}}}
\def\bmu{\boldsymbol{\mu}}
\def\bzero{\boldsymbol{0}}
\def\btheta{\boldsymbol{\theta}}
\def\bSigma{\boldsymbol{\Sigma}}
\def\bx{{\boldsymbol{x}}}
\def\bm{{\boldsymbol{m}}}
\def\bI{{\boldsymbol{I}}}
\def\bH{{\boldsymbol{H}}}
\def\bX{{\boldsymbol{X}}}
\def\bM{{\boldsymbol{M}}}
\def\bC{{\boldsymbol{C}}}
\def\bB{{\boldsymbol{B}}}
\def\bA{{\boldsymbol{A}}}
\def\cut#1#2{{}^#1#2}
\newcommand{\fracpar}[2]{\left(\frac{#1}{#2}\right)}
\def\GFN{\textsf{GFN}}
\def\GFV{\textsf{GFV}}
\def\block#1#2#3#4{
\begin{pmatrix}
#1& #2\\
#3 &#4
\end{pmatrix}}
\newcommand{\deriv}[2]{\frac{\partial #1}{\partial #2}}
\newcommand{\bi}{\begin{itemize}}
\newcommand{\ei}{\end{itemize}}
\newcommand{\be}{\begin{enumerate}}
\newcommand{\ee}{\end{enumerate}}
\newcommand{\bd}{\begin{description}}
\newcommand{\ed}{\end{description}}
\newtheorem{Prop}{Proposition}  
\newtheorem{Ex}{Example}
\newtheorem{Lem}{Lemma}
\newtheorem{Cor}{Corollary}
\newtheorem{Def}{Definition}
\begin{document}
\begin{frontmatter}
\title{
Reasoning with fuzzy and uncertain evidence using epistemic random fuzzy sets: 
general framework and practical models\footnote{This paper was published in \emph{Fuzzy Sets and Systems}, 453:1--36, 2023. This version corrects an error in Equation \eqref{eq:belint}.}}

\author[utc,iuf]{Thierry Den{\oe}ux}
\ead{Thierry.Denoeux@utc.fr}

\address[utc]{Universit\'e de technologie de Compi\`egne, CNRS\\
UMR 7253 Heudiasyc, Compi\`egne, France}
\address[iuf]{Institut universitaire de France, Paris, France}

\begin{abstract}
We introduce a general theory of epistemic random fuzzy sets for reasoning with fuzzy or crisp evidence. This framework generalizes both the Dempster-Shafer theory of belief functions, and possibility theory. Independent epistemic random fuzzy sets are combined by the generalized product-intersection rule, which extends both Dempster's rule for combining belief functions, and the product  conjunctive combination of possibility distributions. We introduce Gaussian random fuzzy numbers and their multi-dimensional extensions, Gaussian random fuzzy vectors, as practical models for quantifying uncertainty about scalar or vector quantities. Closed-form expressions for the  combination, projection and vacuous extension of Gaussian random fuzzy numbers and vectors are derived.
\end{abstract}

\begin{keyword}
Belief functions, evidence theory, possibility theory, random sets, uncertainty.
\end{keyword}

\end{frontmatter}

\section{Introduction}

The Dempster-Shafer (DS) theory of belief functions \cite{shafer76} and possibility theory \cite{zadeh78} were introduced independently in the  late 1970's as non-probabilistic frameworks for reasoning with uncertainty \cite{denoeux20a,denoeux20b}. The former approach is based on the idea of representing elementary pieces of evidence as completely monotone capacities, or \emph{belief functions}, and combining them using an operator known as the product-intersection rule or Dempster's rule. As probability measures are special belief functions, and Dempster's rule extends Bayesian conditioning, DS can be seen as an extension of Bayesian probability theory, particularly suitable to reasoning with severe uncertainty. There is also a strong relation between DS theory and the theory of random sets \cite{molchanov05}: specifically, any random set induces a belief function and, conversely, any belief function can be seen as being induced by some random set \cite{nguyen78}. In  DS theory,  a random set underlying a belief function does not represent a random mechanism for generating sets of outcomes, but  the imprecise meanings of a piece of evidence under different interpretations  with known probabilities \cite{shafer81}. To avoid confusion, we  use the term \emph{epistemic random set} for random sets representing evidence in DS theory.

In contrast, possibility theory originates from the theory of fuzzy sets \cite{zadeh65}. In this approach, a fuzzy statement about the variable of interest, seen as a flexible constraint on its precise but unknown value in some domain $\Theta$, induces a possibility measure and a dual necessity measure on $\Theta$. Interestingly, a necessity measure is a belief function, and the dual possibility measure is the corresponding plausibility function, but the converse is not true (a belief function is not, in general, a necessity measure). For this reason, possibility theory has sometimes been presented as  ``a special branch of evidence theory'' (another name for DS theory) \cite[page 187]{klir95}. However, combining two necessity measures by Dempster's rule yields a belief function that is no longer a necessity measure: this combination rule is, thus, not compatible with possibilistic reasoning. In contrast, possibility theory has its own conjunctive combination operators based on triangular norms (or t-norms) \cite{dubois99}. Possibility and DS theory are, thus, two distinct models of uncertain reasoning based on related knowledge representation languages but different information processing mechanisms. 

In a companion paper \cite{denoeux21a}, we have revisited Zadeh's notion of ``evidence of the second kind'', defined as a pair $(X,\Pi_{(Y\mid X)})$ in which $X$ is a discrete random variable on a set $\Omega$ and  $\Pi_{(Y\mid X)}$ a collection of conditional possibility distributions of a variable $Y$ given $X=x$, for all $x\in \Omega$.  If random variable $X$ is constant, we get a unique possibility distribution for variable $Y$; if the conditional possibility distributions $\Pi_{(Y\mid X)}$ take values in $\{0,1\}$, then the pair $(X,\Pi_{(Y\mid X)})$ defines a random set equivalent to a DS mass function. The mappings associating, to each event, its expected necessity and its expected possibility are, respectively, belief and plausibility functions. In this framework, a possibility distribution thus represents certain but fuzzy evidence, while a DS mass function is a model of uncertain and crisp evidence.  In general, a pair $(X,\Pi_{(Y\mid X)})$ defines an \emph{epistemic random fuzzy set}, allowing us to describe evidence that is both uncertain and fuzzy. (The term ``epistemic'' emphasizes the distinction between this interpretation and that of random fuzzy sets as mechanisms for generating fuzzy data  considered, for instance in \cite{puri86,gil06}). 
In  \cite{denoeux21a}, we have proposed a family of combination rules for epistemic random fuzzy sets in the finite setting, generalizing both Dempster's rule and the conjunctive combination rules of possibility theory. One of these rules, based on the product t-norm, is associative and  arguably well suited for combining independent evidence. Equipped with this combination rule (called here the \emph{generalized product-intersection rule}), the theory of epistemic random fuzzy sets can be seen as an extension of both DS theory and possibility theory, making it possible to combine evidence of various types, including expert assessments (possibly expressed in natural language), sensor information, and statistical evidence about a model parameter.

In this paper, drawing from mathematical results presented by Couso and S\'anchez in \cite{couso11}, we give a more general exposition of  the theory of epistemic fuzzy sets, considering arbitrary probability and measurable spaces. We define  combination, marginalization and vacuous extension operations of random fuzzy sets in this general setting, laying the foundations of a wide-ranging theory of uncertainty encompassing DS and possibility theories as special cases. Finally, for the important case where the frame of discernment is $\reels^p$, we propose  Gaussian random fuzzy numbers and vectors as a practical model, generalizing both Gaussian random variables and vectors on the one hand, and Gaussian possibility distributions on the other hand.

The rest of this paper is organized as follows. Classical models (including random sets, fuzzy sets and possibility theory) are first recalled in Section \ref{sec:classical}. Epistemic random fuzzy sets are then introduced in a general setting in Section \ref{sec:random_fuzzy}. Finally, Gaussian random fuzzy numbers and vectors are studied, respectively, in Sections \ref{sec:GRFS} and \ref{sec:GRFV}, and Section \ref{sec:concl} concludes the paper.

\section{Classical models}
\label{sec:classical}

In this section, we recall the main definitions and results pertaining to the  two  models of uncertainty generalized in this paper:  random sets and belief functions on the one hand (Section \ref{subsec:random_sets}), fuzzy sets and possibility theory on the other hand (Section \ref{subsec:fuzzy}).

\subsection{Random sets and belief functions}
\label{subsec:random_sets}

Whereas belief functions in the finite setting can be introduced without any reference to random sets \cite{shafer76}, the mathematical framework of random sets is  useful to analyze belief functions in more general spaces, and to define the practical models needed, e.g., in  statistical applications. Important references about the link between random sets and belief functions include \cite{nguyen78} and \cite{couso11}.


Let $(\Omega,\sigma_\Omega,P)$ be a probability space,  $(\Theta,\sigma_\Theta)$ a measurable space, and $\bbX$ a  mapping from $\Omega$ to $2^\Theta$. The \emph{upper} \emph{and lower inverses} of $\bbX$ are defined, respectively, as follows:
\begin{subequations}
\begin{align}
\bbX^*(B)&=B^*=\{\omega \in \Omega : \bbX(\omega)\cap B\neq\emptyset\}\\
\bbX_*(B)&=B_*=\{\omega \in \Omega : \emptyset\neq\bbX(\omega)\subseteq B\}
\end{align}
\end{subequations}
for all $B\subseteq \Theta$. It is easy to check that 
\[
B^*\cap (B^c)_*=\emptyset
\]
and
\[
B^*\cup (B^c)_*=\{\omega\in \Omega: \bbX(\omega)\neq \emptyset\}=\Theta^*,
\]
where $B^c$ denotes the complement of $B$ in $\Theta$.

The mapping $\bbX$ is said to be $\sigma_\Omega-\sigma_\Theta$ \emph{strongly measurable} \cite{nguyen78} if, for all $B\in \sigma_\Theta$,  $B^*\in \sigma_\Omega$ (or, equivalently, if for all $B\in \sigma_\Theta$,  $B_*\in \sigma_\Omega$). The tuple $(\Omega,\sigma_\Omega,P,\Theta,\sigma_\Theta,\bbX)$ is called a \emph{random set}. When there is no confusion about the domain and co-domain, we will call the $\sigma_\Omega-\sigma_\Theta$ strongly measurable mapping $\bbX$ itself a \emph{random set}. 

In the special case where  $|\bbX(\omega)|=1$ for all $\omega\in\Omega$, we can define the  mapping $X:\Omega\rightarrow \Theta$ such that $\bbX(\omega)=\{X(\omega)\}$ for all $\omega\in\Omega$. We then have $B^*=B_*=X^{-1}(B)$ for all $B\subseteq\Theta$, and $X$ is $\sigma_\Omega-\sigma_\Theta$ measurable. The notion of random set thus extends that of random variable.


\paragraph{Belief and plausibility functions} From now on, we will assume, for simplicity, that $P(\Theta^*)=1$. (If not verified, this property can be enforced by conditioning $P$ on $\Theta^*$). Let $P^*$ and $P_*$ be the lower and upper probability measures associated with random set $\bbX$, defined as the mappings from $\sigma_\Theta$ to $[0,1]$ such that
\begin{equation}
P_*(B)=P(B_*) 
\end{equation}
and
\begin{equation}
P^*(B)=P(B^*)=1-P_*(B^c),
\end{equation}
for all $B\in \sigma_\Theta$. Mapping $P_*$ is a completely monotone capacity, i.e., a \emph{belief function}, and $P^*$  is the dual \emph{plausibility function} \cite[Proposition 1]{nguyen78}. In the following, they will be denoted, respectively, as $Bel_\bbX$ and $Pl_\bbX$. The corresponding \emph{contour function} is defined as the mapping $pl_\bbX$ from $\Theta$ to $[0,1]$ such that
\[
pl_\bbX(\theta)=Pl_\bbX(\{\theta\})
\]
for all $\theta\in\Theta$. The subsets $\bbX(\omega)\subseteq \Theta$, for all $\omega\in \Omega$, are called the \emph{focal sets} of $Bel_\bbX$.

\paragraph{Interpretation} In DS theory, $\Omega$ represents a set of interpretations of a piece of evidence about a variable $\btheta$ taking values in set $\Theta$ (called the \emph{frame of discernment}). If interpretation $\omega\in\Omega$ holds, we know that $\btheta\in \bbX(\omega)$, and nothing more.  For any $A\in \sigma_\Omega$, $P(A)$ is the (subjective) probability that the true interpretation lies in $A$. For any $B\in\sigma_\Theta$, the degree of belief $Bel_\bbX(B)$ is then a measure of support of the proposition ``$\btheta \in B$'' given the evidence, while the degree of plausibility $Pl_\bbX(B)$ is a measure of lack of support for the proposition ``$\btheta \not\in B$''. Under this interpretation, the random set $\bbX$ represents a state of knowledge: it can be said to be \emph{epistemic}. 

\paragraph{Vacuous random set} A constant random set $(\Omega,\sigma_\Omega,P,\Theta, \sigma_\Theta,\bbX)$ such that $\bbX(\omega)=\Theta$ for all $\omega\in\Omega$ is said to be \emph{vacuous}. For such a random set, we have $Bel_\bbX(A)=0$ for all $A\in \sigma_\Theta\setminus \{\Theta\}$ and  $Pl_\bbX(A)=1$ for all $A\in \sigma_\Theta\setminus \{\emptyset\}$. A vacuous random set represents complete ignorance about $\btheta$.

\paragraph{Finite case} 
Assume that $\Theta$ is finite, and $\sigma_\Theta=2^\Theta$. The \emph{M\"obius inverse} of $Bel_\bbX$ is the mapping $m_\bbX$ from $2^\Theta$ to [0,1] such that 
\[
m_\bbX(B)=\sum_{C \subseteq B} (-1)^{|B|-|C|} Bel_\bbX(C), 
\]
for all $B\subseteq \Theta$. It verifies $m(B)\ge 0$ for all $B\subseteq \Omega$,  $\sum_{B\subseteq\Omega} m(B)=1$ and $m(\emptyset)=0$. The belief and plausibility can be computed from $m_\bbX$, respectively, as
\[
Bel_\bbX(B)=\sum_{C\subseteq B} m_\bbX(C) \quad  \text{and} \quad 
Pl_\bbX(B)=\sum_{C\cap B\neq\emptyset} m_\bbX(C), 
\]
for all $B\subseteq\Theta$.

\paragraph{Random closed intervals} 
Random closed intervals are particularly simple models allowing us to define belief functions on the real line \cite{dempster68b,smets05a,denoeux09b}.  Let $(\Omega,\sigma_\Omega,P)$ be a probability space and $X,Y$ two random variables $\Omega\rightarrow \reels$ such that $P(\{\omega\in \Omega: X(\omega)\le Y(\omega)\})=1$. Then, the mapping $\bbX:\Omega\rightarrow 2^\reels$ defined by $\bbX(\omega)=[X(\omega),Y(\omega)]$ is $\sigma_\Omega-\beta_\reels$ strongly measurable, where  $\beta_\reels$ is the Borel $\sigma$-algebra on $\reels$ (see a formal proof in \cite{miranda05}). 
This mapping defines a \emph{random closed interval}. For a random closed interval $\bbX=[X,Y]$, we have \cite{dempster68b}
\begin{subequations}
\label{eq:BelPlxy}
\begin{equation}
Bel_\bbX([x,y])=P([X,Y]\subseteq [x,y])= P(X\ge x; Y\le y) 
\end{equation}
and
\begin{equation}
Pl_\bbX([x,y])=P([X,Y]\cap [x,y]\neq\emptyset)= 1-P(X> y)-P(Y< x),
\end{equation}
\end{subequations}
for all $(x,y)\in\reels^2$ such that $x\le y$. \new{In particular, by letting $x$ tend to $-\infty$ in \eqref{eq:BelPlxy}, we obtain the \emph{lower and upper cumulative distribution functions (cdf's)} of $\bbX$ as
\begin{subequations}
\label{eq:exprandomint}
\begin{equation}
F_*(y)=Bel_\bbX((-\infty,y])=P(Y\le y)=F_Y(y)
\end{equation}
and
\begin{equation}
F^*(y)=Pl_\bbX((-\infty,y])=P(X\le y)=F_X(y).
\end{equation}
\end{subequations}
}
\new{
\paragraph{Lower and upper expectation} Let $\bbX$ be a random set from $(\Omega,\sigma_\Omega,P)$ to $(\reels,\beta_\reels)$. Following Dempster \cite{dempster67a}, we can define its  \emph{lower and upper expectations}, respectively, as
\[
\esp_*(\bbX)=\int_{-\infty}^{+\infty} x \, dF^*(x) 
\]
and
\[
\esp^*(\bbX)=\int_{-\infty}^{+\infty} x \, dF_*(x),  
\]
where $F_*(x)=Bel_\bbX((-\infty,x])$ and $F^*(x)=Pl_\bbX((-\infty,x])$ are the lower and upper cdf's of $\bbX$. When $\bbX$ is a random closed interval $[X,Y]$, it follows from \eqref{eq:exprandomint} that $\esp_*(\bbX)=\esp(X)$ and  $\esp^*(\bbX)=\esp(Y)$.
}

\paragraph{Dempster's rule} 

Consider two pieces of evidence represented by  random sets 
\[
(\Omega_1, \sigma_1,P_1, \Theta, \sigma_\Theta, \bbX_1) \quad \text{and} \quad (\Omega_2,\sigma_2,P_2,\Theta, \sigma_\Theta,\bbX_2),
\] 
and the mapping $\bbX_\cap$ from $\Omega_1\times\Omega_2$ to $2^\Theta$ defined by $\bbX_\cap(\omega_1,\omega_2)=\bbX_1(\omega_1)\cap \bbX_2(\omega_2)$.
If interpretations $\omega_1\in \Omega_1$ and $\omega_2\in \Omega_2$ both hold, we know that $\btheta\in \bbX_\cap(\omega_1,\omega_2)$, provided that $\bbX_1(\omega_1)\cap \bbX_2(\omega_2)\neq\emptyset$.  Assume that $\bbX_\cap$ is  $(\sigma_1\otimes\sigma_2)-\sigma_\Theta$ strongly measurable, where $\sigma_1\otimes\sigma_2$ is the tensor product $\sigma$-algebra over the Cartesian product $\Omega_1\times\Omega_2$. The two pieces of evidence are said to be \emph{independent} if, for any $A\in \sigma_1\otimes \sigma_2$, the probability that  $A$ contains the  true interpretations of the two pieces of evidence is the conditional probability
\begin{equation}
\label{eq:P12}
P_{12}(A)=(P_1\times P_2)(A \mid \Theta^*)=\frac{(P_1\times P_2)(A\cap\Theta^*)}{(P_1\times P_2)(\Theta^*)},
\end{equation}
where $P_1\times P_2$ is the product measure satisfying  $(P_1\times P_2)(A_1\times A_2)=P_1(A_1)P_2(A_2)$ for all $A_1\in \sigma_1$, $A_2\in \sigma_2$, and
\[
\Theta^*=\{(\omega_1,\omega_2)\in \Omega_1\times \Omega_2 : \bbX_\cap(\omega_1,\omega_2)\neq \emptyset\}
\]
is the set of noncontradictory pairs of interpretations. The quantity
\[
\kappa=1-(P_1\times P_2)(\Theta^*)=(P_1\times P_2)(\{(\omega_1,\omega_2)\in \Omega_1\times \Omega_2 : \bbX_\cap(\omega_1,\omega_2)= \emptyset\})
\]
is called the \emph{degree of conflict} between the two pieces of evidence. The combined  random set 
\[
(\Omega_1\times\Omega_2,\sigma_1\otimes\sigma_2,P_{12},\Theta, \sigma_\Theta,\bbX_\cap)
\]
is called the \emph{orthogonal sum} of the two pieces of evidence, and is denoted by $\bbX_1\oplus\bbX_2$. This combination rule, first introduced by Dempster in \cite{dempster67a}, is called the \emph{product-intersection rule}, or \emph{Dempster's rule of combination}. 

We can remark that Dempster's rule is usually viewed as an operation to combine belief functions, whereas it is defined here as an operation to combine \emph{random sets}. This distinction is immaterial in the standard setting, as the orthogonal sum of two belief functions does not depend on their particular random set representations and can be defined without reference to the random set framework \cite{shafer16b}. However, it becomes crucial when considering  random fuzzy sets as a model for generating belief functions, as done in this paper. We will come back to this important point in Section \ref{subsec:genDS}.

Any vacuous random set is obviously a neutral element for Dempster's rule. The following important proposition states that pieces of evidence can be combined by Dempster's rule in any order.

\begin{Prop}
\label{prop:dempster_assoc}
Dempster's rule is commutative and associative.
\end{Prop}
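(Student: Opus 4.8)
The plan is to treat commutativity as the easy part and to concentrate the real work on associativity. For commutativity I would exhibit the canonical bijection $\tau:\Omega_1\times\Omega_2\to\Omega_2\times\Omega_1$, $\tau(\omega_1,\omega_2)=(\omega_2,\omega_1)$, note that it carries $\sigma_1\otimes\sigma_2$ onto $\sigma_2\otimes\sigma_1$ and the product measure $P_1\times P_2$ onto $P_2\times P_1$, and observe that $\bbX_\cap(\omega_1,\omega_2)=\bbX_1(\omega_1)\cap\bbX_2(\omega_2)=\bbX_2(\omega_2)\cap\bbX_1(\omega_1)$ because set intersection is commutative. Since $\tau$ maps the noncontradiction set $\Theta^*$ of the pair $(\bbX_1,\bbX_2)$ exactly onto that of $(\bbX_2,\bbX_1)$, the conditioning in \eqref{eq:P12} is preserved, so $\bbX_1\oplus\bbX_2$ and $\bbX_2\oplus\bbX_1$ are isomorphic as random sets and induce the same belief function.

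For associativity I would first record the purely set-theoretic fact that intersection is associative, so that both $(\bbX_1\oplus\bbX_2)\oplus\bbX_3$ and $\bbX_1\oplus(\bbX_2\oplus\bbX_3)$, after the canonical identification of the iterated Cartesian products with $\Omega_1\times\Omega_2\times\Omega_3$, have the same underlying mapping $\bbX_{123}(\omega_1,\omega_2,\omega_3)=\bbX_1(\omega_1)\cap\bbX_2(\omega_2)\cap\bbX_3(\omega_3)$, and hence the same set of noncontradictory triples $\Theta^*_{123}=\{(\omega_1,\omega_2,\omega_3):\bbX_{123}(\omega_1,\omega_2,\omega_3)\neq\emptyset\}$. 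The only thing left to check is that the two iterated conditionings produce the same probability measure on $\Omega_1\times\Omega_2\times\Omega_3$.

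The crux is the following measure identity. Writing $\Theta^*_{12}$ for the noncontradictory pairs of $(\bbX_1,\bbX_2)$ and $P_{12}=(P_1\times P_2)(\cdot\mid\Theta^*_{12})$, I would show that on the product $\sigma$-algebra $P_{12}\times P_3$ is the normalized restriction of $P_1\times P_2\times P_3$ to $\Theta^*_{12}\times\Omega_3$, i.e.
\[
(P_{12}\times P_3)(A)=\frac{(P_1\times P_2\times P_3)\bigl(A\cap(\Theta^*_{12}\times\Omega_3)\bigr)}{(P_1\times P_2)(\Theta^*_{12})};
\]
this follows by verifying the identity on rectangles $A=A_{12}\times A_3$ and invoking uniqueness of the extension to $\sigma_1\otimes\sigma_2\otimes\sigma_3$. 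Conditioning this in turn on $\Theta^*_{123}$ and using the key inclusion $\Theta^*_{123}\subseteq\Theta^*_{12}\times\Omega_3$ (a nonempty triple intersection forces a nonempty pairwise intersection), the normalizing factor $(P_1\times P_2)(\Theta^*_{12})$ cancels, leaving
\[
P_{123}(A)=\frac{(P_1\times P_2\times P_3)(A\cap\Theta^*_{123})}{(P_1\times P_2\times P_3)(\Theta^*_{123})}.
\]
The right-hand side is manifestly symmetric under permutation of the three factors, so combining in the order $\bbX_1\oplus(\bbX_2\oplus\bbX_3)$ yields the same $P_{123}$, which together with the coincidence of $\bbX_{123}$ proves associativity.

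The main obstacle I anticipate is not the final cancellation, which is mechanical, but the bookkeeping needed to make the iterated construction legitimate: I must verify that the intermediate mapping $\bbX_{12}$ is $(\sigma_1\otimes\sigma_2)-\sigma_\Theta$ strongly measurable so that $\bbX_{12}\oplus\bbX_3$ is itself a random set (this is precisely the standing strong-measurability assumption imposed on $\bbX_\cap$), and that the normalizing probabilities $(P_1\times P_2)(\Theta^*_{12})$ and $(P_1\times P_2\times P_3)(\Theta^*_{123})$ are strictly positive, i.e.\ the evidence is not totally conflicting, so that $P_{12}$ and $P_{123}$ are well defined. A secondary subtlety is to confirm that the noncontradiction set of $\bbX_{12}\oplus\bbX_3$ is \emph{literally} $\Theta^*_{123}$; this is immediate once one substitutes $\bbX_{12}(\omega_1,\omega_2)=\bbX_1(\omega_1)\cap\bbX_2(\omega_2)$ into its definition.
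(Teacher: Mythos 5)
Your proof is correct, and its overall strategy coincides with the paper's: both reduce associativity to showing that the iterated conditioning $P_{(12)3}=(P_{12}\times P_3)(\cdot\mid\Theta^*_{123})$ equals the one-shot conditioning $P_{123}=(P_1\times P_2\times P_3)(\cdot\mid\Theta^*_{123})$, and both hinge on the same key inclusion $\Theta^*_{123}\subseteq\Theta^*_{12}\times\Omega_3$. Where you differ is in the technical implementation of that step. The paper works section-wise: for $C\subseteq\Theta^*_{123}$ it writes $(P_{12}\times P_3)(C)=\int P_{12}(C_{\omega_3})\,dP_3(\omega_3)$ using the Fubini-type definition of the product measure, substitutes $P_{12}(C_{\omega_3})=(P_1\times P_2)(C_{\omega_3})/(P_1\times P_2)(\Theta^*_{12})$ (legitimate because every section $C_{\omega_3}$ lies in $\Theta^*_{12}$), and then concludes that $P_{(12)3}$ and $P_{123}$, being proportional probability measures carried by $\Theta^*_{123}$, must be equal --- without ever computing the normalizing constant. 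You instead establish one global identity, namely that $P_{12}\times P_3$ is the normalized restriction of $P_1\times P_2\times P_3$ to $\Theta^*_{12}\times\Omega_3$, by checking it on rectangles $A_{12}\times A_3$ and invoking uniqueness of measures agreeing on a generating $\pi$-system; the cancellation then delivers the manifestly permutation-symmetric formula for $P_{(12)3}$ directly. Your route costs an appeal to the $\pi$-$\lambda$ (uniqueness) theorem but is more modular, makes the symmetry of the result visible rather than inferred, and has the merit of flagging explicitly the bookkeeping the paper leaves implicit under its standing assumptions: strong measurability of the intermediate mapping $\bbX_{1\cap 2}$ and strict positivity of the normalizers $(P_1\times P_2)(\Theta^*_{12})$ and $(P_1\times P_2\times P_3)(\Theta^*_{123})$. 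The paper's route needs only the defining property of product measures and slightly less machinery. Both are sound.
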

\begin{proof}
See \ref{app:dempster_assoc}.
\end{proof}

\begin{Ex}
\label{ex:dempster}
Let $X_1 \sim N(\mu_1,\sigma_1^2)$ and $X_2 \sim N(\mu_2,\sigma_2^2)$ be two independent normal random variables and consider the random intervals $\bbX_1=[X_1,+\infty)$ and $\bbX_2=(-\infty,X_2]$. The degree of conflict between $\bbX_1$ and $\bbX_2$ is
\[
\kappa=P(X_1>X_2)= P(X_2-X_1<0)=\Phi\left(\frac{\mu_1-\mu_2}{\sqrt{\sigma_1^2+\sigma_2^2}}\right),
\]
where $\Phi$ is the standard normal cdf. The orthogonal sum of $\bbX_1$ and $\bbX_2$ is the random closed interval $[X'_1,X'_2]$, where $(X'_1,X'_2)$ is the two-dimensional random vector  with distribution equal the conditional distribution of $(X_1,X_2)$ given $X_1\le X_2$. Its density is
\[
f_{X'_1,X'_2}(x_1,x_2)=\frac{\sigma_1^{-1}\sigma_2^{-1}\phi\left(\frac{x_1-\mu_1}{\sigma_1}\right)\phi\left(\frac{x_2-\mu_2}{\sigma_2}\right) I(x_1\le x_2)}{\Phi\left(\frac{\mu_2-\mu_1}{\sqrt{\sigma_1^2+\sigma_2^2}}\right)},
\]
where $\phi$ is the standard normal probability density function (pdf) and $I(\cdot)$ is the indicator function.
\end{Ex}

The following proposition states that the contour function of the orthogonal sum of two independent random sets $\bbX_1$ and $\bbX_2$ is proportional to the product of the contour functions of $\bbX_1$ and $\bbX_2$.
%
%
\begin{Prop}
\label{prop:prodQ}
Let $\bbX_1$ and $\bbX_2$ be two independent random sets on the same domain $\Theta$, with contour functions $pl_{\bbX_1}$ and  $pl_{\bbX_2}$. 
For any $\theta\in\Theta$, 
\begin{equation}
\label{eq:prodpl}
pl_{\bbX_1\oplus \bbX_2}(\theta) =\frac{pl_{\bbX_1}(\theta) pl_{\bbX_2}(\theta)}{1-\kappa},
\end{equation}
where $\kappa$ is the degree of conflict between $\bbX_1$ and $\bbX_2$.
\end{Prop}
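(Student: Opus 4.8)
The plan is to work directly from the definition of the contour function as the upper probability of a singleton, and to exploit the fact that the event defining the contour factorizes as a Cartesian rectangle on $\Omega_1\times\Omega_2$. The whole argument is essentially set-theoretic, with a single application of the product-measure factorization.

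First I would unfold the contour function of the orthogonal sum. By the definitions of contour and plausibility, $pl_{\bbX_1\oplus\bbX_2}(\theta)=Pl_{\bbX_1\oplus\bbX_2}(\{\theta\})=P_{12}(\{\theta\}^*)$, where $\{\theta\}^*=\{(\omega_1,\omega_2):\theta\in\bbX_\cap(\omega_1,\omega_2)\}$. Since $\bbX_\cap(\omega_1,\omega_2)=\bbX_1(\omega_1)\cap\bbX_2(\omega_2)$, the condition $\theta\in\bbX_\cap(\omega_1,\omega_2)$ is equivalent to the conjunction $\theta\in\bbX_1(\omega_1)$ and $\theta\in\bbX_2(\omega_2)$. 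Writing $A_i=\{\omega_i:\theta\in\bbX_i(\omega_i)\}$ for $i=1,2$, this shows the event is exactly the rectangle $A_1\times A_2$, and that $P_i(A_i)=pl_{\bbX_i}(\theta)$ by the definition of the individual contour functions.

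Next I would evaluate $P_{12}(A_1\times A_2)$ using the conditional product measure in \eqref{eq:P12}. The key observation --- and the only point genuinely requiring care --- is that $A_1\times A_2\subseteq\Theta^*$: whenever $(\omega_1,\omega_2)\in A_1\times A_2$, the point $\theta$ lies in $\bbX_\cap(\omega_1,\omega_2)$, so this intersection is nonempty and the pair is noncontradictory. Consequently the intersection with $\Theta^*$ in the numerator of \eqref{eq:P12} leaves $A_1\times A_2$ unchanged, and the product-measure property gives $(P_1\times P_2)(A_1\times A_2)=P_1(A_1)P_2(A_2)=pl_{\bbX_1}(\theta)\,pl_{\bbX_2}(\theta)$.

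Finally I would substitute the denominator. Since $(P_1\times P_2)(\Theta^*)=1-\kappa$ by definition of the degree of conflict, combining the two previous steps yields $pl_{\bbX_1\oplus\bbX_2}(\theta)=pl_{\bbX_1}(\theta)\,pl_{\bbX_2}(\theta)/(1-\kappa)$, as claimed. I do not anticipate any serious obstacle here; the subtlety worth stating explicitly is precisely the containment $A_1\times A_2\subseteq\Theta^*$, which is what makes the conditioning disappear from the numerator. Without it one would have to intersect with $\Theta^*$ and would only obtain an inequality rather than the clean proportionality.
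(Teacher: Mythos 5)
Your proof is correct and follows essentially the same route as the paper's: factor the event $\{(\omega_1,\omega_2):\theta\in\bbX_\cap(\omega_1,\omega_2)\}$ as the rectangle $A_1\times A_2$, apply the product-measure factorization, and divide by $1-\kappa$. In fact you are slightly more explicit than the paper, which silently uses the containment $A_1\times A_2\subseteq\Theta^*$ when it drops the intersection with $\Theta^*$ from the numerator of \eqref{eq:P12}; your spelling out of that containment is exactly the justification for the paper's first line.
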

\begin{proof} We have
\begin{align*}
pl_{\bbX_1\oplus \bbX_2}(\theta)&=\frac{(P_1\times P_2)(\{(\omega_1,\omega_2)\in \Omega_1\times\Omega_2: \theta\in \bbX_\cap(\omega_1,\omega_2)\})}{1-\kappa}\\
&= \frac{(P_1\times P_2)(\{\omega_1\in \Omega_1: \theta\in \bbX_1(\omega_1)\} \times \{\omega_2\in \Omega_2: \theta\in \bbX_2(\omega_2)\} )}{1-\kappa}\\
&=\frac{P_1(\{\omega_1\in \Omega_1: \theta\in \bbX_1(\omega_1)\}) \cdot P_2(\{\omega_2\in \Omega_2: \theta\in \bbX_2(\omega_2)\})}{1-\kappa}\\
& = \frac{pl_{\bbX_1}(\theta) pl_{\bbX_2}(\theta)}{1-\kappa}.
\end{align*}
\end{proof}

\begin{Ex}
Let us consider again the two random intervals of Example \ref{ex:dempster}. The contour functions of $\bbX_1$ and $\bbX_2$ are, respectively,
\[
pl_{\bbX_1}(x)=P(X_1\le x)=\Phi\left(\frac{x-\mu_1}{\sigma_1}\right)
\]
and
\[
pl_{\bbX_2}(x)=P(X_2\ge x)=1-\Phi\left(\frac{x-\mu_2}{\sigma_2}\right).
\]
Now, the contour function of $\bbX_1\oplus\bbX_2$ is
\begin{align*}
pl_{\bbX_1\oplus \bbX_2}(x)&=P(X'_1\le x \le X'_2)\\
&=\int_{-\infty}^x\int_x^{+\infty} f_{X'_1,X'_2}(x_1,x_2) dx_2dx_1\\
&=\left[\Phi\left(\frac{\mu_2-\mu_1}{\sqrt{\sigma_1^2+\sigma_2^2}}\right)\right]^{-1} \int_{-\infty}^x\int_x^{+\infty}\sigma_1^{-1}\sigma_2^{-1}\phi\left(\frac{x_1-\mu_1}{\sigma_1}\right)\phi\left(\frac{x_2-\mu_2}{\sigma_2}\right)dx_2dx_1\\
&=\frac{\Phi\left(\frac{x-\mu_1}{\sigma_1}\right)\left[1-\Phi\left(\frac{x-\mu_2}{\sigma_2}\right)\right]}{\Phi\left(\frac{\mu_2-\mu_1}{\sqrt{\sigma_1^2+\sigma_2^2}}\right)}=\frac{pl_{\bbX_1}(x) pl_{\bbX_2}(x)}{1-\kappa}.
\end{align*}
\end{Ex}

\paragraph{Marginalization and vacuous extension} Let us now consider the case where we have two variables $\btheta_1$ and $\btheta_2$ with domains  $\Theta_1$ and $\Theta_2$. (The case of $n$ variables is not more difficult conceptually but it requires heavier notations). Let $\sigma_{\Theta_1}$ and $\sigma_{\Theta_2}$ be $\sigma$-algebras defined, respectively,  on $\Theta_1$ and $\Theta_2$. Let $\Theta_{12}=\Theta_1\times\Theta_2$ and $\sigma_{\Theta_{12}}=\sigma_{\Theta_1}\otimes\sigma_{\Theta_2}$.  Let $\bbX_{12}$ be a random set from $(\Omega,\sigma_\Omega,P)$ to $(\Theta_{12},\sigma_{\Theta_{12}})$, and $\bbX_1$ the mapping from $\Omega$ to $2^{\Theta_1}$ that maps each $\omega\in\Omega$ to the \emph{projection} of $\bbX_{12}(\omega)$ onto $\Theta_1$:
\[
\bbX_1(\omega)=\bbX_{12}(\omega)\downarrow \Theta_1=\{\theta_1\in \Theta_{1}: \exists \theta_2\in\Theta_2, (\theta_1,\theta_2)\in \bbX_{12}(\omega)\}.
\]
It is easy to see that $\bbX_1$ is $\sigma_\Omega-\sigma_{\Theta_1}$ measurable: for any $B\in \sigma_{\Theta_1}$,
\begin{align*}
\bbX_1^*(B)&=\{\omega\in\Omega: \bbX_1(\omega)\cap B\neq\emptyset\}\\
&=\{\omega\in\Omega: \bbX_{12}(\omega)\cap (B\times\Theta_2)\neq\emptyset\}\\
&=\bbX_{12}^*(B\times\Theta_2).
\end{align*}
As $B\times\Theta_2\in \sigma_{\Theta_{12}}$ and $\bbX_{12}$ is $\sigma_\Omega-\sigma_{\Theta_{12}}$ strongly measurable, it thus follows that $\bbX_1^*(B)\in\sigma_\Omega$. The random set $\bbX_1$ will be called the \emph{marginal} of $\bbX_{12}$ on $\Theta_1$. 

Conversely, let $\bbX_1$ be a random set from $(\Omega,\sigma_\Omega)$ to $(\Theta_1,\sigma_{\Theta_1})$ and let $\bbX_{1\uparrow 2}$ be the mapping from $\Omega$ to $\Theta_{12}$ defined by
\[
\bbX_{1\uparrow (1,2)}(\omega)=\bbX_1(\omega)\times\Theta_2.
\]
For any $B\in\sigma_{\Theta_{12}}$,
\begin{align*}
\bbX_{1\uparrow (1,2)}^*(B)&=\{\omega\in\Omega: \bbX_{1\uparrow 2}(\omega)\cap B\neq\emptyset\}\\
&=\{\omega\in\Omega: \bbX_{1}(\omega)\cap (B\downarrow \Theta_1)\neq\emptyset\}\\
&=\bbX_1^*(B\downarrow \Theta_1).
\end{align*}
If for all $B\in\sigma_{\Theta_{12}}$, $\bbX_1^*(B\downarrow \Theta_1)\in\sigma_\Omega$, then $\bbX_{1\uparrow (1,2)}$ is  $\sigma_\Omega-\sigma_{\Theta_{12}}$ strongly measurable. It is said to be the \emph{vacuous extension} of $\bbX_1$ in $\Theta_1\times\Theta_2$.

We say that a random set $\bbX_{12}$ from $(\Omega,\sigma_\Omega,P)$ to $(\Theta_{12},\sigma_{\Theta_{12}})$ with marginals $\bbX_1$ and $\bbX_2$ is \emph{noninteractive} if it is equal to the orthogonal sum of its marginals, i.e.,
\[
\bbX _{12}= \bbX_{1\uparrow (1,2)} \oplus \bbX_{2\uparrow (1,2)} \quad \text{denoted by} \quad \bbX_{1} \oplus \bbX_{2} .
\]

\begin{Ex} Let $(X_1,X_2)$ be a two dimensional random vector from $(\Omega,\sigma_\Omega,P)$ to $(\reels^2,\beta_{\reels^2})$ and consider the mapping $\bbX_{12}: \Omega \rightarrow 2^{\reels^2}$ defined as
\[
\bbX_{12}(\omega)=(-\infty,X_1(\omega)] \times (-\infty,X_2(\omega)]. 
\]
This mapping defines a random set \cite[page 3]{molchanov05}. Its marginals are the random closed intervals $(-\infty,X_1]$ and $(-\infty,X_2]$. If $X_1$ and $X_2$ are independent, then $\bbX_{12}=(-\infty,X_1]\oplus (-\infty,X_2]$ and $\bbX_{12}$ is noninteractive.
\end{Ex}

\subsection{Fuzzy sets and possibility theory}
\label{subsec:fuzzy}

A \emph{fuzzy subset} of a  set $\Theta$ is a pair $\tF=(\Theta,\mu_{\tF})$, where $\mu_{\tF}$ is a mapping from $\Theta$ to $[0,1]$, called the \emph{membership function} of $\tF$. Each number  $\mu_{\tF}(\theta)$ is interpreted as a degree of  membership  of element $\theta$ to the fuzzy set $\tF$. In the following, to simplify the notation, we will identify fuzzy sets to their membership functions and write $\tF(\theta)$ for $\mu_{\tF}(\theta)$. The height of fuzzy set $\tF$ is defined as
\[
\height(\tF)=\sup_{\theta\in\Theta} \tF(\theta).
\]
If $\height(\tF)=1$, $\tF$ is said to be \emph{normal}. For any $\alpha\in[0,1]$, the (weak) $\alpha$-cut of $\tF$ is the set
\[
\cut{\alpha}{\tF}=\{\theta \in \Theta: \tF(\theta)\ge \alpha\}.
\]

\paragraph{Possibility and necessity measures} Let $\btheta$ be a variable taking values in $\Theta$. Assume that we receive a piece of evidence telling us that ``$\btheta$ is $\tF$'', where $\tF$ is a normal fuzzy subset of $\Theta$. This evidence induces a \emph{possibility measure} $\Pi_\tF$ from $2^\Theta$ to $[0,1]$ defined by
\begin{equation}
\label{eq:Pi}
\Pi_\tF(B)=\sup_{\theta\in B} \tF(\theta),
\end{equation}
for all $B\subseteq \Theta$. The number $\Pi_\tF(B)$ is interpreted as the degree of possibility that $\btheta\in B$, given that $\btheta$ is $\tF$ \cite{zadeh78}. The corresponding \emph{possibility distribution} is the mapping from $\Theta$ to $[0,1]$ defined by
\[
\pi_\tF(\theta)=\Pi_\tF(\{\theta\})=\tF(\theta),
\] 
i.e., it is identical to the membership function $\tF$. The dual \emph{necessity measure} is defined as
\begin{equation}
\label{eq:N}
N_\tF(B)=1-\Pi_\tF(B^c)= \inf_{\theta\not\in B} \left[1-\tF(\theta)\right].
\end{equation}
It can easily be shown  that  mapping $N_\tF: 2^\Omega \rightarrow [0,1]$  is completely monotone,  i.e., it is a belief function, and $\Pi_\tF$ is the dual plausibility function \cite{dubois98b}. These belief and plausibility functions are formally induced by the random set $([0,1],\beta_{[0,1]},\lambda,\Theta,2^\Theta,\bbX)$, where $\beta_{[0,1]}$ is the Borel $\sigma$-algebra on $[0,1]$, $\lambda$ is the uniform probability measure, and $\bbX$ is the mapping $[0,1]\rightarrow 2^\Theta$ defined by $\bbX(\alpha)=\cut{\alpha}{\tF}$. However, as we will see in Section \ref{subsec:genDS}, it is important, when combining evidence, to distinguish between possibility distributions induced by fuzzy sets, and consonant belief functions induced by random sets.


\paragraph{Conjunctive combination of possibility distributions} Assume that we receive two independent pieces of information telling us that ``$\btheta$ is $\tF$'' and ``$\btheta$ is $\tG$'', where $\tF$ and $\tG$ are two fuzzy subsets of $\Theta$. The conjunctive combination of these two pieces of evidence requires some notion of intersection between fuzzy sets.  As reviewed in \cite{dubois00a}, the intersection operation can be extended to fuzzy sets using triangular norms (or t-norms for short). Given a t-norm $\top$, the $\top$-intersection of two fuzzy subsets $\tF$ and $\tG$ of the same domain $\Theta$ can be defined as
\[
(\tF \cap_\top \tG)(\theta)=\tF(\theta)\top \tG(\theta)
\]
for all $\theta\in\Theta$. The most common choices for $\top$ are the minimum and product t-norms, as originally proposed by Zadeh \cite{zadeh65}; the corresponding operations are called, respectively, the \emph{minimum} and \emph{product} intersections. However, the intersection of two normal fuzzy sets is generally not normal. To obtain a normal fuzzy set, as needed for the definitions of possibility and necessity measures in \eqref{eq:Pi}-\eqref{eq:N}, we define the normalized $\top$-intersection as
\[
(\tF \cap^*_\top \tG)(\theta)=\begin{cases}
\displaystyle \frac{\tF(\theta)\top \tG(\theta)}{\height(\tF \cap_\top \tG)} & \text{if } \height(\tF \cap_\top \tG)>0\\
0 & \text{otherwise.} 
\end{cases}
\]
The fuzzy set $\tF \cap^*_\top \tG$ is normal provided that $\height(\tF \cap_\top \tG)>0$. In general, the normalized intersection $\cap^*_\top$ associated with a t-norm $\top$ is not associative. A notable exception is the case where $\top$ is the product t-norm:  the normalized product intersection, denoted by $\varodot$, is associative (see \cite{dubois99}, and a simple proof in \cite{denoeux21a}). By  abuse of notation, we can use the same symbol to denote the conjunctive combination of possibility measures and the normalized product intersection of fuzzy sets, and  write
\[
\Pi_\tF \varodot \Pi_\tG=\Pi_{\tF\varodot\tG}.
\]
As noted by Dubois and Prade \cite[page 352]{dubois99}, product intersection has a reinforcement effect that is appropriate when the information sources are assumed to be independent. The choice of the normalized product intersection for combining possibility distributions makes possibility theory fit in the framework of valuation-based systems \cite{shenoy92} and allows for possibilistic reasoning with a large number of variables. The normalized product intersection operator also has an interesting property with respect to Gaussian fuzzy numbers, as recalled in the next paragraph.

\paragraph{Gaussian fuzzy numbers} A \emph{fuzzy number} (or fuzzy interval) can be defined as a normal and convex fuzzy subset of the real line. In particular, a \emph{Gaussian fuzzy number (GFN)}  is a normal fuzzy subset of $\reels$ with membership function
\[
\varphi(x;m,h)=\exp\left(-\frac{h}{2}(x-m)^2\right),
\]
where  $m\in\reels$ is the mode and  $h\in [0,+\infty]$ is the precision. Such a fuzzy number will be denoted by $\GFN(m,h)$. If $h=0$, $\varphi(x;m,h)=1$ for all $x\in \reels$: $\GFN(m,0)$ is then \new{maximally imprecise and} identical to the whole real line, whatever the value of $m$.  \new{If $h=+\infty$, $\varphi(x;m,h)=I(x=m)$, where $I(\cdot)$ is the indicator function; the fuzzy number $\GFN(m,+\infty)$  is then maximally precise and  equivalent to the real number $m$.}

It can easily be shown that the family of GFN's is closed under the normalized product intersection (see, e.g., \cite{bromiley14}). More precisely, we have the following proposition, proved in  \cite{bromiley14}.

\begin{Prop}
\label{prop:prodphi}
For any $x \in \reels$,
\[
\varphi(x;m_1,h_1) \cdot \varphi(x;m_2,h_2)=  \exp\left(-\frac{h_1h_2(m_1-m_2)^2}{2(h_1+h_2)}\right) \varphi(x;m_{12},h_{12}) ,
\]
with 
\[
m_{12}=\frac{h_1 m_1+h_2 m_2}{h_1+h_2} \quad \text{and} \quad h_{12}=h_1+h_2.
\]
Consequently, 
\[
\GFN(m_1,h_1)  \varodot \GFN(m_2,h_2)= \GFN(m_{12},h_{12}),
\]
and 
\begin{equation}
\label{eq:eta1D}
\height\left[\GFN(m_1,h_1) \cdot \GFN(m_2,h_2)\right]= \exp\left(-\frac{h_1h_2(m_1-m_2)^2}{2(h_1+h_2)}\right).
\end{equation}
\end{Prop}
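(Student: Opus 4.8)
The plan is to reduce the claim to a single algebraic identity in the exponents and then read off both the combined Gaussian and the normalizing constant. Writing out the product directly,
\[
\varphi(x;m_1,h_1)\,\varphi(x;m_2,h_2)=\exp\left(-\frac{1}{2}\left[h_1(x-m_1)^2+h_2(x-m_2)^2\right]\right),
\]
so everything hinges on rewriting the bracketed quadratic in $x$ as a perfect square plus an $x$-free remainder. Concretely, I would prove the identity
\[
h_1(x-m_1)^2+h_2(x-m_2)^2=(h_1+h_2)(x-m_{12})^2+\frac{h_1h_2}{h_1+h_2}(m_1-m_2)^2,
\]
with $m_{12}$ and $h_{12}=h_1+h_2$ as stated.

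The verification is a completing-the-square computation. First I would expand the left-hand side as $(h_1+h_2)x^2-2(h_1m_1+h_2m_2)x+(h_1m_1^2+h_2m_2^2)$, factor $h_1+h_2$ out of the first two terms, and recognize $m_{12}=(h_1m_1+h_2m_2)/(h_1+h_2)$ as the value completing the square. This leaves an $x$-free constant equal to $h_1m_1^2+h_2m_2^2-(h_1+h_2)m_{12}^2$; putting it over the common denominator $h_1+h_2$, the numerator simplifies to $(h_1+h_2)(h_1m_1^2+h_2m_2^2)-(h_1m_1+h_2m_2)^2=h_1h_2(m_1-m_2)^2$, which is exactly the remainder claimed. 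Multiplying the identity by $-1/2$ and exponentiating then yields the first displayed equation, since the constant remainder factors out of the exponential as $\exp\left(-h_1h_2(m_1-m_2)^2/(2(h_1+h_2))\right)$ and the remaining factor is precisely $\varphi(x;m_{12},h_{12})$.

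The two consequences follow with no further work. Since $\varphi(\cdot;m_{12},h_{12})$ attains its maximum value $1$ at $x=m_{12}$, and the prefactor $\exp\left(-h_1h_2(m_1-m_2)^2/(2(h_1+h_2))\right)$ does not depend on $x$, the height of the product $\GFN(m_1,h_1)\cdot\GFN(m_2,h_2)$ equals that prefactor, which is exactly \eqref{eq:eta1D}. Forming the normalized product intersection divides the product by its height, cancelling the prefactor and leaving $\varphi(x;m_{12},h_{12})$, i.e. $\GFN(m_{12},h_{12})$; this establishes $\GFN(m_1,h_1)\varodot\GFN(m_2,h_2)=\GFN(m_{12},h_{12})$.

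There is no deep obstacle here: the core is routine algebra, and the only points requiring care are the boundary values of the precision. When $h_1=h_2=0$ the formula for $m_{12}$ is a $0/0$ indeterminate, but both factors are then identically $1$ (the whole real line, $\GFN(\cdot,0)$), so the product is again vacuous and the statement holds trivially; and when one precision is $+\infty$ the corresponding factor degenerates to an indicator, consistent with $h_{12}=+\infty$ and $m_{12}$ equal to that mode. I would therefore state the identity for $h_1+h_2>0$ with finite precisions and note that the closed-form expressions extend to the degenerate cases by the obvious limits.
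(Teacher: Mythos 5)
Your proof is correct: the completing-the-square identity, the resulting factorization, and the deduction of the height and of the normalized product are all accurate, and your treatment of the degenerate precisions ($h_1+h_2=0$, $h_i=+\infty$) is a sensible addition. The paper itself gives no proof of this proposition, deferring to the cited technical report of Bromiley on products of Gaussian densities, and your argument is precisely the standard computation that reference carries out, so you have essentially reproduced the intended proof.
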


\paragraph{Marginalization and cylindrical extension} Let us now assume that we have two variables $\btheta_1$ and $\btheta_2$ jointly constrained by a possibility distribution $\pi_\tF$, where $\tF$ is a fuzzy subset of $\Theta_{12}=\Theta_1\times\Theta_2$. As a result of \eqref{eq:Pi}, variable $\btheta_1$ alone is constrained by the possibility distribution
\[
\pi_1(\theta_1)=\Pi(\{\theta_1\}\times \Theta_2)=\sup_{\theta_2\in\Theta_2} \pi_\tF(\theta_1,\theta_2)= \sup_{\theta_2\in\Theta_2} \tF(\theta_1,\theta_2)=(\tF\downarrow\Theta_1)(\theta_1),
\]
where $\tF\downarrow\Theta_1$ is the \emph{projection} of $\tF$ on $\Theta_1$. We say that $\pi_1$ is the \emph{marginal} of $\pi_\tF$ on $\Theta_1$. Conversely, given a possibility distribution $\pi_{\tF_1}$, where $\tF_1$ is a fuzzy subset  of $\Theta_1$, its \emph{cylindrical extension} in $\Theta_1\times\Theta_2$ is the possibility distribution $\pi_{\tF_1\times\Theta_2}$ defined as
\[
\pi_{\tF_1\times\Theta_2}(\theta_1,\theta_2)=\pi_{\tF_1}(\theta_1)
\]
for all $(\theta_1,\theta_2)\in\Theta_1\times\Theta_2$. We say that the joint possibility distribution $\pi_{\tF}$ on $\Theta_{12}$ is \emph{noninteractive} with respect to the product intersection if it is the product of its marginals:
\[
\pi_{\tF}(\theta_1,\theta_2)=\pi_{\tF\downarrow\Theta_1}(\theta_1) \cdot \pi_{\tF\downarrow\Theta_2}(\theta_2).
\]

\begin{Ex}
Let $\pi_{12}$ be the possibility distribution on $\reels^2$ defined by
\begin{align*}
\pi_{12}(x_1,x_2)&=\exp\left(-\frac{h_1}2(x_1-m_1)^2-\frac{h_2}2(x_2-m_2)^2 \right)\\
&=\exp\left(-\frac{h_1}2(x_1-m_1)^2\right)\exp\left(-\frac{h_2}2(x_2-m_2)^2 \right).
\end{align*}
Its marginals are
\[
\pi_1(x_1)=\max_{\theta_2}\pi_{12}(x_1,x_2)=\exp\left(-\frac{h_1}2(x_1-m_1)^2\right)
\]
and
\[
\pi_2(x_2)=\max_{\theta_1}\pi_{12}(x_1,x_2)=\exp\left(-\frac{h_2}2(x_2-m_2)^2\right).
\]
Consequenty, $\pi_{12}$ is noninteractive with respect to the product intersection.
\end{Ex}

\section{Epistemic random fuzzy sets}
\label{sec:random_fuzzy}

The proposed epistemic random fuzzy set model is introduced in this section. The main definitions are first given in Section \ref{subsec:def}, and the generalized product-intersection rule is introduced in Section \ref{subsec:genDS}. Marginalization and vacuous extension are then addressed in Section \ref{subsec:margin}, and an application to statistical inference is briefly discussed in Section \ref{subsec:stat}.

\subsection{General definitions}
\label{subsec:def}


As before, let $(\Omega,\sigma_\Omega,P)$ be a probability space and let $(\Theta,\sigma_\Theta)$ be a measurable space. Let $\tX$ by a mapping from $\Omega$ to the set $[0,1]^\Theta$ of fuzzy subsets of $\Theta$. For any $\alpha\in [0,1]$, let $\cut{\alpha}{\tX}$ be the mapping from $\Omega$ to $2^\Theta$ defined as
\[
\cut{\alpha}{\tX}(\omega)=\cut{\alpha}{[\tX(\omega)]},
\]
where $\cut{\alpha}{[\tX(\omega)]}$  is the weak $\alpha$-cut of $\tX(\omega)$.  If for any $\alpha\in [0,1]$, $\cut{\alpha}{\tX}$ is $\sigma_\Omega-\sigma_\Theta$ strongly measurable,  the tuple  $(\Omega,\sigma_\Omega,P,\Theta, \sigma_\Theta,\tX)$  is said to be a \emph{random fuzzy set} (also called a \emph{fuzzy random variable})  \cite{couso11}. It is clear that the class of random fuzzy sets includes that of random sets, just as the class of fuzzy sets includes that of classical (crisp) sets.   


\begin{Ex}
\label{ex:RFS}
Let $M$ be  a Gaussian random variable from $(\Omega,\sigma_\Omega,P)$ to $(\reels,\beta_\reels)$, with mean $\mu$ and standard deviation $\sigma$, and let $\tX$ be the mapping from $\Omega$ to $[0,1]^\reels$ that maps each $\omega\in \Omega$ to the triangular fuzzy number with mode $M(\omega)$ and support $[M(\omega)-a,M(\omega)+a]$:
\[
\tX(\omega)(x)=\begin{cases}
\frac{a- |x-M(\omega)|}{a} & \text{if } |x-M(\omega)|\le a\\
0 & \text{otherwise. }
\end{cases}
\]
for some $a>0$. For any $\alpha\in[0,1]$, the $\alpha$-cut of $\tX(\omega)$ is
\[
\cut{\alpha}{\tX}(\omega)=\left[M(\omega)-a(1-\alpha),M(\omega)+a(1-\alpha)\right].
\]
The random set $\cut{\alpha}{\tX}: \omega\rightarrow \cut{\alpha}{\tX}(\omega)$ is $\sigma_\Omega-\beta_\reels$ strongly measurable (it is a random closed interval). Consequently, $\tX$ is a random fuzzy set. \new{In the following, such random fuzzy sets with domain $[0,1]^\reels$ will be called \emph{random fuzzy numbers}.}
\end{Ex}

\paragraph{Interpretation} Here, as in \cite{denoeux21a}, we use random fuzzy sets as a model of unreliable and fuzzy evidence. In this model, we see $\Omega$ as a \emph{set of interpretations} of a piece of evidence about a variable $\btheta$ taking values in $\Theta$. If interpretation $\omega\in \Omega$ holds, we know that ``$\btheta \text{ is } \tX(\omega)$'', i.e., $\btheta$  is constrained by the possibility distribution $\pi_{\tX(\omega)}$. We qualify such random fuzzy sets as \emph{epistemic}, because they encode a state of knowledge about some variable $\btheta$. \new{It should be noted that this semantics of random fuzzy sets is different from those reviewed in \cite{couso11}. The conditional possibility  interpretation developed in \cite{couso11} is the closest to ours, since we also see the fuzzy sets $\tX(\omega)$ as defining conditional possibility measures. However, in \cite{couso11}, the authors use the random fuzzy set formalism to model a situation in which we have two random experiments, one of which is completely determined; the family of possibility distributions $\{\pi_{\tX(\omega)}: \omega\in\Omega\}$  then models our knowledge about the relationship between the outcomes $\omega$ of the first experiment and the possible outcomes of the second one. This formalism allows the authors of \cite{couso11} to compute lower and upper bounds on the probability of any event related to the second experiment. In contrast, our model does not rely on the notion of random experiment. In particular, we do not postulate the existence of an objective probability measure on $\Theta$, and the belief and plausibility measures introduced below are not interpreted as lower and upper bounds on ``true'' probabilities.}  

\paragraph{Belief and plausibility} 

We say that random fuzzy set $\tX$ is \emph{normalized}  if it verifies the following conditions:
\be
\item For all $\omega\in\Omega$, $\tX(\omega)$ is either the empty set, or a normal fuzzy set, i.e., $\height(\tX(\omega))\in\{0,1\}$.
\item $P(\{\omega \in \Omega: \tX(\omega)=\emptyset\})=0$.
\end{enumerate}
These conditions will be assumed  in the rest of this section. For any $\omega\in\Omega$, let $\Pi_\tX(\cdot\mid \omega)$ be the possibility measure on $\Theta$ induced by $\tX(\omega)$:
\begin{equation}
\label{eq:defPi}
\Pi_\tX(B\mid \omega)=\sup_{\theta\in B} \tX(\omega)(\theta),
\end{equation}
and let  $N_\tX(\cdot\mid \omega)$ be the dual necessity measure:
\[
N_\tX(B\mid \omega)=\begin{cases}
1-\Pi_\tX(B^c\mid \omega) & \text{if } \tX(\omega)\neq \emptyset\\
0 &  \text{otherwise. }
\end{cases}
\]
Let  $Bel_\tX$ and $Pl_\tX$ be the mappings from $\sigma_\Theta$ to $[0,1]$ defined as
\begin{equation}
\label{eq:defBel}
Bel_\tX(B)=\int_\Omega  N(B\mid \omega) dP(\omega) 
\end{equation}
and
\begin{equation}
\label{eq:defPl}
Pl_\tX(B)=\int_\Omega  \Pi(B\mid \omega) dP(\omega). 
\end{equation}

Function $Bel_\tX$ is a belief function, and $Pl_\tX$ is the dual plausibility function. As shown in  \cite[Lemma 6.2]{couso11}, they are induced by the  random set $(\Omega\times [0,1], \sigma_\Omega\otimes\beta_{[0,1]}, P\otimes \lambda,\Theta,\sigma_\Theta,\bbX)$, where $\bbX: \Omega \times [0,1] \rightarrow 2^\Theta$ is the multi-valued mapping defined as 
\begin{equation}
\label{eq:Xbar}
\bbX(\omega,\alpha)= \cut{\alpha}{\tX}(\omega).
\end{equation}
As a consequence, $Bel_\tX(B)$ and $Pl_\tX(B)$ can also be written as follows:
\begin{subequations}
\label{eq:calcBelPl}
\begin{equation}
\label{eq:calcBel}
Bel_\tX(B)=\int_0^1 Bel_{\cut{\alpha}{\tX}}(B) d\alpha
\end{equation}
and
\begin{equation}
\label{eq:calcPl}
Pl_\tX(B)=\int_0^1 Pl_{\cut{\alpha}{\tX}}(B) d\alpha.
\end{equation}
\end{subequations}

%

\new{
\paragraph{Lower and upper expectations of a random fuzzy number} Let $\tX$ be a random fuzzy number (i.e., a random fuzzy set with domain $[0,1]^\reels$), and let $\bbX$ be the corresponding random set defined by \eqref{eq:Xbar}. We define the lower and upper expectations of $\tX$ as the lower and upper expectations of $\bbX$, i.e., $\esp_*(\tX)=\esp_*(\bbX)$ and $\esp^*(\tX)=\esp^*(\bbX)$. It follows from \eqref{eq:calcBelPl} that
\begin{equation}
\label{eq:calclowerupper}
\esp_*(\tX)=\int_0^1 \esp_*(\cut{\alpha}{\tX})d\alpha \quad \text{and} \quad \esp^*(\tX)=\int_0^1 \esp^*(\cut{\alpha}{\tX})d\alpha.
\end{equation}
}

\begin{Ex}
\label{ex:RFS_BelPl}
Let us consider again the random fuzzy number of Example \ref{ex:RFS}. Its lower and upper cdf's are, respectively, the mappings $x \rightarrow Bel_\tX((-\infty,x])$ and $x \rightarrow Pl_\tX((-\infty,x])$. Let us illustrate the calculation of the upper cdf first, using two methods.
\paragraph{Method 1} From \eqref{eq:defPi},
\[
\Pi\left((-\infty,x]\mid \omega\right)=\sup_{x'\le x}\tX(\omega)(x')=\begin{cases}
1 & \text{if } M(\omega)\le x\\
\frac{x-M(\omega)+a}{a} & \text{if } x < M(\omega) \le x+a\\
0 & \text{otherwise.}
\end{cases}
\]
Using \eqref{eq:defPl}, we get
\begin{align*}
Pl_\tX((-\infty,x]) &= P(M\le x)\times 1 + P(x < M \le x+a) \esp\left[\frac{x-M+a}{a} \mid x < M \le x+a\right]\\
&=\Phi\left(\frac{x-\mu}\sigma\right) + \left[\Phi\left(\frac{x+a-\mu}\sigma\right)-\Phi\left(\frac{x-\mu}\sigma\right)\right] \times\\
&\hspace{8cm} \left(\frac{x+a}a -\esp\left[M \mid x < M \le x+a\right] \right).
\end{align*}
Now, using a well-known result about the truncated normal distribution,
\[
\esp\left[M \mid x < M \le x+a\right]= \mu + \sigma \frac{\phi\left(\frac{x-\mu}\sigma\right)-\phi\left(\frac{x+a-\mu}\sigma\right)}{\Phi\left(\frac{x+a-\mu}\sigma\right)-\Phi\left(\frac{x-\mu}\sigma\right)}.
\]
After rearranging the terms, we  finally obtain
\begin{multline}
\label{eq:exPl}
Pl_\tX((-\infty,x])=\left(\frac{x+a-\mu}a\right)\Phi\left(\frac{x+a-\mu}\sigma\right) - \left(\frac{x-\mu}a\right)\Phi\left(\frac{x-\mu}\sigma\right) +\\
 \frac{\sigma}a \left[\phi\left(\frac{x+a-\mu}\sigma\right)-\phi\left(\frac{x-\mu}\sigma\right)\right].
\end{multline}

\paragraph{Method 2} Let  us now use \eqref{eq:calcPl}. We have
\begin{align*}
Pl_\tX((-\infty,x])&=\int_0^1 P(M-a(1-\alpha)\le x) d\alpha\\
&=\int_0^1 \Phi\left(\frac{x+a(1-\alpha)-\mu}\sigma\right) d\alpha.
\end{align*}
Using the formula
\[
\int \Phi(u+ v x)dx=\frac1v\left[(u+vx)\Phi(u+vx)+\phi(u+vx)\right] + C,
\]
we get the same result as \eqref{eq:exPl}. Using any of the two methods demonstrated above, we obtain the following expression for the lower cdf:
\begin{multline}
\label{eq:exBel}
Bel_\tX((-\infty,x])=\left(\frac{x-\mu}a\right)\Phi\left(\frac{x-\mu}\sigma\right) - \left(\frac{x-a-\mu}a\right)\Phi\left(\frac{x-a-\mu}\sigma\right) +\\
 \frac{\sigma}a \left[\phi\left(\frac{x-\mu}\sigma\right)-\phi\left(\frac{x-a-\mu}\sigma\right)\right].
\end{multline}
It can easily be checked that, when $a=0$, 
\[
Bel_\tX((-\infty,x])=Pl_\tX((-\infty,x])=\Phi\left(\frac{x-\mu}\sigma\right).
\]
Examples of functions $Bel_\tX((-\infty,x])$ and $Pl_\tX((-\infty,x])$ for different values of $a$ are shown in Figure \ref{fig:ex2}.

\begin{figure}
\centering  
\includegraphics[width=0.5\textwidth]{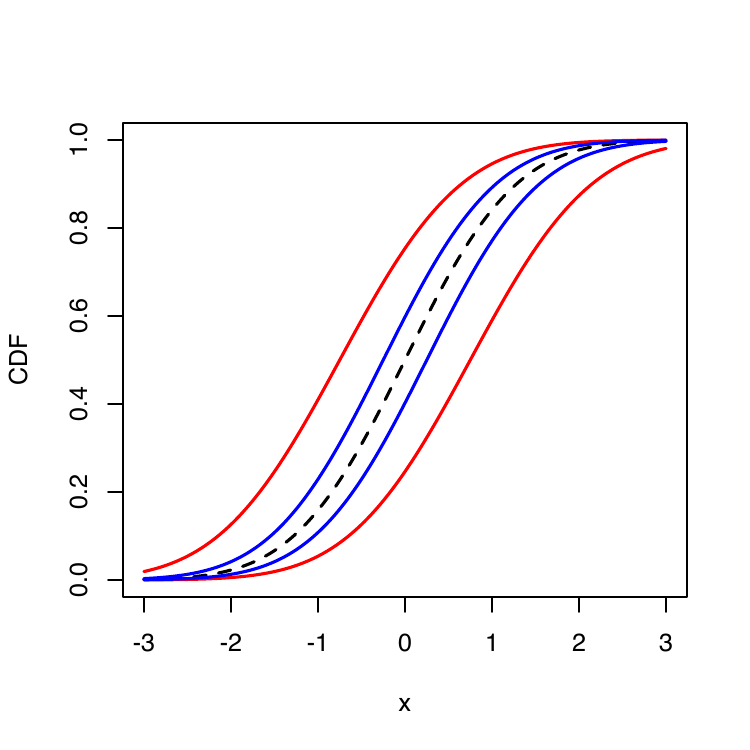}
\caption{Lower and upper cdf's for the random fuzzy numbers studied in Examples \ref{ex:RFS} and  \ref{ex:RFS_BelPl}, with $\mu=0$, $\sigma=1$, and $a=0.5$ (blue curves) or $a=1.5$ (red curves). The Gaussian cdf corresponding to $a=0$ is shown as a broken line. \label{fig:ex2}}
\end{figure}

\new{Now, the lower and upper expectations of $\tX$ can be computed from \eqref{eq:calclowerupper} as
\[
\esp_*(\tX)=\int_0^1 \esp_*(\cut{\alpha}{\tX})d\alpha=\int_0^1 [\mu - a(1-\alpha)] d\alpha=\mu-\frac{a}2,
\]
and
\[
\esp^*(\tX)=\int_0^1 \esp^*(\cut{\alpha}{\tX})d\alpha=\int_0^1 [\mu + a(1-\alpha)] d\alpha=\mu+\frac{a}2.
\]
}
\end{Ex}

\subsection{Generalized product-intersection rule} 
\label{subsec:genDS}

Dempster's rule and the possibilistic product intersection rule  recalled, respectively, in Sections \ref{subsec:random_sets} and \ref{subsec:fuzzy} can  be generalized to combine epistemic random fuzzy sets. Consider two  epistemic random fuzzy sets $(\Omega_1,\sigma_{1},P_1,\Theta, \sigma_\Theta,\tX_1)$ and $(\Omega_2,\sigma_2,P_2,\Theta, \sigma_\Theta,\tX_2)$ encoding independent pieces of evidence. The independence assumption means here that the relevant probability measure on the joint measurable space $(\Omega_1\times\Omega_2,\sigma_1\otimes\sigma_2$) is the product measure $P_1\times P_2$.

If  interpretations $\omega_1\in \Omega_1$ and $\omega_2\in \Omega_2$ both hold, we know that ``$\btheta \text{ is } \tX_1(\omega_1)$'' and ``$\btheta \text{ is } \tX_2(\omega_2)$''. It is then natural to combine the fuzzy sets $\tX_1(\omega_1)$ and $\tX_2(\omega_2)$ by an intersection operator. As discussed in Section \ref{subsec:fuzzy}, normalized product intersection is a good candidate as it suitable for combining fuzzy information from independent sources and it is associative. We will thus consider the mapping $\tX_\varodot(\omega_1,\omega_2)=\tX_1(\omega_1)\varodot \tX_2(\omega_2)$, which we will assume to be $\sigma_1\otimes\sigma_2$-$\sigma_\Theta$ strongly measurable.

As in the crisp case recalled in Section  \ref{subsec:random_sets}, if $\height(\tX_1(\omega_1)\tX_2(\omega_2))=0$, the two interpretations $\omega_1$ and $\omega_2$ are inconsistent and they must be discarded. If  $\height(\tX_1(\omega_1)\tX_2(\omega_2))=1$, the two interpretations are fully consistent. If $0<\height(\tX_1(\omega_1)\tX_2(\omega_2))<1$, $\omega_1$ and $\omega_2$ are \emph{partially consistent}. 
As proposed in \cite{denoeux21a},  instead of simply discarding only fully inconsistent pairs $(\omega_1,\omega_2)$, it makes sense to give all pairs  $(\omega_1,\omega_2)$ a weight proportional to the \emph{degree of consistency} between $\tX_1(\omega_1)$ and $\tX_2(\omega_2)$. This can be achieved by  conditioning $P_1\times P_2$ on the \emph{fuzzy set} $\tTheta^*$ of consistent pairs of interpretations, with membership function
\[
\tTheta^*(\omega_1,\omega_2)= \height\left(\tX_1(\omega_1)\cdot \tX_2(\omega_2)\right).
\]
Using Zadeh's definition of a fuzzy event \cite{zadeh68}, we get the following expression for the conditional probability measure $\tP_{12}=(P_1\times P_2)(\cdot \mid \tTheta^*)$,  for any $B \in \sigma_1\otimes\sigma_2$:
\[
\tP_{12}(B)=\frac{(P_1\times P_2)(B\cap \tTheta^*)}{(P_1\times P_2)(\tTheta^*)}=\frac{\int_{\Omega_1}\int_{\Omega_2}B(\omega_1,\omega_2) \height\left(\tX_1(\omega_1)\cdot \tX_2(\omega_2)\right) dP_2(\omega_2) dP_1(\omega_1)}{\int_{\Omega_1}\int_{\Omega_2} \height\left(\tX_1(\omega_1)\cdot \tX_2(\omega_2)\right) dP_2(\omega_2) dP_1(\omega_1)},
\]
where  $B(\cdot,\cdot)$ denotes the indicator function of $B$. This conditioning operation, called \emph{soft normalization} was first proposed in \cite{yager96} in the finite case and with a different justification. 

The combined  random fuzzy set 
\[
(\Omega_1\times\Omega_2,\sigma_1\otimes\sigma_2,\tP_{12},\Theta, \sigma_\Theta,\tX_\varodot)
\]
is called the \emph{orthogonal sum} of the two pieces of evidence. This operation generalizes both Dempster's rule and the normalized product of possibility distribution. We will refer to it as  the \emph{generalized product-intersection rule}, and  it will be denoted by the same symbol $\oplus$ as Dempster's rule.  It is clear that $\tX\oplus \bbX_0=\tX$ for any random fuzzy set $\tX$ and any vacuous random set $\bbX_0$ on the same domain $\Theta$. The degree of conflict between two random fuzzy sets $\tX_1$ and $\tX_2$ is naturally defined as 
\begin{equation}
\label{eq:conflict_fuzzy}
\kappa= 1-(P_1\times P_2)(\tTheta^*)=1-\int_{\Omega_1}\int_{\Omega_2} \height\left(\tX_1(\omega_1)\tX_2(\omega_2)\right) dP_2(\omega_2) dP_1(\omega_1).
\end{equation}

The associativity of  $\oplus$ was proved in \cite{denoeux21a} in the finite case; we give  a similar proof in the general case. 

\begin{Prop}
\label{prop:assoc_soft}
The generalized product-intersection rule $\oplus$ for random fuzzy sets is commutative and associative.
\end{Prop}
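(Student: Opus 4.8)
The plan is to regard each random fuzzy set as a pair consisting of a fuzzy-set-valued mapping together with a probability measure on the space of interpretations, and to check that both components behave correctly under reordering and regrouping. Commutativity is the easy direction: the normalized product intersection $\varodot$ is commutative because pointwise multiplication of membership functions is, and the consistency degree $\tTheta^*(\omega_1,\omega_2)=\height(\tX_1(\omega_1)\cdot\tX_2(\omega_2))$ is symmetric in its two arguments. Since the product measure $P_1\times P_2$ is carried to $P_2\times P_1$ by the canonical swap $\Omega_1\times\Omega_2\to\Omega_2\times\Omega_1$, the soft-normalized measures $\tP_{12}$ and $\tP_{21}$ correspond under this swap, so $\tX_1\oplus\tX_2$ and $\tX_2\oplus\tX_1$ are the same random fuzzy set up to this identification.

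For associativity I would identify both $(\Omega_1\times\Omega_2)\times\Omega_3$ and $\Omega_1\times(\Omega_2\times\Omega_3)$ with $\Omega_1\times\Omega_2\times\Omega_3$ through the canonical isomorphism, and then show that the two triple combinations induce the same mapping and the same probability measure on this common space. The mapping part is immediate from the associativity of $\varodot$ recalled in Section~\ref{subsec:fuzzy}: both groupings send $(\omega_1,\omega_2,\omega_3)$ to $\tX_1(\omega_1)\varodot\tX_2(\omega_2)\varodot\tX_3(\omega_3)$, which I denote $\varphi_{123}$. The real content is that the two soft-normalized measures agree, and this is where the intermediate normalization constants must be tracked carefully.

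The key computation rests on the fact that $\varodot$ merely rescales a product of membership functions by its height, so heights telescope. Writing $\eta_{12}=\height(\tX_1(\omega_1)\cdot\tX_2(\omega_2))$ and $\eta_{123}=\height(\tX_1(\omega_1)\cdot\tX_2(\omega_2)\cdot\tX_3(\omega_3))$, the combination $\tX_1\oplus\tX_2$ has density $\eta_{12}/Z_{12}$ with respect to $dP_1\,dP_2$, where $Z_{12}=1-\kappa_{12}$ is the normalizing constant from \eqref{eq:conflict_fuzzy}. When this is combined with $\tX_3$, the consistency degree at $(\omega_1,\omega_2,\omega_3)$ is
\[
\height\bigl((\tX_1(\omega_1)\varodot\tX_2(\omega_2))\cdot\tX_3(\omega_3)\bigr)=\frac{\eta_{123}}{\eta_{12}},
\]
since dividing the product $\tX_1(\omega_1)\cdot\tX_2(\omega_2)$ by the constant $\eta_{12}$ divides its height by $\eta_{12}$ as well. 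Multiplying the two densities therefore yields the unnormalized density
\[
\frac{\eta_{123}}{\eta_{12}}\cdot\frac{\eta_{12}}{Z_{12}}=\frac{\eta_{123}}{Z_{12}}
\]
with respect to $dP_1\,dP_2\,dP_3$; the constant factor $Z_{12}$ is absorbed by the final soft normalization, leaving density $\eta_{123}/Z_{123}$ with $Z_{123}=\int \eta_{123}\,d(P_1\times P_2\times P_3)$. An identical computation for the grouping $\tX_1\oplus(\tX_2\oplus\tX_3)$ produces the intermediate constant $Z_{23}$, which cancels in the same way, again giving density $\eta_{123}/Z_{123}$. Hence both triple combinations carry the mapping $\varphi_{123}$ and the measure with density $\eta_{123}/Z_{123}$, and so they coincide.

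The main obstacle I anticipate is precisely this bookkeeping of the intermediate normalization constants. One must verify that the height identity $\height(\varphi_{12}\cdot\tX_3)=\eta_{123}/\eta_{12}$ holds pointwise in $(\omega_1,\omega_2,\omega_3)$ off the $\tP_{12}$-null set where $\eta_{12}=0$, and that Fubini's theorem legitimately permits rewriting the two-stage conditional measures as single densities against $dP_1\,dP_2\,dP_3$, so that the cancellation of $Z_{12}$ (respectively $Z_{23}$) is genuine. One should also check that the strong-measurability hypothesis assumed for each pairwise combination propagates to the triple mapping $\varphi_{123}$, so that every intermediate object is a bona fide random fuzzy set; this is routine but worth stating explicitly.
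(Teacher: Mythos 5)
Your proposal is correct and follows essentially the same route as the paper: the heart of both arguments is the telescoping identity $\height\bigl((\tX_1(\omega_1)\varodot\tX_2(\omega_2))\cdot\tX_3(\omega_3)\bigr)=\eta_{123}/\eta_{12}$, after which the two-stage soft-normalized measure is seen to have density proportional to $\eta_{123}$ with respect to $P_1\times P_2\times P_3$, so the intermediate normalization constant cancels and both groupings yield the measure $\tP_{123}$. The paper absorbs the constants via proportionality signs where you track $Z_{12}$, $Z_{123}$ explicitly, but this is only a cosmetic difference.
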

\begin{proof}
See  \ref{app:assoc_soft}
\end{proof}

The following proposition states that a counterpart of   Proposition \ref{prop:prodQ} is still valid when combining independent random fuzzy sets, i.e., the combined contour function is still proportional to the product of the contour functions.

\begin{Prop}
Let $\tX_1$ and $\tX_2$ be two random fuzzy sets on the same domain $\Theta$, with contour functions $pl_{\tX_1}$ and $pl_{\tX_2}$ and with degree of conflict $\kappa$ defined by \eqref{eq:conflict_fuzzy}. The  contour function  $pl_{\tX_1\oplus \tX_2}$ of $\tX_1\oplus \tX_2$ verifies
\begin{equation}
\label{eq:prodpl_fuzzy}
(pl_{\tX_1\oplus\tX_2})(\theta) =\frac{pl_{\tX_1}(\theta) pl_{\tX_2}(\theta)}{1-\kappa},
\end{equation}
for all $\theta\in \Theta$.
\end{Prop}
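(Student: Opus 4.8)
The plan is to reduce everything to the observation that the contour function of a random fuzzy set is nothing but the $P$-expectation of the membership value at $\theta$. First I would specialize the definition \eqref{eq:defPl} of the plausibility function to the singleton $B=\{\theta\}$. Since $\Pi(\{\theta\}\mid\omega)=\sup_{\theta'\in\{\theta\}}\tX(\omega)(\theta')=\tX(\omega)(\theta)$, this gives
\[
pl_\tX(\theta)=Pl_\tX(\{\theta\})=\int_\Omega \tX(\omega)(\theta)\,dP(\omega).
\]
Applied to the combined random fuzzy set $(\Omega_1\times\Omega_2,\sigma_1\otimes\sigma_2,\tP_{12},\Theta,\sigma_\Theta,\tX_\varodot)$, where $\tX_\varodot(\omega_1,\omega_2)=\tX_1(\omega_1)\varodot\tX_2(\omega_2)$, the same formula reads
\[
pl_{\tX_1\oplus\tX_2}(\theta)=\int_{\Omega_1\times\Omega_2}\bigl(\tX_1(\omega_1)\varodot\tX_2(\omega_2)\bigr)(\theta)\,d\tP_{12}(\omega_1,\omega_2).
\]

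Next I would rewrite this integral against $\tP_{12}$ as a weighted integral against the product measure. Since $\tP_{12}=(P_1\times P_2)(\cdot\mid\tTheta^*)$ is the product measure conditioned on the fuzzy event $\tTheta^*$, it has density $\tTheta^*/(1-\kappa)$ with respect to $P_1\times P_2$, where $1-\kappa=(P_1\times P_2)(\tTheta^*)$ by \eqref{eq:conflict_fuzzy}. Hence
\[
pl_{\tX_1\oplus\tX_2}(\theta)=\frac{1}{1-\kappa}\int_{\Omega_1}\int_{\Omega_2}\bigl(\tX_1(\omega_1)\varodot\tX_2(\omega_2)\bigr)(\theta)\,\tTheta^*(\omega_1,\omega_2)\,dP_2(\omega_2)\,dP_1(\omega_1).
\]

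The crucial step, and the only place where the definition of $\varodot$ enters, is the cancellation inside the integrand. Recalling $\tTheta^*(\omega_1,\omega_2)=\height\bigl(\tX_1(\omega_1)\cdot\tX_2(\omega_2)\bigr)$ and the definition of the normalized product intersection, one has the pointwise identity
\[
\bigl(\tX_1(\omega_1)\varodot\tX_2(\omega_2)\bigr)(\theta)\,\tTheta^*(\omega_1,\omega_2)=\tX_1(\omega_1)(\theta)\,\tX_2(\omega_2)(\theta),
\]
valid for \emph{every} pair $(\omega_1,\omega_2)$. This is the main point requiring care: when $\tTheta^*(\omega_1,\omega_2)>0$ the normalizing height cancels against the weight, while when $\tTheta^*(\omega_1,\omega_2)=0$ the supremum of the nonnegative product $\tX_1(\omega_1)\cdot\tX_2(\omega_2)$ vanishes, forcing the right-hand side to be zero as well (and $\varodot$ is defined to be $0$ there). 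Substituting this identity and applying Fubini together with the product structure of $P_1\times P_2$ separates the double integral:
\[
pl_{\tX_1\oplus\tX_2}(\theta)=\frac{1}{1-\kappa}\left(\int_{\Omega_1}\tX_1(\omega_1)(\theta)\,dP_1(\omega_1)\right)\left(\int_{\Omega_2}\tX_2(\omega_2)(\theta)\,dP_2(\omega_2)\right)=\frac{pl_{\tX_1}(\theta)\,pl_{\tX_2}(\theta)}{1-\kappa},
\]
which is exactly \eqref{eq:prodpl_fuzzy}. I expect no genuine obstacle beyond justifying the cancellation identity in the degenerate case $\tTheta^*=0$; once that is settled, the computation parallels the crisp proof of Proposition \ref{prop:prodQ}, with the height weight $\tTheta^*$ playing the role of the hard indicator of $\Theta^*$.
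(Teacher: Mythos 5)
Your proof is correct and follows essentially the same route as the paper's: write the combined contour function as an integral against $P_1\times P_2$ weighted by the height $\tTheta^*$ (divided by $1-\kappa$), cancel the normalizing height in the definition of $\varodot$ against that weight, and factor the resulting double integral by Fubini into the product $pl_{\tX_1}(\theta)\,pl_{\tX_2}(\theta)$. If anything, your version is slightly more careful than the paper's, which formally divides by $\height\left(\tX_1(\omega_1)\tX_2(\omega_2)\right)$ without commenting on the case where this height vanishes, whereas you verify the cancellation identity pointwise, including that degenerate case.
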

\begin{proof}
We have
\begin{align*}
(pl_{\tX_1\oplus \tX_2})(\theta) &=\frac{\int_{\Omega_1}\int_{\Omega_2} \height\left(\tX_1(\omega_1)\cdot\tX_2(\omega_2)\right) \tX_\varodot(\omega_1,\omega_2)(\theta) dP_2(\omega_2) dP_1(\omega_1)}{1-\kappa}\\
&=\frac{\int_{\Omega_1}\int_{\Omega_2} \height\left(\tX_1(\omega_1)\cdot\tX_2(\omega_2)\right) \frac{\tX_1(\omega_1)(\theta)\tX_2(\omega_2)(\theta)}{\height\left(\tX_1(\omega_1)\tX_2(\omega_2)\right)} dP_2(\omega_2) dP_1(\omega_1)}{1-\kappa}\\
&=\frac{\left(\int_{\Omega_1}\tX_1(\omega_1)(\theta) dP_1(\omega_1) \right)\left(\int_{\Omega_2}  \tX_2(\omega_2)(\theta) dP_2(\omega_2)\right)}{1-\kappa}\\
&=\frac{pl_{\tX_1}(\theta) pl_{\tX_2}(\theta)}{1-\kappa}.
\end{align*}
\end{proof}

As remarked in Section \ref{subsec:fuzzy}, a belief function induced by a random fuzzy set is also induced by a  random (crisp) set. However, combining   random fuzzy sets or   random crisp sets does not result in the same belief function in general. In particular, it is well-known that Dempster's rule does not preserve consonance. To combine two belief functions, we must, therefore, examine the evidence on which they are based, not only to determine whether the  bodies of evidence are independent or not, but also to determine whether the evidence is fuzzy or crisp. This point is illustrated by the following example.

\begin{Ex}
Consider the following two mappings from $\reels$ to $[0,1]$ represented in Figure \ref{fig:ex1_pdf}:
\[
\pi_1(x)=\GFN(0,0.3), \quad \pi_2(x)=\GFN\left(1,0.5\right).
\]
If  these two mappings are possibility distributions encoding fully reliable but fuzzy evidence, they correspond to ``constant random fuzzy sets'', i.e., mappings $\tX_1(\omega)=\pi_1$ and $\tX_2(\omega)=\pi_2$ with $P(\{\omega\})=1$. The combined random fuzzy set $\tX_1\oplus\tX_2$ is then defined by $(\tX_1\oplus\tX_2)(\omega)=\pi_1\varodot\pi_2$. From Proposition \ref{prop:prodphi}, the normalized product of two GFN's is a GFN. Here, we get the combined possibility distribution plotted as a red broken curve in Figure \ref{fig:ex1_pdf}:
\[
(\pi_1\varodot\pi_2)(x)=\GFN(0.625,0.8). 
\]
\begin{figure}
\centering  
\subfloat[\label{fig:ex1_pdf}]{\includegraphics[width=0.5\textwidth]{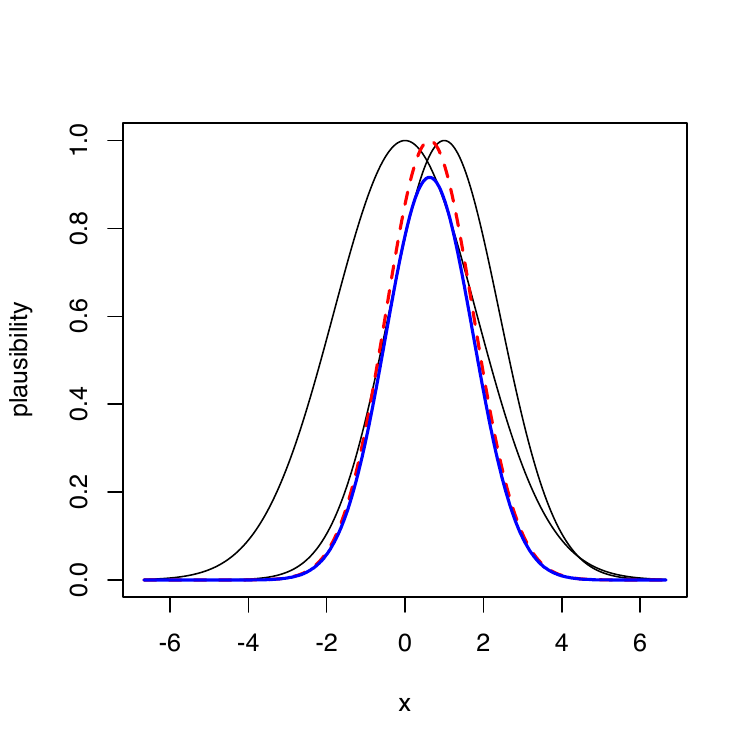}}
\subfloat[\label{fig:ex1_cdf}]{\includegraphics[width=0.5\textwidth]{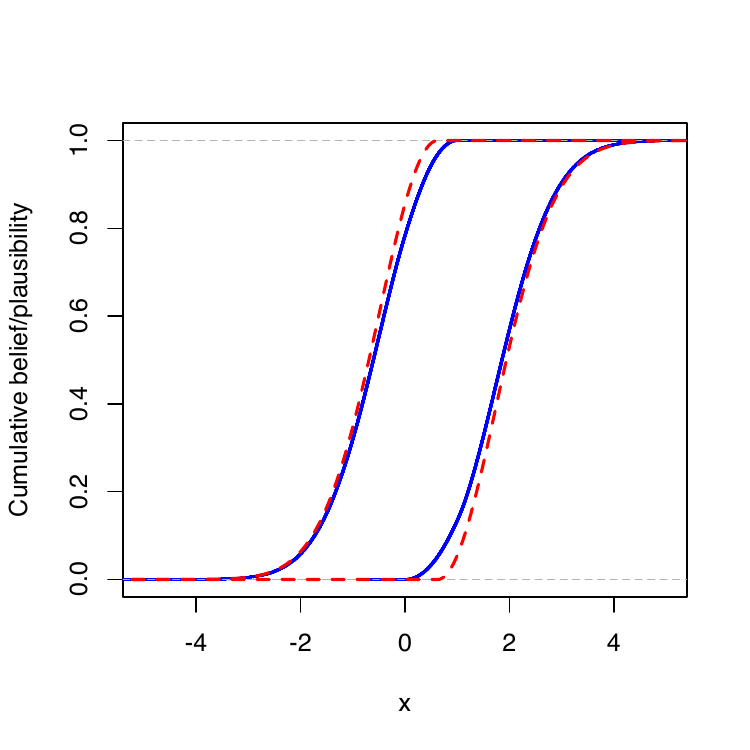}}
\caption{(a): Two Gaussian possibility distributions (black solid curves) with their normalized product intersection (red broken curve) and the contour function of the combined random set (blue solid curve). (b): Lower and upper cdf's of the combined possibility distribution (red broken curves) and of the combined random set (blue solid curves). \label{fig:ex1}}
\end{figure}
The corresponding lower and upper cumulative distribution functions (cdf's) are, respectively
\[
Bel_{\tX_1\oplus\tX_2}((-\infty,x])=\begin{cases}
0 & \text{if } x\le 0.625\\
1-\exp\left(-0.4 (x-0.625)^2\right) & \text{if } x> 0.625
\end{cases}
\]
and
\[
Pl_{\tX_1\oplus\tX_2}((-\infty,x])=\begin{cases}
\exp\left(-0.4 (x-0.625)^2\right) & \text{if } x\le 0.625\\
1 & \text{if } x>  0.625.
\end{cases}
\]

These two functions are plotted as red broken curves in Figure \ref{fig:ex1_cdf}. Alternatively, as explained in Section \ref{subsec:random_sets}, we may see $\pi_1$ and $\pi_2$ as encoding crisp but partially reliable evidence, in which case they define two independent consonant random intervals $\bbX_1(\alpha_1)=\cut{{\alpha_1}}{\pi_1}$ and $\bbX(\alpha_2)=\cut{{\alpha_2}}{\pi_2}$ , where $(\alpha_1,\alpha_2)$ has a uniform distribution on $[0,1]^2$. These two random intervals can be combined numerically using Monte-Carlo simulation, as explained in \cite{ann16}.  The contour function and the lower and upper cdf's are plotted as solid blue lines in Figures \ref{fig:ex1_pdf} and \ref{fig:ex1_cdf}, respectively. We notice that the contour functions are proportional, as a consequence of Proposition \ref{prop:prodQ}.
\end{Ex}

\subsection{Marginalization and vacuous extension}
\label{subsec:margin}

 Let us now consider again the case where we have two variables $\btheta_1$ and $\btheta_2$ with respective domains $\Theta_1$ and $\Theta_2$. Let  $\tX_{12}$ be  a random fuzzy set from a probability space $(\Omega,\sigma_\Omega,P)$ to  the measurable space $(\Theta_{12},\sigma_{\Theta_{12}})$ with $\Theta_{12}=\Theta_1\times\Theta_2$ and $\sigma_{\Theta_{12}}=\sigma_{\Theta_1}\otimes\sigma_{\Theta_2}$, where $\sigma_{\Theta_1}$ and $\sigma_{\Theta_2}$ are $\sigma$-algebras on $\Theta_1$ and $\Theta_2$, respectively. Let $\tX_1$ be the mapping from $\Omega$ to $[0,1]^{\Theta_1}$ defined by
\[
\tX_1(\omega)=\tX_{12}(\omega)\downarrow \Theta_1,
\]
where, as before, $\downarrow$ denotes fuzzy set projection. If, for all $\alpha\in [0,1]$, the mapping $\cut{\alpha}{\tX_1}$ is $\sigma_\Omega-\sigma_{\Theta_1}$ strongly measurable, then the random fuzzy set $\tX_1$ is called the \emph{marginal} of $\tX_{12}$ on $\Theta_1$. 

Conversely, given  a random fuzzy set $\tX_1$ from $(\Omega,\sigma_\Omega,P)$ to $(\Theta_{1},\sigma_{\Theta_{1}})$, let $\tX_{1\uparrow (1,2)}$ be the mapping from $\Omega$ to $[0,1]^{\Theta_{12}}$ that maps each $\omega \in \Omega$ to the cylindrical extension of $\tX_1(\omega)$ in $\Theta_{12}$
\[
\tX_{1\uparrow (1,2)}(\omega)= \tX_1(\omega)\times \Theta_2,
\]
i.e., for all $(\theta_1,\theta_2)\in\Theta_{12}$,
\[
\tX_{1\uparrow (1,2)}(\omega)(\theta_1,\theta_2)= \tX_1(\omega)(\theta_1).
\]
If the mapping $\tX_{1\uparrow (1,2)}$ is $\sigma_\Omega-\sigma_{\Theta_{12}}$ strongly measurable, then the random fuzzy set $\tX_{1\uparrow (1,2)}$ is called the \emph{vacuous extension} of $\tX_1$ in $\Theta_{12}$. 

We say that a joint random fuzzy set is \emph{noninteractive} if it is equal to the orthogonal sum of the vacuous extensions of its projections:
\[
\tX_{12}=\tX_{1\uparrow (1,2)}\oplus \tX_{2\uparrow (1,2)} \quad \text{denoted as} \quad \tX_{1}\oplus \tX_{2}.
\]

A particular kind of  noninteractive random fuzzy sets will be studied in Section \ref{subsec:marginGRFV}.

\subsection{Application to statistical inference}
\label{subsec:stat}

Epistemic random fuzzy sets naturally arise in the context of statistical inference. As proposed by Shafer \cite{shafer76} and formally justified in \cite{denoeux13b}\cite{denoeux14a}, the  information conveyed by the likelihood function in statistical inference problems can be represented by a consonant belief function, whose contour function is equal to the relative likelihood function. For a statistical model $f(\bx,\theta)$, where $\bx\in\calX$ is the observation and $\theta\in\Theta$ is the unknown parameter, the likelihood-based belief function $Bel(\cdot,\bx)$ on $\Theta$ after observing $\bx$ is, thus, consonant and  defined by the contour function 
\begin{equation}
\label{eq:lik}
pl(\theta;\bx)=\frac{L(\theta;\bx)}{\sup_{\theta'\in \Theta} L(\theta';\bx)},
\end{equation}
where $L(\cdot,\bx):\theta\rightarrow f(\bx;\theta)$ is the likelihood function, and it is assumed that the denominator in \eqref{eq:lik} is finite. The corresponding plausibility function is, thus, defined by
\[
Pl(A;\bx)=\sup_{\theta\in A} pl(\theta;\bx)
\]
for any $A\subset \Theta$, i.e., it is a possibility measure. However, as noticed by Shafer in \cite{shafer76} and \cite{shafer82}, and also discussed in \cite{denoeux13b}, this construction is not compatible with Dempster's rule: if we consider two independent observations $\bx$ and $\bx'$, the belief function $Bel(\cdot;\bx,\bx')$ is not equal to the orthogonal sum $Bel(\cdot;\bx)\oplus Bel(\cdot;\bx')$, which is not consonant. As argued in \cite{denoeux21a}, this problem disappears if we do not consider the likelihood-based belief function to be induced by a consonant random crisp set, but by a constant random fuzzy set $\ttheta_\bx$ with membership function $\ttheta_\bx(\theta)=pl(\theta;\bx)$. We can interpret $\ttheta_\bx$  as the fuzzy set of likely values of $\theta$ after observing $\bx$.
Combining the contour functions \eqref{eq:lik} by the normalized product intersection rule then yields the correct result, i.e., the constant random fuzzy set $\ttheta_{\bx,\bx'}$ with membership function $\ttheta_{\bx,\bx'}(\theta)=\ttheta_\bx(\theta)\varodot\ttheta_{\bx'}(\theta)$. 

Now, consider a prediction problem, where we want to predict the value of a random variable $Y$ whose distribution also depends on $\theta$. We can always write $Y$ in the form $Y=\varphi(\theta,U)$, where $U$ is a pivotal random variable with known distribution \cite{ann14a,ann16}. After observing the data $\bx$, our knowledge of $\theta$ is represented by the fuzzy set $\ttheta_\bx$. Conditionally on $U=u$, our knowledge of $Y$ is, thus, represented by the fuzzy set $\tY(u)=\varphi(\ttheta_\bx,u)$, with membership function
\[
\tY(u)(y)=\sup_{\theta:  \varphi(\theta,u)=y} \ttheta_\bx(\theta).
\]
The mapping $\tY: u \rightarrow \tY(u)$ is, then, a random fuzzy set representing statistical evidence about $Y$. 

\begin{Ex}
Let $\bX=(X_1,\ldots,X_n)$ be an independent and identically distributed (iid) Gaussian sample with parent distribution $N(\theta,1)$, and let $Y\sim N(\theta,1)$.  After observing a realization $\bx$ of $\bX$, the likelihood function is
\[
L(\theta;\bx)=(2\pi)^{-n/2}\exp\left(-\frac12 \sum_{i=1}^n(x_i-\theta)^2\right).
\]
Denoting by $\widehat{\theta}$  the sample mean, the fuzzy set $\ttheta_\bx$ of likely values of $\theta$ after observing $\bx$ is the relative likelihood
\[
\ttheta_\bx(\theta)=\frac{L(\theta;\bx)}{L(\widehat{\theta};\bx)}=\exp\left(-\frac{n}2 (\theta-\widehat{\theta})^2\right).
\]
It is the Gaussian fuzzy number $\GFN(\widehat{\theta},n)$ with mode $\widehat{\theta}$ and precision $n$. Now, $Y$ can be written as $Y=\theta+U$, with $U\sim N(0,1)$. Consequently, the conditional possibility distribution on $Y$ given $U=u$ is the Gaussian fuzzy number $\ttheta_\bx+u=\GFN(\widehat{\theta}+u,n)$, and  our knowledge of $Y$ is described by the random fuzzy set $U \rightarrow \GFN(\widehat{\theta}+U,n)$, with $U\sim N(0,1)$. This is a Gaussian fuzzy number with fixed precision $h=n$ and normal random mode $M=\widehat{\theta}+U\sim N(\widehat{\theta},1)$. This important class of random fuzzy sets will be studied in the next section.  
\end{Ex}


\section{Gaussian random fuzzy numbers}
\label{sec:GRFS}

In this section, we introduce Gaussian random fuzzy numbers (GRFN's) as a practical model for representing uncertainty on a real variable. As we will see, this model encompasses Gaussian random variables and Gaussian fuzzy numbers as special cases. A GRFN can be seen, equivalently,  as a Gaussian random variable with fuzzy mean, or as a Gaussian fuzzy number with random mode. The definition and main properties will first be presented in Section \ref{subsec:GRFN}. The expression of the orthogonal sum of two GRFN's will then be derived in Section \ref{subsec:GRFNcomb}. Finally, arithmetic operations on GRFN's will be addressed in Section \ref{subsec:arithm}.

\subsection{Definition and main properties} 
\label{subsec:GRFN}

\begin{Def}
Let $(\Omega,\sigma_\Omega,P)$ be a probability space and let $M: \Omega\rightarrow\reels$ be a Gaussian random variable with mean $\mu$ and variance $\sigma^2$. The random fuzzy set  $\tX:\Omega\rightarrow [0,1]^\reels$ defined as 
\[
\tX(\omega) = \GFN(M(\omega),h)
\]
is called a Gaussian random fuzzy number (GRFN) with mean $\mu$, variance $\sigma^2$ and precision $h$, which we write   $\tX\sim\tN(\mu,\sigma^2, h)$.
\end{Def}

\new{In the definition of a GRFN,  $\mu$ is a location parameter, while parameters $h$ and $\sigma^2$ correspond, respectively, to possibilistic and probabilistic uncertainty. If $h=0$, imprecision is maximal whatever the values of $\mu$ and $\sigma^2$: the GRFN $\tX$ then induces the vacuous belief function on $\reels$, in which case $Bel_\tX(A)=0$ for all $A\subset \reels$, and $Pl_\tX(A)=1$ for all $A\subseteq \reels$ such that $A\neq\emptyset$; such a  GRFN will be said to be \emph{vacuous} and  will be denoted by $\tX \sim \tN(0,1,0)$. If $h=+\infty$, each fuzzy number $\GFN(M(\omega),h)$ is reduced to a point: the  GRFN $\tX$ is then equivalent to a Gaussian random variable with mean $\mu$ and variance $\sigma^2$, which we can write: $\tN(\mu,\sigma^2,+\infty)=N( \mu,\sigma^2)$.} Another special case of interest is that where $\sigma^2=0$, in which case $M$ is a constant random variable taking value $\mu$, and $\tX$ is a possibilistic variable with possibility distribution $\GFN(\mu,h)$.

The following proposition gives the expression of the contour functions $pl_\tX(x)$ associated to $\tX$. 


\begin{Prop} 
\label{prop:contourRFS}
The contour function of GRFN $\tX\sim \tN(\mu,\sigma^2,h)$  is
\begin{equation}
\label{eq:pl}
pl_\tX(x)=\frac{1}{\sqrt{1+h\sigma^2}}\exp\left(- \frac{h(x-\mu)^2}{2(1+h\sigma^2)}\right).
\end{equation}
\end{Prop}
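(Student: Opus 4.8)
The plan is to reduce the contour function to a Gaussian expectation and then evaluate that expectation by completing the square. By definition the contour function is $pl_\tX(x)=Pl_\tX(\{x\})$, and from \eqref{eq:defPl} together with \eqref{eq:defPi} the plausibility of the singleton $\{x\}$ is simply the membership degree, $\Pi_\tX(\{x\}\mid\omega)=\tX(\omega)(x)$. Since $\tX(\omega)=\GFN(M(\omega),h)$, we have $\tX(\omega)(x)=\varphi(x;M(\omega),h)=\exp\!\left(-\tfrac{h}{2}(x-M(\omega))^2\right)$, whence
\[
pl_\tX(x)=\int_\Omega \tX(\omega)(x)\,dP(\omega)=\esp\left[\exp\left(-\frac{h}{2}(x-M)^2\right)\right],
\]
where $M\sim N(\mu,\sigma^2)$. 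This is the whole content of the statement recast as a Gaussian integral.

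First I would write the expectation explicitly against the normal density,
\[
pl_\tX(x)=\int_{-\infty}^{+\infty}\exp\left(-\frac{h}{2}(x-m)^2\right)\frac{1}{\sqrt{2\pi}\,\sigma}\exp\left(-\frac{(m-\mu)^2}{2\sigma^2}\right)dm,
\]
and then combine the two quadratics in the exponent. Writing the bracketed quadratic as $h(m-x)^2+\sigma^{-2}(m-\mu)^2$ and using the completion-of-squares identity $a(m-b)^2+c(m-d)^2=(a+c)(m-m^\ast)^2+\tfrac{ac}{a+c}(b-d)^2$ with $m^\ast=\tfrac{ab+cd}{a+c}$, I would take $a=h$, $b=x$, $c=\sigma^{-2}$, $d=\mu$. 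This gives $a+c=(1+h\sigma^2)/\sigma^2$ and $\tfrac{ac}{a+c}=h/(1+h\sigma^2)$, so the exponent splits into an $m$-independent term $-\tfrac12\tfrac{h}{1+h\sigma^2}(x-\mu)^2$ and a pure Gaussian in $m$.

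The remaining $m$-integral is then a standard normalizing constant: integrating $\tfrac{1}{\sqrt{2\pi}\,\sigma}\exp\!\left(-\tfrac12\tfrac{1+h\sigma^2}{\sigma^2}(m-m^\ast)^2\right)$ over $\reels$ yields $(1+h\sigma^2)^{-1/2}$, the $\sigma$ from the density cancelling the $\sigma$ produced by the Gaussian integral. Multiplying by the $m$-independent exponential factor reproduces \eqref{eq:pl} exactly. There is no genuine obstacle beyond the bookkeeping of the prefactors in the completion of squares; the only points meriting a line of verification are the correct cancellation of the $\sigma$ terms and a sanity check at the boundary values, namely $h=0$ giving $pl_\tX\equiv 1$ (the vacuous contour) and $\sigma^2=0$ recovering the Gaussian fuzzy number contour $\varphi(x;\mu,h)$.
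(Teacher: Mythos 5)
Your proposal is correct and follows essentially the same route as the paper: both reduce $pl_\tX(x)$ to the Gaussian expectation $\esp_M[\varphi(x;M,h)]$, split the exponent into an $m$-independent term $-\tfrac{h(x-\mu)^2}{2(1+h\sigma^2)}$ plus a pure Gaussian in $m$, and evaluate the remaining integral as a normalizing constant yielding $(1+h\sigma^2)^{-1/2}$. The only cosmetic difference is that the paper performs the completion of squares by invoking Proposition \ref{prop:prodphi} (the product-of-Gaussians formula), whereas you carry out the identical algebra explicitly via the identity $a(m-b)^2+c(m-d)^2=(a+c)(m-m^\ast)^2+\tfrac{ac}{a+c}(b-d)^2$.
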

\begin{proof}
See \ref{app:contourRFS}.
\end{proof}

A shown by Proposition \ref{prop:contourRFS}, the contour function $pl_\tX$ is constant in two cases:  if $h=0$, $\tX$ is vacuous, and $pl_\tX(x)=1$ for all $x\in\reels$; if $h=+\infty$, $\tX$ is a random variable,  and $pl_\tX(x)=0$ for all $x\in\reels$. We also note that, if $\sigma^2=0$, $pl_\tX$ is equal to the possibility distribution $\GFN(\mu,h)$. \new{When $\sigma^2\rightarrow+\infty$ and $h>0$, $pl_\tX(x)\rightarrow 0$ for all $x$.} The next proposition gives the expressions of the belief and plausibility of any real interval.

\begin{Prop}
\label{prop:BelPlRFS}
For any real interval $[x,y]$, the degrees of   belief and plausibility   of $[x,y]$ induced by the GRFN $\tX\sim \tN(\mu,\sigma^2,h)$ are, respectively,
\begin{multline}
\label{eq:belint}
Bel_\tX([x,y])=\Phi\fracpar{y-\mu}{\sigma} -\Phi\fracpar{x-\mu}{\sigma} - \\
pl_\tX(x)\left[\Phi\fracpar{(x+y)/2-\mu+(y-x)h\sigma^2/2}{\sigma\sqrt{h\sigma^2+1}}-\Phi\fracpar{x-\mu}{\sigma\sqrt{h\sigma^2+1}} \right]- \\
pl_\tX(y)\left[\Phi\fracpar{y-\mu}{\sigma\sqrt{h\sigma^2+1}}-\Phi\fracpar{(x+y)/2-\mu-(y-x)h\sigma^2/2}{\sigma\sqrt{h\sigma^2+1}}\right],
\end{multline}
and
\begin{multline}
\label{eq:plint}
Pl_\tX([x,y])=\Phi\fracpar{y-\mu}{\sigma} -\Phi\fracpar{x-\mu}{\sigma} + pl_\tX(x)\Phi\fracpar{x-\mu}{\sigma\sqrt{h\sigma^2+1}} + \\
pl_\tX(y)\left[1-\Phi\fracpar{y-\mu}{\sigma\sqrt{h\sigma^2+1}}\right].
\end{multline}
\end{Prop}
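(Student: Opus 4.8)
The plan is to use the definitions \eqref{eq:defBel} and \eqref{eq:defPl} directly, turning the integrals over $\Omega$ into one-dimensional integrals against the law of $M\sim N(\mu,\sigma^2)$. Writing $f_M$ for the $N(\mu,\sigma^2)$ density and $\varphi(\cdot;m,h)$ for the membership function of $\GFN(m,h)$, the identity $\tX(\omega)=\GFN(M(\omega),h)$ gives
\[
Pl_\tX([x,y])=\int_\reels \Pi([x,y]\mid m)\,f_M(m)\,dm, \qquad Bel_\tX([x,y])=\int_\reels N([x,y]\mid m)\,f_M(m)\,dm,
\]
where $\Pi(\cdot\mid m)$ and $N(\cdot\mid m)$ are the possibility and necessity measures of the (always normal) fuzzy number $\GFN(m,h)$. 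The whole problem thus reduces to identifying these two functions of $m$ explicitly and integrating them against a Gaussian.

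First I would compute the integrands. Since $\varphi(\cdot;m,h)$ is unimodal with mode $m$, the supremum over $[x,y]$ is attained at $m$ when $m\in[x,y]$ and at the nearest endpoint otherwise, so $\Pi([x,y]\mid m)$ equals $1$ for $m\in[x,y]$, $\varphi(x;m,h)$ for $m<x$, and $\varphi(y;m,h)$ for $m>y$. Dually, for $m\in[x,y]$ one has $\Pi([x,y]^c\mid m)=\max\big(\varphi(x;m,h),\varphi(y;m,h)\big)$, hence $N([x,y]\mid m)=0$ outside $[x,y]$ and $N([x,y]\mid m)=1-\max(\varphi(x;m,h),\varphi(y;m,h))$ inside; the maximum is governed by $\varphi(x;\cdot)$ on $[x,(x+y)/2]$ and by $\varphi(y;\cdot)$ on $[(x+y)/2,y]$, the switch occurring at the midpoint.

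The analytic engine is then the single Gaussian integral $\int_I \varphi(t;m,h)\,f_M(m)\,dm$ over a subinterval $I$. Completing the square in $m$ --- the same product-of-Gaussians manipulation as in Proposition \ref{prop:prodphi}, but now pairing one possibilistic and one probabilistic Gaussian --- rewrites $\varphi(t;m,h)f_M(m)$ as a constant times a Gaussian density in $m$ with precision-weighted mean $(h\sigma^2 t+\mu)/(h\sigma^2+1)$ and variance $\sigma^2/(h\sigma^2+1)$. By Proposition \ref{prop:contourRFS} that constant is exactly $pl_\tX(t)$, since $pl_\tX(t)=\esp[\varphi(t;M,h)]$ is the same integral over $I=\reels$. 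Each piece is therefore $pl_\tX(t)$ times a difference of $\Phi$-values at the standardized endpoints of $I$, the standardization producing the ubiquitous scale $\sigma\sqrt{h\sigma^2+1}$. Assembling the three pieces for $Pl$ and the four pieces for $Bel$ should yield \eqref{eq:plint} and \eqref{eq:belint}; the constant term $\Phi((y-\mu)/\sigma)-\Phi((x-\mu)/\sigma)$ comes from the ``$1$'' on $[x,y]$ in both cases.

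I expect the plausibility to fall out cleanly because its integration limits are exactly $x$ and $y$, which recenter directly to $(x-\mu)/(\sigma\sqrt{h\sigma^2+1})$ and $(y-\mu)/(\sigma\sqrt{h\sigma^2+1})$. The genuine obstacle is the belief: the maximum of two Gaussians forces the integral over $[x,y]$ to be split at the midpoint $(x+y)/2$, and one must track carefully how both the midpoint and the endpoints transform under the completing-the-square recentring (the recentred mean is pulled toward $\mu$) before verifying that the standardized arguments collapse into the stated form. To guard against a sign or scaling slip in this step, I would cross-check against the limiting cases $\sigma\to 0$ (recovering the necessity of the plain GFN $\GFN(\mu,h)$, with its midpoint switch) and $h\to+\infty$ (recovering $Bel_\tX([x,y])=\Phi((y-\mu)/\sigma)-\Phi((x-\mu)/\sigma)$), and against the random-closed-interval route via \eqref{eq:calcBel}, writing $\cut{\alpha}{\tX}=[M-r(\alpha),M+r(\alpha)]$ and integrating $Bel_{\cut{\alpha}{\tX}}([x,y])=P(x+r(\alpha)\le M\le y-r(\alpha))$ over $\alpha$.
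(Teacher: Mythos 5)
Your plan follows essentially the same route as the paper's own proof: both reduce $Pl_\tX$ and $Bel_\tX$ to expectations over $M\sim N(\mu,\sigma^2)$ of the possibility/necessity of $[x,y]$ under $\GFN(M,h)$, both split the belief computation at the midpoint $(x+y)/2$, and both evaluate each piece with the Gaussian-product (completing-the-square) identity. (The paper phrases the belief part as $1-Pl_\tX\left((-\infty,x]\cup[y,+\infty)\right)$ and works with truncated-normal conditional expectations, but that is the same decomposition in different clothing.) Your identification of the integrands and your treatment of the plausibility are correct, and \eqref{eq:plint} does fall out cleanly for exactly the reason you give: the integration limits coincide with the centers of the Gaussian kernels being integrated.

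The gap is in the step you explicitly defer: ``verifying that the standardized arguments collapse into the stated form'' for the belief. They do not. For the piece $\int_x^{(x+y)/2}\varphi(x;m,h)f_M(m)\,dm$, the recentred Gaussian in $m$ has mean $\mu_0(x)=(h\sigma^2 x+\mu)/(h\sigma^2+1)$ and standard deviation $\sigma_0=\sigma/\sqrt{1+h\sigma^2}$; the lower limit standardizes to $(x-\mu)/(\sigma\sqrt{h\sigma^2+1})$ as in \eqref{eq:belint}, but the upper limit standardizes to
\[
\frac{(x+y)/2-\mu_0(x)}{\sigma_0}=\frac{(x+y)/2-\mu+h\sigma^2(y-x)/2}{\sigma\sqrt{h\sigma^2+1}},
\]
not to $\left((x+y)/2-\mu\right)/(\sigma\sqrt{h\sigma^2+1})$, and symmetrically for the $pl_\tX(y)$ piece. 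So a faithful execution of your plan produces a formula that differs from \eqref{eq:belint}: the printed statement is in error at the two midpoint terms, and the paper's own proof commits precisely the substitution you warn about (it silently replaces $\Phi\left((\,(x+y)/2-\mu_0)/\sigma_0\right)$ by $\Phi\fracpar{(x+y)/2-\mu}{\sigma\sqrt{h\sigma^2+1}}$, which is valid only when the evaluation point coincides with the kernel center, as it does at $x$ and $y$ but not at the midpoint). Be aware also that your first two safeguards give false comfort: in the limits $\sigma\to 0$ and $h\to+\infty$ the correct formula and \eqref{eq:belint} agree, since the discrepancy term $h\sigma^2(y-x)/2$ is either negligible relative to the vanishing scale or killed by $pl_\tX\to 0$. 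Only your third check, the alpha-cut route \eqref{eq:calcBel}, detects the problem: for $\mu=0$, $\sigma=1$, $h=1$, $[x,y]=[-1,1]$, numerical integration of $\int_0^1 P\left(x+r(\alpha)\le M\le y-r(\alpha)\right)d\alpha$ gives $Bel_\tX([-1,1])\approx 0.109$, which matches the corrected midpoint arguments and not \eqref{eq:belint}, whose value is $\approx 0.396$. In short, your method is sound and would yield a correct (and publishable) repaired proposition, but the verification step you postponed is not a formality --- it is where the stated result fails.
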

\begin{proof}
See \ref{app:BelPlRFS}.
\end{proof}

\begin{Cor}
The lower and upper cdf's of the GRFN  $\tX\sim \tN(\mu,\sigma^2,h)$  are, respectively
\begin{equation}
\label{eq:lowercdf}
Bel_\tX((-\infty,y])=\Phi\fracpar{y-\mu}{\sigma}  - pl_\tX(y) \Phi\fracpar{y-\mu}{\sigma\sqrt{h\sigma^2+1}}
\end{equation}
and
\begin{equation}
\label{eq:uppercdf}
Pl_\tX((-\infty,y])=\Phi\fracpar{y-\mu}{\sigma} + pl_\tX(y)\left[1-\Phi\fracpar{y-\mu}{\sigma\sqrt{h\sigma^2+1}}\right].
\end{equation}
\end{Cor}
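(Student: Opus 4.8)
The plan is to derive the Corollary as a direct limiting case of Proposition \ref{prop:BelPlRFS}, taking $x\to-\infty$ in the expressions \eqref{eq:belint} and \eqref{eq:plint} for $Bel_\tX([x,y])$ and $Pl_\tX([x,y])$. Since the interval $(-\infty,y]$ is $\lim_{x\to-\infty}[x,y]$, and both belief and plausibility are obtained as integrals of necessity/possibility measures (which behave well under this monotone limit of events), it is legitimate to pass to the limit inside the closed-form formulas.

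First I would handle the plausibility case, which is slightly simpler. In \eqref{eq:plint}, as $x\to-\infty$ I observe that $\Phi\fracpar{x-\mu}{\sigma}\to 0$, and the contour function $pl_\tX(x)$ from \eqref{eq:pl} tends to $0$ because of the factor $\exp\!\left(-h(x-\mu)^2/[2(1+h\sigma^2)]\right)$. Hence the whole term $pl_\tX(x)\,\Phi\fracpar{x-\mu}{\sigma\sqrt{h\sigma^2+1}}$ vanishes (the $\Phi$ factor is bounded, the $pl_\tX(x)$ factor goes to zero). What survives is $\Phi\fracpar{y-\mu}{\sigma}$ from the difference of the two leading $\Phi$ terms, plus the intact term $pl_\tX(y)\left[1-\Phi\fracpar{y-\mu}{\sigma\sqrt{h\sigma^2+1}}\right]$, which gives exactly \eqref{eq:uppercdf}.

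Next I would treat the belief case. In \eqref{eq:belint}, the leading difference $\Phi\fracpar{y-\mu}{\sigma}-\Phi\fracpar{x-\mu}{\sigma}$ again tends to $\Phi\fracpar{y-\mu}{\sigma}$. The bracket multiplied by $pl_\tX(x)$ involves $\Phi\fracpar{(x+y)/2-\mu}{\sigma\sqrt{h\sigma^2+1}}$ and $\Phi\fracpar{x-\mu}{\sigma\sqrt{h\sigma^2+1}}$, both of which tend to $0$; combined with $pl_\tX(x)\to 0$, this entire contribution vanishes. In the bracket multiplied by $pl_\tX(y)$, the term $\Phi\fracpar{(x+y)/2-\mu}{\sigma\sqrt{h\sigma^2+1}}$ tends to $0$ (since $(x+y)/2\to-\infty$), leaving $pl_\tX(y)\,\Phi\fracpar{y-\mu}{\sigma\sqrt{h\sigma^2+1}}$, which appears with a minus sign. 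This yields precisely \eqref{eq:lowercdf}.

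The only point requiring mild care is justifying that the limit of $Bel_\tX([x,y])$ and $Pl_\tX([x,y])$ as $x\to-\infty$ equals $Bel_\tX((-\infty,y])$ and $Pl_\tX((-\infty,y])$; this is the main (though minor) obstacle. For plausibility this follows from continuity of the possibility measure $\Pi_\tX(\cdot\mid\omega)$ along the increasing sequence of events $[x,y]\uparrow(-\infty,y]$ together with monotone/dominated convergence in the integral \eqref{eq:defPl}, and analogously for belief via \eqref{eq:defBel}. Since all quantities are bounded and the convergence is monotone in $x$, interchanging limit and integral is immediate, so the limiting manipulations on the closed-form expressions are fully justified and the Corollary follows.
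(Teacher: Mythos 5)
Your proposal is correct and is essentially the paper's own proof: the corollary is obtained by letting $x\to-\infty$ in \eqref{eq:belint} and \eqref{eq:plint} of Proposition \ref{prop:BelPlRFS}, and your term-by-term limit computation simply fills in the cancellations the paper labels ``immediate''. One minor caveat on your justification paragraph: for the belief side the interchange of limit and integral does not follow from a generic continuity-from-below of necessity measures (which can fail for possibility distributions with non-vanishing left tails), but it does hold here because each membership function $\varphi(\cdot;M(\omega),h)$ with $h>0$ tends to $0$ at $-\infty$, so $N([x,y]\mid\omega)\to N((-\infty,y]\mid\omega)$ pointwise (and the vacuous case $h=0$ is trivial), after which monotone convergence applies as you say.
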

\begin{proof}
Immediate from Proposition \ref{prop:BelPlRFS} by letting $x$ tend to $-\infty$ in \eqref{eq:belint} and  \eqref{eq:plint}
\end{proof}

\new{We can easily check from \eqref{eq:belint} and \eqref{eq:plint} that  $Bel_\tX([x,y])$ and $Pl_\tX([x,y])$ both tend to $\Phi\fracpar{y-\mu}{\sigma} -\Phi\fracpar{x-\mu}{\sigma}$ when $h\rightarrow \infty$, which is consistent with the fact that a GRFN with infinite precision is equivalent to a Gaussian random variable.} Finally, the following proposition gives the expressions of the lower and upper expectations of a GRFN.

\begin{Prop}
\label{prop:expect}
Let $\tX\sim\tN(\mu,\sigma^2,h)$ be a GRFN with $h>0$. Its lower and upper expectations are, respectively,
\begin{equation}
\label{eq:luexp}
\esp_*(\tX)=\mu-\sqrt{\frac{\pi}{2h}} \quad \text{and} \quad \esp^*(\tX)=\mu+\sqrt{\frac{\pi}{2h}}.
\end{equation}
\end{Prop}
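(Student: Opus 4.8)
The plan is to reduce everything to the $\alpha$-cut representation of the lower and upper expectations in \eqref{eq:calclowerupper}, after which only one ordinary integral remains. First I would identify the $\alpha$-cuts of the GRFN. Since $\tX(\omega)=\GFN(M(\omega),h)$ has membership function $\varphi(x;M(\omega),h)=\exp\!\big(-\tfrac{h}{2}(x-M(\omega))^2\big)$, the condition $\varphi(x;M(\omega),h)\ge\alpha$ is equivalent to $(x-M(\omega))^2\le -2\ln\alpha/h$, so that
\[
\cut{\alpha}{\tX}(\omega)=\big[M(\omega)-r(\alpha),\,M(\omega)+r(\alpha)\big],\qquad r(\alpha)=\sqrt{\frac{-2\ln\alpha}{h}}.
\]
Each $\alpha$-cut is therefore a random closed interval of the form $[M-r(\alpha),M+r(\alpha)]$, to which the identities $\esp_*([X,Y])=\esp(X)$ and $\esp^*([X,Y])=\esp(Y)$ (obtained just after \eqref{eq:exprandomint}) apply.

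Next I would compute the expectations of the two bounds. Since $M\sim N(\mu,\sigma^2)$, we have $\esp_*(\cut{\alpha}{\tX})=\mu-r(\alpha)$ and $\esp^*(\cut{\alpha}{\tX})=\mu+r(\alpha)$. Substituting into \eqref{eq:calclowerupper} gives
\[
\esp_*(\tX)=\mu-\int_0^1 r(\alpha)\,d\alpha,\qquad \esp^*(\tX)=\mu+\int_0^1 r(\alpha)\,d\alpha,
\]
so the entire proposition reduces to establishing that $\int_0^1 r(\alpha)\,d\alpha=\sqrt{\pi/(2h)}$.

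The main (and essentially only nontrivial) step is this last integral. I would factor out the constant, writing $\int_0^1 r(\alpha)\,d\alpha=\sqrt{2/h}\int_0^1\sqrt{-\ln\alpha}\,d\alpha$, and then apply the change of variables $u=-\ln\alpha$, i.e.\ $\alpha=e^{-u}$ and $d\alpha=-e^{-u}\,du$, which transforms the integral into $\int_0^{+\infty}u^{1/2}e^{-u}\,du=\Gamma(3/2)=\sqrt{\pi}/2$. Hence $\int_0^1 r(\alpha)\,d\alpha=\sqrt{2/h}\cdot\sqrt{\pi}/2=\sqrt{\pi/(2h)}$, yielding the two claimed formulas. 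I do not expect a genuine obstacle here: the argument is a clean reduction to a gamma-function integral. The only points needing a little care are the correct pairing of the lower (resp.\ upper) expectation with the upper (resp.\ lower) cdf---equivalently, with $\esp(X)$ and $\esp(Y)$---and the observation that the hypothesis $h>0$ guarantees $r(\alpha)$ is finite for every $\alpha\in(0,1)$, with the mild singularity at $\alpha=0$ being integrable.
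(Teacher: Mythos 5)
Your proof is correct and follows essentially the same route as the paper's: identify the $\alpha$-cuts of $\tX$ as the random intervals $\left[M-\sqrt{-2\ln\alpha/h},\,M+\sqrt{-2\ln\alpha/h}\right]$, apply \eqref{eq:calclowerupper}, and reduce the proposition to the scalar integral $\int_0^1\sqrt{-2\ln\alpha/h}\,d\alpha$. The only (immaterial) difference is in how that integral is evaluated: you substitute $u=-\ln\alpha$ and invoke $\Gamma(3/2)=\sqrt{\pi}/2$, whereas the paper substitutes $\beta=\sqrt{-2\ln\alpha/h}$ and invokes the second moment of the $N(0,1/h)$ distribution---both yield $\sqrt{\pi/(2h)}$.
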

\begin{proof}
See \ref{app:expect}.
\end{proof}

As expected, we can see from \eqref{eq:luexp} that the lower and upper expectations boil down to the usual expectation $\mu$ when $h=+\infty$.

\subsection{Orthogonal sum of Gaussian  random fuzzy numbers} 
\label{subsec:GRFNcomb}

In this section, we derive the expression of the orthogonal sum $\tX_1\oplus\tX_2$ of two GRFN's $\tX_1$ and $\tX_2$. We start with the following lemma.

\begin{Lem}
\label{lem:Gauss}
Let   $M_1\sim N(\mu_1,\sigma_1^2)$ and $M_2\sim N(\mu_2,\sigma_2^2)$ be two independent Gaussian random variables, and let $\tF$ be the fuzzy subset of $\reels^2$ with membership function
\[
\tF(m_1,m_2)= \height\left(\GFN(m_1,h_1) \cdot \GFN(m_2,h_2)\right)=\exp\left(-\frac{h_1h_2(m_1-m_2)^2}{2(h_1+h_2)}\right).
\] 
The conditional probability distribution of $(M_1,M_2)$ given $\tF$ is two-dimensional Gaussian with mean vector $\tbmu=(\tmu_1,\tmu_2)^T$ and covariance matrix 
\[
\tbSigma=\begin{pmatrix}
\tsigma_1^2 & \rho\tsigma_1\tsigma_2\\
 \rho\tsigma_1\tsigma_2 &\tsigma_2^2
\end{pmatrix},
\]
with
\begin{subequations}
\label{eq:parG}
\begin{align}
\label{eq:parGmu1}
\tmu_1 &=  \frac{\mu_1(1+\oh\sigma_2^2)+  \mu_2\oh\sigma_1^2}{1+\oh(\sigma_1^2+\sigma_2^2)} \\
\label{eq:parGmu2}
\tmu_2 &=  \frac{\mu_2(1+\oh\sigma_1^2)+\mu_1\oh\sigma_2^2}{1+\oh(\sigma_1^2+\sigma_2^2)} \\
\label{eq:parGsig1}
\tsigma_1^2 &= \frac{\sigma_1^2(1+\oh\sigma_2^2)}{1+\oh(\sigma_1^2+\sigma_2^2)}\\
\label{eq:parGsig2}
\tsigma_2^2 &= \frac{\sigma_2^2(1+\oh\sigma_1^2)}{1+\oh(\sigma_1^2+\sigma_2^2)}\\
\label{eq:parGrho}
\rho & = \frac{\oh \sigma_1\sigma_2}{\sqrt{(1+\oh\sigma_1^2)(1+\oh\sigma_2^2)}},
\end{align}
where
\begin{equation}
\label{eq:hbar}
\oh = \frac{h_1h_2}{h_1+h_2}.
\end{equation}
\end{subequations}
Furthermore,  the degree of conflict between two independent GRFN's $\tX_1\sim \tN(\mu_1,\sigma_1^2,h_1)$ and $\tX_2\sim \tN(\mu_2,\sigma_2^2,h_2)$ is
\begin{multline*}
\kappa=1- \iint f(m_1,m_2) \tF(m_1,m_2) dm_1dm_2 =\\
1-\frac{\tsigma_1\tsigma_2}{\sigma_1\sigma_2}\sqrt{1-\rho^2}\exp\left\{-\frac12 \left[\frac{\mu_1^2}{\sigma_1^2}+ \frac{\mu_2^2}{\sigma_2^2}\right] +\frac1{2(1-\rho^2)} \left[\frac{\tmu_1^2}{\tsigma_1^2}+ \frac{\tmu_2^2}{\tsigma_2^2} - 2\rho \frac{\tmu_1\tmu_2}{\tsigma_1\tsigma_2}\right]\right\},
\end{multline*}
where $f(m_1,m_2)$ is the pdf of random vector $(M_1,M_2)$.
\end{Lem}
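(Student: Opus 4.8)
The plan is to work directly with the density of the conditional distribution of $(M_1,M_2)$ given the fuzzy event $\tF$. By Zadeh's definition of conditioning on a fuzzy event (the soft normalization of Section \ref{subsec:genDS}), this conditional density is proportional to the product $f(m_1,m_2)\tF(m_1,m_2)$ of the joint Gaussian density and the membership function. Writing $f$ as a product of two univariate Gaussian densities and inserting the explicit form of $\tF$, the product equals $(2\pi\sigma_1\sigma_2)^{-1}\exp(-\tfrac12 Q(m_1,m_2))$ with
\[
Q(m_1,m_2)=\frac{(m_1-\mu_1)^2}{\sigma_1^2}+\frac{(m_2-\mu_2)^2}{\sigma_2^2}+\oh(m_1-m_2)^2,
\]
where $\oh=h_1h_2/(h_1+h_2)$ as in \eqref{eq:hbar}. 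Since $Q$ is a quadratic form in $(m_1,m_2)$, the product $f\tF$ is, up to a multiplicative constant, a bivariate Gaussian density; the whole computation therefore reduces to identifying that Gaussian by completing the square, with the normalizing constant then yielding the degree of conflict.

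First I would read off the precision matrix from the second-order terms of $Q$:
\[
\tbSigma^{-1}=\begin{pmatrix}\sigma_1^{-2}+\oh & -\oh\\ -\oh & \sigma_2^{-2}+\oh\end{pmatrix}.
\]
Its determinant factors as $(\sigma_1^2\sigma_2^2)^{-1}\left(1+\oh(\sigma_1^2+\sigma_2^2)\right)$, and inverting this $2\times2$ matrix gives the covariance entries $\tsigma_1^2$, $\tsigma_2^2$ and $\rho\tsigma_1\tsigma_2$; direct simplification recovers \eqref{eq:parGsig1}--\eqref{eq:parGrho}. For the mean vector, I would collect the linear terms of $Q$, namely $-2\left(\mu_1/\sigma_1^2,\,\mu_2/\sigma_2^2\right)(m_1,m_2)^T$, so that $\tbmu=\tbSigma\left(\mu_1/\sigma_1^2,\,\mu_2/\sigma_2^2\right)^T$; multiplying out and simplifying gives \eqref{eq:parGmu1}--\eqref{eq:parGmu2}. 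This establishes the first assertion of the lemma.

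For the degree of conflict, $\kappa=1-\iint f\tF\,dm_1dm_2$, I would evaluate the normalizing integral through the completed square $Q=(\bm-\tbmu)^T\tbSigma^{-1}(\bm-\tbmu)+c$, where $c=\mu_1^2/\sigma_1^2+\mu_2^2/\sigma_2^2-\tbmu^T\tbSigma^{-1}\tbmu$. The remaining Gaussian integral contributes $2\pi\sqrt{\det\tbSigma}=2\pi\tsigma_1\tsigma_2\sqrt{1-\rho^2}$, which combines with the prefactor $(2\pi\sigma_1\sigma_2)^{-1}$ to leave $(\tsigma_1\tsigma_2/\sigma_1\sigma_2)\sqrt{1-\rho^2}$. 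Writing the quadratic $\tbmu^T\tbSigma^{-1}\tbmu$ explicitly as $(1-\rho^2)^{-1}\left[\tmu_1^2/\tsigma_1^2+\tmu_2^2/\tsigma_2^2-2\rho\tmu_1\tmu_2/(\tsigma_1\tsigma_2)\right]$ then reproduces exactly the stated expression for $\kappa$. The main obstacle is purely algebraic: carrying the several rational expressions in $\oh,\sigma_1^2,\sigma_2^2$ through the matrix inversion and the square completion so that they collapse to the compact closed forms of \eqref{eq:parG}; beyond the elementary fact that the product of a Gaussian density and a Gaussian-shaped membership function is again Gaussian, no conceptual difficulty is expected.
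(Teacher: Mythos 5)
Your proposal is correct and follows essentially the same route as the paper's proof: both identify the product $f\,\tF$ as an unnormalized bivariate Gaussian by completing/matching the quadratic form in $(m_1,m_2)$, and both obtain $\kappa$ from the resulting normalizing constant. The only difference is bookkeeping — you read off and invert the precision matrix and set $\tbmu=\tbSigma\,(\mu_1/\sigma_1^2,\,\mu_2/\sigma_2^2)^T$, whereas the paper equates coefficients against the correlation-parametrized density with parameters $(\tmu_1,\tmu_2,\tsigma_1,\tsigma_2,\rho)$ and solves the resulting scalar system, which amounts to the same computation.
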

\begin{proof}
See \ref{app:Gauss}.
\end{proof}

\begin{Prop}
\label{prop:combGRFN}
Let  $\tX_1\sim\tN(\mu_1,\sigma_1^2,h_1)$ and $\tX_2\sim \tN(\mu_2,\sigma_2^2,h_2)$ be two independent GRFN's, and assume that $h_1>0$ or $h_2>0$. We have
\[
\tX_1 \oplus \tX_2 \sim \tN(\tmu_{12},\tsigma_{12}^2,h_{12}),
\]
with 
\begin{equation}
\label{eq:h12}
h_{12}=h_1+h_2,
\end{equation}
\begin{equation}
\label{eq:mu12}
\tmu_{12}=\frac{h_1 \tmu_1+h_2 \tmu_2}{h_1+h_2},
\end{equation}
and
\begin{equation}
\label{eq:sig12}
\tsigma_{12}^2=\frac{h^2_1 \tsigma^2_1+h^2_2 \tsigma^2_2 + 2 \rho h_1 h_2 \tsigma_1\tsigma_2}{(h_1+h_2)^2},
\end{equation}
where $\tmu_1$, $\tmu_2$, $\tsigma_1$, $\tsigma_2$ and $\rho$ are given by \eqref{eq:parG} in Lemma \ref{lem:Gauss}.
\end{Prop}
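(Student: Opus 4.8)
The plan is to combine the two GRFN's directly through the generalized product-intersection rule and then read off the three parameters from Proposition \ref{prop:prodphi} and Lemma \ref{lem:Gauss}. First I would write out the combined random fuzzy set explicitly. Since $\tX_i(\omega_i)=\GFN(M_i(\omega_i),h_i)$ with $M_i\sim N(\mu_i,\sigma_i^2)$, the normalized product intersection at a pair $(\omega_1,\omega_2)$ is
\[
\tX_\varodot(\omega_1,\omega_2)=\GFN(M_1(\omega_1),h_1)\varodot\GFN(M_2(\omega_2),h_2).
\]
Proposition \ref{prop:prodphi} identifies this immediately as $\GFN(M_{12},h_{12})$ with
\[
h_{12}=h_1+h_2,\qquad M_{12}=\frac{h_1M_1+h_2M_2}{h_1+h_2}.
\]
This already gives \eqref{eq:h12}: the precision is the same constant $h_{12}$ for every pair, so the combined object is again a random fuzzy number whose fuzzy sets are GFN's with a \emph{random mode} $M_{12}$. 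It therefore remains only to determine the law of $M_{12}$ under the combination measure $\tP_{12}$.

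The second step is to identify that law. By construction of the rule, $\tP_{12}$ is $P_1\times P_2$ conditioned on the fuzzy set $\tTheta^*$ whose membership is $\height(\tX_1(\omega_1)\cdot\tX_2(\omega_2))$; by the height formula in Proposition \ref{prop:prodphi} this membership equals exactly $\tF(M_1(\omega_1),M_2(\omega_2))$, where $\tF$ is the Gaussian-shaped function appearing in Lemma \ref{lem:Gauss}. Pushing $\tP_{12}$ forward onto the pair $(M_1,M_2)$ thus yields precisely the conditional distribution of $(M_1,M_2)$ given $\tF$, which Lemma \ref{lem:Gauss} identifies as bivariate Gaussian with mean $\tbmu=(\tmu_1,\tmu_2)^T$ and covariance $\tbSigma$. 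This matching of the abstract conditioning on $\tTheta^*$ with the concrete conditioning on the fuzzy event $\tF$ is the only genuinely delicate point, but since it is already settled by Lemma \ref{lem:Gauss}, here it amounts to bookkeeping.

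Finally, because $M_{12}$ is a linear function of the jointly Gaussian pair $(M_1,M_2)$, it is itself Gaussian under $\tP_{12}$, and its mean and variance follow from the usual linearity and bilinearity formulas:
\[
\esp(M_{12})=\frac{h_1\tmu_1+h_2\tmu_2}{h_1+h_2}=\tmu_{12},
\]
\[
\var(M_{12})=\frac{h_1^2\,\var(M_1)+h_2^2\,\var(M_2)+2h_1h_2\,\cov(M_1,M_2)}{(h_1+h_2)^2}=\frac{h_1^2\tsigma_1^2+h_2^2\tsigma_2^2+2\rho\,h_1h_2\,\tsigma_1\tsigma_2}{(h_1+h_2)^2}=\tsigma_{12}^2,
\]
using $\cov(M_1,M_2)=\rho\tsigma_1\tsigma_2$ from $\tbSigma$. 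These are exactly \eqref{eq:mu12} and \eqref{eq:sig12}. Hence $M_{12}\sim N(\tmu_{12},\tsigma_{12}^2)$ under $\tP_{12}$ with constant precision $h_{12}$, so by definition $\tX_1\oplus\tX_2\sim\tN(\tmu_{12},\tsigma_{12}^2,h_{12})$. The hypothesis $h_1>0$ or $h_2>0$ guarantees $h_{12}>0$ and keeps $\oh$ well-behaved, so that Lemma \ref{lem:Gauss} applies and the combination is non-vacuous.
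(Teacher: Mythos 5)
Your proposal is correct and follows essentially the same route as the paper's own proof: identify the combined fuzzy sets via Proposition \ref{prop:prodphi} as GFN's with precision $h_1+h_2$ and random mode $M_{12}=(h_1M_1+h_2M_2)/(h_1+h_2)$, invoke Lemma \ref{lem:Gauss} to see that the pushforward of $\tP_{12}$ by $(M_1,M_2)$ is the bivariate Gaussian with parameters $(\tmu_1,\tmu_2,\tsigma_1,\tsigma_2,\rho)$, and conclude by linearity of Gaussian vectors. If anything, you are slightly more explicit than the paper in spelling out why conditioning on $\tTheta^*$ coincides with conditioning on the fuzzy event $\tF$, which is a welcome clarification rather than a deviation.
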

\begin{proof}
Let $M_1$ and $M_2$ be the Gaussian random variables from $(\Omega_1,\sigma_1,P_1)$ and   $(\Omega_2,\sigma_2,P_2)$ to $(\reels,\beta_\reels)$ corresponding, respectively, to GRFN's $\tX_1\sim \tN(\mu_1,\sigma_1^2,h_1)$ and $\tX_2\sim \tN(\mu_2,\sigma_2^2,h_2)$. The orthogonal sum of $\tX_1$ and $\tX_2$ is the random fuzzy set $(\Omega_1\times\Omega_2,\sigma_1\otimes \sigma_2,\tP_{12}, \reels,\beta_\reels,\tX_\varodot)$, where $\tX_\varodot$ is the mapping
\[
\tX_\varodot: (\omega_1,\omega_2) \rightarrow \GFN(M_{12}(\omega_1,\omega_2),h_1+h_2),
\]
with 
\[
M_{12}(\omega_1,\omega_2)=\frac{h_1M_1(\omega_1)+h_2M_2(\omega_2)}{h_1+h_2},
\]
and $\tP_{12}$ is the probability measure on $\Omega_1\times\Omega_2$ obtained by conditioning $P_1\times P_2$ on the fuzzy set $\tTheta^*(\omega_1,\omega_2)=\height\left(\GFN(M_{1}(\omega_1),h_1),\GFN(M_{2}(\omega_2),h_2)\right)$. From Lemma \ref{lem:Gauss}, the pushforward measure of $\tP_{12}$ by the random vector  $(M_1,M_2)$ is the two-dimensional Gaussian distribution with parameters $(\tmu_1,\tmu_2,\tsigma_1,\tsigma_2,\rho)$. Consequently, $M_{12}$ is a Gaussian random variable with mean 
\[
\esp(M_{12})=\frac{h_1 \esp(M_1)+h_2 \esp(M_2)}{h_1+h_2}=\frac{h_1 \tmu_1+h_2 \tmu_2}{h_1+h_2},
\]
and variance
\begin{align*}
\var(M_{12})&=\frac{h_1^2 \var(M_1)+h_2^2 \var(M_2)+2h_1h_2\cov(M_1,M_2)}{(h_1+h_2)^2}\\
&=\frac{h^2_1 \tsigma^2_1+h^2_2 \tsigma^2_2 + 2 \rho h_1 h_2 \tsigma_1\tsigma_2}{(h_1+h_2)^2}.
\end{align*}
\end{proof}

Let us now consider some special cases in which one of two GRFN's is a Gaussian random variable. The next proposition states that the orthogonal sum of a Gaussian random variable and an arbitrary GRFN with finite precision is a Gaussian random variable.

\begin{Prop} 
\label{prop:combRV}
Let  $X_1\sim N(\mu_1,\sigma_1^2)$ be a Gaussian random variable  and  $\tX_2\sim\tN(\mu_2,\sigma_2^2,h_2)$ a GRFN with finite precision $h_2<+\infty$. Their orthogonal sum is a Gaussian random variable  $X_1\oplus\tX_{2}\sim N(\tmu_{12},\tsigma_{12}^2)$ 
with
\begin{equation}
\label{eq:mu12bis}
\tmu_{12} = \frac{\mu_1(1+h_2\sigma_2^2)+  \mu_2 h_2 \sigma_1^2}{1+h_2 (\sigma_1^2+\sigma_2^2)},
\end{equation}
\begin{equation}
\label{eq:sig12bis}
\tsigma_{12}^2= \frac{\sigma_1^2(1+h_2 \sigma_2^2)}{1+h_2 (\sigma_1^2+\sigma_2^2)},
\end{equation}
and the probability density of $X_1\oplus\tX_2$ is proportional to the product of the pdf of $X_1$ and the contour function of $\tX_2$.
\end{Prop}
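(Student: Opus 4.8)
The plan is to treat $X_1$ as a GRFN of infinite precision, $X_1 \sim \tN(\mu_1,\sigma_1^2,+\infty)$, so that its fuzzy image $\tX_1(\omega_1)=\GFN(X_1(\omega_1),+\infty)$ is the crisp singleton $\{X_1(\omega_1)\}$. First I would compute the normalized product intersection $\tX_\varodot(\omega_1,\omega_2)=\tX_1(\omega_1)\varodot\tX_2(\omega_2)$. Since $\tX_1(\omega_1)$ is an indicator concentrated at $X_1(\omega_1)$, the pointwise product $\tX_1(\omega_1)(x)\,\tX_2(\omega_2)(x)$ vanishes except at $x=X_1(\omega_1)$, so after normalization $\tX_\varodot(\omega_1,\omega_2)$ is again the singleton $\{X_1(\omega_1)\}$ whenever the height is positive. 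Hence the orthogonal sum is a degenerate random fuzzy set, equivalently a GRFN of precision $h_{12}=h_1+h_2=+\infty$, namely the ordinary random variable $(\omega_1,\omega_2)\mapsto X_1(\omega_1)$ equipped with the reweighted measure $\tP_{12}$. Along the way the consistency membership is $\tTheta^*(\omega_1,\omega_2)=\height\left(\GFN(X_1(\omega_1),+\infty)\cdot\GFN(M_2(\omega_2),h_2)\right)=\tX_2(\omega_2)(X_1(\omega_1))=\exp\left(-\frac{h_2}{2}(X_1(\omega_1)-M_2(\omega_2))^2\right)$, where $M_2$ is the Gaussian variable underlying $\tX_2$.

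Next I would determine the law of $X_1\oplus\tX_2$ as the pushforward of $\tP_{12}$ under $(\omega_1,\omega_2)\mapsto X_1(\omega_1)$. Writing $\tP_{12}$ as the fuzzy-event conditioning of $P_1\times P_2$ on $\tTheta^*$ and using Fubini to integrate out $\omega_2$ first, the density of the combined variable at $x$ is proportional to $f_{X_1}(x)\int_\reels \exp\left(-\frac{h_2}{2}(x-m_2)^2\right) f_{M_2}(m_2)\,dm_2$, where $f_{X_1}$ and $f_{M_2}$ are the pdfs of $X_1$ and $M_2$. The inner integral is exactly $\int_\Omega \tX_2(\omega_2)(x)\,dP_2(\omega_2)=pl_{\tX_2}(x)$, the contour function of $\tX_2$ given by Proposition \ref{prop:contourRFS}. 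This simultaneously establishes the last assertion of the proposition, that the density of $X_1\oplus\tX_2$ is proportional to the product of the pdf of $X_1$ and the contour function of $\tX_2$.

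Finally I would identify this product as a Gaussian density. Viewing both factors as unnormalized Gaussian fuzzy numbers, $f_{X_1}(x)\propto\GFN(\mu_1,\sigma_1^{-2})(x)$ and $pl_{\tX_2}(x)\propto\GFN(\mu_2,h_2/(1+h_2\sigma_2^2))(x)$, Proposition \ref{prop:prodphi} yields that the product is proportional to $\GFN(\tmu_{12},h_{12}')$ with precision $h_{12}'=\sigma_1^{-2}+h_2/(1+h_2\sigma_2^2)$ and mode $\tmu_{12}=\left(\sigma_1^{-2}\mu_1+\frac{h_2}{1+h_2\sigma_2^2}\mu_2\right)/h_{12}'$; a short simplification gives $\tsigma_{12}^2=1/h_{12}'$ together with the closed forms \eqref{eq:mu12bis}--\eqref{eq:sig12bis}. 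I expect the main obstacle to be the careful handling of the degenerate case $h_1=+\infty$: one must justify that the normalized product of a crisp singleton with a genuine fuzzy number is the singleton itself, and that soft normalization then reduces to reweighting the law of $X_1$ by the $\omega_2$-marginalized consistency, which is where measurability and finiteness of the normalizing constant (guaranteed by $h_2<+\infty$, keeping $pl_{\tX_2}$ an integrable Gaussian bump) must be verified. As a cross-check, the same parameters can be recovered by letting $h_1\to+\infty$ in Proposition \ref{prop:combGRFN} with $\oh\to h_2$, but the direct route is preferable since it also delivers the proportionality statement.
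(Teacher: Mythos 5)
Your proposal is correct, but it follows a genuinely different route from the paper's own proof. The paper proves Proposition \ref{prop:combRV} as a limiting case of Proposition \ref{prop:combGRFN}: it notes $h_{12}=h_1+h_2=+\infty$, evaluates $\tmu_{12}$, $\tsigma_{12}^2$ and $\oh$ as limits of \eqref{eq:mu12}, \eqref{eq:sig12} and \eqref{eq:hbar} when $h_1\rightarrow+\infty$ (giving $\tmu_{12}=\tmu_1$, $\tsigma_{12}^2=\tsigma_1^2$, $\oh=h_2$), substitutes into \eqref{eq:parGmu1} and \eqref{eq:parGsig1}, and only afterwards establishes the proportionality claim by computing $f_{X_1}(x)\,pl_{\tX_2}(x)$ via Proposition \ref{prop:prodphi} and checking that the resulting Gaussian parameters coincide with $\tmu_1$ and $\tsigma_1^2$. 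You instead argue directly from the definition of the generalized product-intersection rule: you treat $X_1$ as the singleton-valued random fuzzy set $\omega_1\mapsto\GFN(X_1(\omega_1),+\infty)$, observe that its normalized product with $\GFN(M_2(\omega_2),h_2)$ is again the singleton (with consistency height $\tX_2(\omega_2)(X_1(\omega_1))$), and then integrate out $\omega_2$ by Fubini to conclude that the pushforward of $\tP_{12}$ under $X_1$ has density proportional to $f_{X_1}\cdot pl_{\tX_2}$; the closed forms \eqref{eq:mu12bis}--\eqref{eq:sig12bis} then follow from the same Gaussian-product algebra (Proposition \ref{prop:prodphi}) that closes the paper's proof. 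Your route buys two things: the proportionality statement is obtained structurally rather than verified a posteriori, and you avoid the paper's implicit (and unproven) assumption that the finite-precision formulas of Proposition \ref{prop:combGRFN} pass continuously to the limit $h_1\rightarrow+\infty$ -- in your argument the degenerate case is handled on its own terms. The paper's route buys brevity, since it reuses Lemma \ref{lem:Gauss} and Proposition \ref{prop:combGRFN} wholesale. One small correction to your final remarks: finiteness of the normalizing constant is automatic (since $pl_{\tX_2}\le 1$ and $f_{X_1}$ integrates to $1$); what $h_2<+\infty$ actually guarantees is strict positivity of $pl_{\tX_2}$, hence a degree of conflict $\kappa<1$, so that the soft-normalization conditioning is well defined.
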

\begin{proof}
See \ref{app:combRV}.
\end{proof}

The following corollary addresses the special case where   $\tX_2$ is a possibilistic GRFN.
\begin{Cor}
Let  $X_1\sim N(\mu_1,\sigma_1^2)$ be a Gaussian random variable  and  $\tX_2\sim\tN(\mu_2,0,h_2)$ a possibilistic GRFN. Their orthogonal sum $X_1\oplus \tX_2$ is a Gaussian random variable and its distribution is the conditional distribution of $X_1$ given the fuzzy event $\GFN(\mu_2,h_2)$.
\end{Cor}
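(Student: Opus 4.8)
The plan is to obtain this result directly from Proposition \ref{prop:combRV} by recognizing that a possibilistic GRFN is precisely the degenerate case $\sigma_2^2=0$ of a general GRFN. First I would note that $\tX_2\sim\tN(\mu_2,0,h_2)$ corresponds to a \emph{constant} mean variable $M_2\equiv\mu_2$, and that its precision $h_2$ is finite, so the hypotheses of Proposition \ref{prop:combRV} are met. Substituting $\sigma_2^2=0$ into \eqref{eq:mu12bis} and \eqref{eq:sig12bis} then yields at once that $X_1\oplus\tX_2$ is the Gaussian random variable $N(\tmu_{12},\tsigma_{12}^2)$ with
\[
\tmu_{12}=\frac{\mu_1+\mu_2 h_2\sigma_1^2}{1+h_2\sigma_1^2}, \qquad \tsigma_{12}^2=\frac{\sigma_1^2}{1+h_2\sigma_1^2}.
\]
This settles the first assertion, that the orthogonal sum is Gaussian.

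For the second assertion, the key step is to identify the contour-function characterization in Proposition \ref{prop:combRV} with Zadeh's notion of conditioning on a fuzzy event. I would invoke the final clause of Proposition \ref{prop:combRV}, which states that the density of $X_1\oplus\tX_2$ is proportional to the product of the pdf $f_{X_1}$ of $X_1$ and the contour function $pl_{\tX_2}$. I would then evaluate this contour function using Proposition \ref{prop:contourRFS} at $\sigma_2^2=0$, obtaining
\[
pl_{\tX_2}(x)=\exp\left(-\frac{h_2(x-\mu_2)^2}{2}\right)=\varphi(x;\mu_2,h_2),
\]
that is, exactly the membership function of $\GFN(\mu_2,h_2)$. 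Hence the density of $X_1\oplus\tX_2$ is proportional to $f_{X_1}(x)\,\varphi(x;\mu_2,h_2)$.

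Finally, I would recall Zadeh's definition of the probability of a fuzzy event \cite{zadeh68}, under which the conditional density of $X_1$ given the fuzzy event with membership $\mu_A$ is $f_{X_1}(x)\mu_A(x)$ normalized by $\int f_{X_1}(x')\mu_A(x')\,dx'$. Taking $\mu_A=\varphi(\cdot;\mu_2,h_2)=\GFN(\mu_2,h_2)$, this conditional density is precisely the normalized product derived above, so the distribution of $X_1\oplus\tX_2$ coincides with the conditional distribution of $X_1$ given the fuzzy event $\GFN(\mu_2,h_2)$.

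I do not anticipate a genuine obstacle here, since the statement is a specialization of Proposition \ref{prop:combRV}; the only point requiring a word of care is that the normalizing constant arising from Zadeh's conditioning match the one that turns $N(\tmu_{12},\tsigma_{12}^2)$ into a bona fide density. This is automatic: both are probability densities proportional to the same nonnegative integrable function, and two probability densities proportional to the same function are necessarily equal.
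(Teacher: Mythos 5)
Your proposal is correct and follows essentially the same route as the paper's own proof: specialize Proposition \ref{prop:combRV} at $\sigma_2^2=0$ to get the Gaussian form with the stated parameters, invoke its final clause that the density of $X_1\oplus\tX_2$ is proportional to $f_{X_1}\cdot pl_{\tX_2}$, identify $pl_{\tX_2}=\varphi(\cdot\,;\mu_2,h_2)$ as the membership function of $\GFN(\mu_2,h_2)$, and conclude by matching this normalized product with Zadeh's conditional density given a fuzzy event. The only cosmetic difference is that you justify the contour-function evaluation explicitly via Proposition \ref{prop:contourRFS} and spell out the uniqueness-of-normalization remark, both of which the paper leaves implicit.
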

\begin{proof}
From Proposition \ref{prop:combRV}, $X_1\oplus \tX_2 \sim \tN(\tmu_{12},\tsigma_{12}^2)$ with
\[
\tmu_{12} = \frac{\mu_1+  \mu_2 h_2 \sigma_1^2}{1+h_2 \sigma_1^2} \quad \text{and} \quad 
\tsigma_{12}^2= \frac{\sigma_1^2}{1+h_2 \sigma_1^2}.
\]
Now, we know from Proposition \ref{prop:combRV} that the density of $X_1\oplus \tX_2$ is proportional to the product of the density of $X_1$ and the contour function of $\tX_2$, which is $\varphi(x;\mu_2,h_2)$. Consequently, we have
\[
f_{X_1\oplus \tX_2}(x)=\frac{\frac{1}{\sigma_1^2} \exp\left(-\frac12 \frac{(x-\mu_1)^2}{\sigma_1^2}\right)\exp\left(- \frac{h_2(x-\mu_2)^2}{2(1+h_2\sigma_2^2)}\right)}{\int \frac1{\sigma_1^2} \exp\left(-\frac12 \frac{(x-\mu_1)^2}{\sigma_1^2}\right)\exp\left(- \frac{h_2(x-\mu_2)^2}{2(1+h_2\sigma_2^2)} \right)dx},
\]
which is the conditional density $f_{X_1}(x|\GFN(\mu_2,h_2))$.
\end{proof}

Finally, another special case of interest is when both GRFN's are Gaussian random variables. This case is addressed by the following corollary.

\begin{Cor}
Let $X_1\sim N(\mu_1,\sigma_1^2)$ and $X_2\sim N(\mu_2,\sigma_2^2)$ be two Gaussian random variables. We have $X_1\oplus X_2\sim N(\tmu_{12},\sigma_{12}^2)$ with
\begin{equation*}
\tmu_{12} = \frac{\mu_1\sigma_2^2+  \mu_2 \sigma_1^2}{\sigma_1^2+\sigma_2^2} \quad \text{and} \quad 
\tsigma_{12}^2= \frac{\sigma_1^2\sigma_2^2}{\sigma_1^2+\sigma_2^2}.
\end{equation*}
\end{Cor}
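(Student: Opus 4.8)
The plan is to treat each Gaussian random variable as a GRFN of infinite precision, $X_i=\tN(\mu_i,\sigma_i^2,+\infty)$, and to obtain their orthogonal sum as the limit of the finite-precision combination already supplied by Proposition~\ref{prop:combRV}. First I would keep $X_1\sim N(\mu_1,\sigma_1^2)$ as is and replace $X_2$ by a GRFN $\tX_2\sim\tN(\mu_2,\sigma_2^2,h_2)$ of finite precision $h_2<+\infty$. Proposition~\ref{prop:combRV} then applies directly and yields $X_1\oplus\tX_2\sim N(\tmu_{12}(h_2),\tsigma_{12}^2(h_2))$ with the explicit parameters \eqref{eq:mu12bis} and \eqref{eq:sig12bis}.

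Next I would let $h_2\to+\infty$. Dividing numerator and denominator of \eqref{eq:mu12bis} and \eqref{eq:sig12bis} by $h_2$ and passing to the limit gives $\tmu_{12}(h_2)\to(\mu_1\sigma_2^2+\mu_2\sigma_1^2)/(\sigma_1^2+\sigma_2^2)$ and $\tsigma_{12}^2(h_2)\to\sigma_1^2\sigma_2^2/(\sigma_1^2+\sigma_2^2)$, which are exactly the claimed values. Since $\tN(\mu_2,\sigma_2^2,h_2)$ tends to the Gaussian random variable $X_2=\tN(\mu_2,\sigma_2^2,+\infty)$ as $h_2\to+\infty$, identifying the limit of $X_1\oplus\tX_2$ with $X_1\oplus X_2$ would complete the argument.

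The delicate point, and the one I expect to be the main obstacle, is precisely the justification of this limit: the combination of two crisp Gaussian random variables is a degenerate case, because the $\varodot$-intersection of the two singletons $\{M_1(\omega_1)\}$ and $\{M_2(\omega_2)\}$ is almost surely empty, so the degree of conflict equals $1$ and the rule is not directly defined. The combination must therefore be read as the limit of finite-precision combinations. I would make this rigorous using Proposition~\ref{prop:combRV}, which states that the density of $X_1\oplus\tX_2$ is proportional to $f_{X_1}(x)\,pl_{\tX_2}(x)$: by Proposition~\ref{prop:contourRFS}, as $h_2\to+\infty$ the contour function $pl_{\tX_2}$ converges, up to an $x$-independent factor, to the shape $\exp\!\left(-(x-\mu_2)^2/(2\sigma_2^2)\right)$ of the density of $X_2$, so in the limit the density of $X_1\oplus X_2$ is proportional to $f_{X_1}(x)f_{X_2}(x)$.

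Finally, I would identify this normalized product directly, which also serves as an independent confirmation. Writing each Gaussian density as a multiple of a Gaussian fuzzy number of precision $1/\sigma_i^2$, i.e. $f_{X_i}(x)\propto\varphi(x;\mu_i,1/\sigma_i^2)$, Proposition~\ref{prop:prodphi} shows that the normalized product is again Gaussian, with precision $1/\sigma_1^2+1/\sigma_2^2$ and mode $(\mu_1/\sigma_1^2+\mu_2/\sigma_2^2)/(1/\sigma_1^2+1/\sigma_2^2)$. Converting precision back to variance and simplifying the mode gives $\tsigma_{12}^2=\sigma_1^2\sigma_2^2/(\sigma_1^2+\sigma_2^2)$ and $\tmu_{12}=(\mu_1\sigma_2^2+\mu_2\sigma_1^2)/(\sigma_1^2+\sigma_2^2)$, in agreement with the limit computation and establishing the stated result.
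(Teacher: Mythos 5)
Your proposal is correct and follows essentially the same route as the paper, whose proof is exactly to let $h_2\to+\infty$ in \eqref{eq:mu12bis} and \eqref{eq:sig12bis} of Proposition~\ref{prop:combRV}. Your additional discussion of why the limit is the right reading of $X_1\oplus X_2$, and the independent check via Proposition~\ref{prop:prodphi}, are sound but go beyond what the paper records.
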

\begin{proof}
Immediate from Proposition \ref{prop:combRV} by letting $h_2$ tend to $+\infty$ in \eqref{eq:mu12bis} and \eqref{eq:sig12bis}.
\end{proof}

\subsection{Arithmetic operations on GRFN's} 
\label{subsec:arithm}

Arithmetic operations can be extended to fuzzy numbers using Zadeh's extension principles \cite{dubois80,dubois00b}. More precisely, let $\tA$ and $\tB$ be two fuzzy numbers, and let $*$ be a binary operation on reals. Then the fuzzy number $\tC=\tA * \tB$ is defined as
\[
\tC(c)=\sup_{c=a*b} \min(\tA(a),\tB(b)).
\]
The membership function $\tC$ is equal to the possibility distribution on $c=a*b$, if $a$ and $b$ are constrained, respectively, by possibility distributions $\tA$ and $\tB$. Unary or $n$-ary operations can be extended from real to fuzzy numbers in the same way. For a certain class of fuzzy number called LR-fuzzy numbers \cite[page 54]{dubois80}, closed-form expressions exist for the addition,  subtraction and scalar multiplication of fuzzy numbers. In particular, Gaussian fuzzy numbers with positive precision are LR fuzzy numbers and they verify the following equalities \cite{nahmias78}:
\begin{align*}
\GFN(m_1,h_1) + \GFN(m_2,h_2)&=\GFN(m_1+m_2,(h_1^{-1/2}+h_2^{-1/2})^{-2})\\
\GFN(m_1,h_1) - \GFN(m_2,h_2)&=\GFN(m_1-m_2,(h_1^{-1/2}+h_2^{-1/2})^{-2})\\
\lambda\cdot \GFN(m,h) &= \GFN(\lambda m, h/\lambda^2), \quad \forall \lambda\in \reels.
\end{align*}
As addition of fuzzy numbers is  associative, we can express the linear combination of $n$ GFN's as
\begin{equation}
\label{eq:lincomb}
\sum_{i=1}^n \lambda_i\cdot\GFN(m_i,h_i)=\GFN\left(\sum_{i=1}^n \lambda_i m_i,\left(\sum_{i=1}^n |\lambda_i|h_i^{-1/2}\right)^{-2}\right).
\end{equation}
Now, let us consider $n$ independent GRFN's $\tX_i$ from probability spaces $(\Omega_i,\sigma_i,P_i)$ to $[0,1]^\reels$ defined by
\[
\tX_i(\omega)=\GFN(M_i(\omega),h_i)
\]
for all $\omega\in\Omega_i$, where $M_i$ is a Gaussian random variable with mean $\mu_i$ and standard deviation $\sigma_i$, and $h_i>0$. Let 
\[
\tX=\sum_{i=1}^n \lambda_i \tX_i
\]
be the random fuzzy set from $(\Omega_1\times\ldots\times\Omega_n,\sigma_1\otimes\ldots\otimes\sigma_n,P_1\times\ldots\times P_n)$ to $[0,1]^\reels$ defined by
\[
\tX(\omega_1,\ldots,\omega_n)=\sum_{i=1}^n \lambda_i \cdot \GFN(M_i(\omega_i),h_i).
\]
If each GRFN $\tX_i$ represents our knowledge about the value of some quantity $X_i$, $\tX$ represents our knowledge about $X=\sum_{i=1}^n \lambda_i X_i$. From \eqref{eq:lincomb}, $\tX\sim\tN(\mu,\sigma,h)$ with
\[
\mu=\sum_{i=1}^n \lambda_i\mu_i, \quad \sigma^2=\sum_{i=1}^n \lambda_i^2\sigma_i^2, \quad \text{and} \quad  h=\left(\sum_{i=1}^n |\lambda_i|h_i^{-1/2}\right)^{-2}.
\]

 
\section{Gaussian random fuzzy vectors}
\label{sec:GRFV}

In this section, we introduce Gaussian random fuzzy vectors (GRFV's), an extension of the model presented in Section \ref{sec:GRFS} allowing us to describe knowledge about multidimensional quantities. The main definitions and properties are first  introduced in Section \ref{subsec:GRFVdef}. The expression of the orthogonal sum of two GRFV's is then  given in Section \ref{subsec:GRFVcomb}, after which the marginalization and vacuous extension of GRFV's are described in Section \ref{subsec:marginGRFV}. \new{Finally, our model is compared to Dempster's normal belief function model in Section \ref{subsec:dempster}.}

\subsection{Definition and main properties}
\label{subsec:GRFVdef}

We consider a $p$-dimensional variable $\btheta$ taking values in $\reels^p$. Knowledge about $\btheta$ may be encoded as a $p$-dimensional Gaussian fuzzy vector, defined as follows.

\begin{Def}
We define the $p$-dimensional Gaussian fuzzy vector (GFV) with center $\bm\in\reels^p$ and $p\times p$ symmetric and positive semidefinite precision matrix $\bH$ as the normalized fuzzy subset of $\reels^p$ with membership function
\[
\varphi(\bx;\bm,\bH)=\exp\left(-\frac12 (\bx-\bm)^T \bH (\bx-\bm)\right),
\]
denoted as $\GFV(\bm,\bH)$.
\end{Def}

As shown in \cite{petersen12}, the normalized product of two GFV's is still a GFV. The following proposition generalizes Proposition \ref{prop:prodphi}.

\begin{Prop}
\label{prop:prodphi1}
Let $\GFV(\bm_1,\bH_1)$ and $\GFV(\bm_2,\bH_2)$ be two $p$-dimensional GFV's with positive definite precision matrices $\bH_1$ and $\bH_2$. We have
\begin{multline*}
\varphi(\bx;\bm_1,\bH_1) \cdot \varphi(\bx;\bm_2,\bH_2)= \varphi(\bx;\bm_{12},\bH_{12}) \times \\
\exp\left(-\frac12 (\bm_1-\bm_2)^T(\bH_1^{-1}+ \bH_2^{-1})^{-1}(\bm_1-\bm_2)\right),
\end{multline*}
with 
\[
 \bm_{12}=(\bH_1+\bH_2)^{-1} (\bH_1 \bm_1+\bH_2 \bm_2) \quad \text{and} \quad \bH_{12}=\bH_1+\bH_2  .
\]
Consequently, the  following equation holds:
\[
\GFV(\bm_1,\bH_1)   \varodot \GFV(\bm_2,\bH_2)= \GFV(\bm_{12},\bH_{12}),
\]
and the height of the product intersection between $\GFV(\bm_1,\bH_1)$ and $\GFV(\bm_1,\bH_2)$ is
\begin{subequations}
\label{eq:etapD}
\begin{align}
\height\left(\GFV(\bm_1,\bH_1),\GFV(\bm_1,\bH_2)\right)&=\max_\bx \varphi(\bx;\bm_1,\bH_1)\varphi(\bx;\bm_2,\bH_2)\\
&=\exp\left(-\frac12 (\bm_1-\bm_2)^T(\bH_1^{-1}+ \bH_2^{-1})^{-1}(\bm_1-\bm_2)\right).
\end{align}
\end{subequations}
\end{Prop}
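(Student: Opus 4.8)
The plan is to reduce the product of the two membership functions to a single Gaussian by completing the square in the exponent, exactly as in the scalar case of Proposition \ref{prop:prodphi}, and then to identify the residual constant through a matrix identity for the ``parallel sum'' of the precision matrices. First I would write the product as a single exponential,
\[
\varphi(\bx;\bm_1,\bH_1)\,\varphi(\bx;\bm_2,\bH_2)=\exp\left(-\frac12\,Q(\bx)\right),
\]
with $Q(\bx)=(\bx-\bm_1)^T\bH_1(\bx-\bm_1)+(\bx-\bm_2)^T\bH_2(\bx-\bm_2)$. Expanding and collecting the terms in $\bx$ gives
\[
Q(\bx)=\bx^T\bH_{12}\bx-2\bx^T(\bH_1\bm_1+\bH_2\bm_2)+\bm_1^T\bH_1\bm_1+\bm_2^T\bH_2\bm_2,
\]
where $\bH_{12}=\bH_1+\bH_2$. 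Since $\bH_1,\bH_2$ are positive definite, $\bH_{12}$ is invertible, and completing the square with $\bm_{12}=\bH_{12}^{-1}(\bH_1\bm_1+\bH_2\bm_2)$ yields $Q(\bx)=(\bx-\bm_{12})^T\bH_{12}(\bx-\bm_{12})+C$, whose $\bx$-free remainder is
\[
C=\bm_1^T\bH_1\bm_1+\bm_2^T\bH_2\bm_2-(\bH_1\bm_1+\bH_2\bm_2)^T\bH_{12}^{-1}(\bH_1\bm_1+\bH_2\bm_2).
\]
This already delivers the announced factorisation, provided $C$ is shown to equal $(\bm_1-\bm_2)^T(\bH_1^{-1}+\bH_2^{-1})^{-1}(\bm_1-\bm_2)$.

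The main obstacle is precisely this identification of $C$. I would handle it through the matrix identity $\bH_1\bH_{12}^{-1}\bH_2=(\bH_1^{-1}+\bH_2^{-1})^{-1}$, which follows from the factorisation $\bH_1^{-1}+\bH_2^{-1}=\bH_1^{-1}\bH_{12}\bH_2^{-1}$ together with the symmetry of the inverse of a symmetric matrix (so that $\bH_1\bH_{12}^{-1}\bH_2=\bH_2\bH_{12}^{-1}\bH_1$). Writing $\bK=\bH_1\bH_{12}^{-1}\bH_2$ and using $\bH_1-\bH_1\bH_{12}^{-1}\bH_1=\bH_1\bH_{12}^{-1}\bH_2=\bK$ together with its symmetric counterpart for $\bH_2$, the quadratic and cross terms in $C$ collapse to
\[
C=\bm_1^T\bK\bm_1-2\bm_1^T\bK\bm_2+\bm_2^T\bK\bm_2=(\bm_1-\bm_2)^T\bK(\bm_1-\bm_2),
\]
which is the required expression.

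Finally, the two consequences are immediate. Because $\varphi(\bx;\bm_{12},\bH_{12})\le 1$ with equality at $\bx=\bm_{12}$, the maximum over $\bx$ of the product equals the constant factor, which gives the height formula \eqref{eq:etapD}; dividing the product by this height therefore renormalises $\varphi(\cdot;\bm_{12},\bH_{12})$ to height one, which is exactly $\GFV(\bm_{12},\bH_{12})$ and establishes $\GFV(\bm_1,\bH_1)\varodot\GFV(\bm_2,\bH_2)=\GFV(\bm_{12},\bH_{12})$.
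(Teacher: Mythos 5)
Your proof is correct, but it is worth pointing out that the paper does not actually contain a proof of Proposition \ref{prop:prodphi1}: the result is simply delegated to the Matrix Cookbook of Petersen and Pedersen \cite{petersen12}, just as its scalar analogue (Proposition \ref{prop:prodphi}) is delegated to \cite{bromiley14}. Your argument therefore supplies a self-contained derivation where the paper only gives a citation. The route you take is the natural one and every step checks out: completing the square in $Q(\bx)$ gives the center $\bm_{12}=(\bH_1+\bH_2)^{-1}(\bH_1\bm_1+\bH_2\bm_2)$ and precision $\bH_{12}=\bH_1+\bH_2$ (invertible by positive definiteness); the $\bx$-free remainder is then identified through the parallel-sum identity $\bH_1\bH_{12}^{-1}\bH_2=(\bH_1^{-1}+\bH_2^{-1})^{-1}$, which indeed follows from $\bH_1^{-1}+\bH_2^{-1}=\bH_1^{-1}\bH_{12}\bH_2^{-1}$ together with the symmetry of $\bK=\bH_1\bH_{12}^{-1}\bH_2$; the relations $\bH_i-\bH_i\bH_{12}^{-1}\bH_i=\bK$ collapse the diagonal terms and the symmetry of $\bK$ collapses the cross terms, yielding $C=(\bm_1-\bm_2)^T\bK(\bm_1-\bm_2)$. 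Your final observations are also exactly what is needed to pass from the factorization to the two stated consequences: since $\bH_{12}$ is positive definite, $\varphi(\cdot;\bm_{12},\bH_{12})$ attains its maximum value $1$ precisely at $\bm_{12}$, so the constant factor is the height of the product, and dividing by it renormalizes the product to $\GFV(\bm_{12},\bH_{12})$. What your derivation buys over the paper's citation is that the key matrix identity and the role of positive definiteness are made explicit; what the citation buys is brevity.
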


Equipped with the notion of GFV, we can now introduce a model of random fuzzy set  that can be seen as a GFV whose mode is a multidimensional Gaussian random variable. This model is defined formally as follows.

\begin{Def}
Let $(\Omega,\sigma_\Omega,P)$ be a probability space, $\bM: \Omega\rightarrow\reels^p$  a $p$-dimensional Gaussian random vector with mean $\bmu$ and variance matrix $\bSigma$, and $\bH$ a $p\times p$ symmetric and positive semidefinite real matrix. The random fuzzy set  $\tX:\Omega\rightarrow [0,1]^{\reels^p}$ defined as 
\[
\tX(\omega) = \GFV(\bM(\omega),\bH)
\]
is called a Gaussian random fuzzy vector (GRFV), which we denote as $\tX\sim \tN(\bmu,\bSigma,\bH)$.
\end{Def}

The following proposition generalizes Proposition \ref{prop:contourRFS}.

\begin{Prop} 
\label{prop:contourRFS1}
The contour function of GRFV $\tX\sim \tN(\bmu,\bSigma,\bH)$ with positive definite precision matrix $\bH$ is
\[
pl_\tX(\bx)=\frac{1}{|\bI_p+\bSigma \bH|^{1/2}}\exp\left(- \frac12 (\bx-\bmu)^T(\bH^{-1}+\bSigma)^{-1}(\bx-\bmu)\right),
\]
where $\bI_p$ is the $p$-dimensional identity matrix.
\end{Prop}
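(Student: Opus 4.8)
The plan is to evaluate the contour function directly from its definition as the plausibility of a singleton. Since $pl_\tX(\bx)=Pl_\tX(\{\bx\})$ and $\Pi_\tX(\{\bx\}\mid\omega)=\sup_{\theta\in\{\bx\}}\tX(\omega)(\theta)=\tX(\omega)(\bx)$, formula \eqref{eq:defPl} reduces the contour function to the average membership value
\[
pl_\tX(\bx)=\int_\Omega \tX(\omega)(\bx)\,dP(\omega)
\]
(the identity already used in the proof of the product rule for contour functions). For a GRFV, $\tX(\omega)(\bx)=\varphi(\bx;\bM(\omega),\bH)$, so that
\[
pl_\tX(\bx)=\esp\left[\exp\left(-\tfrac12(\bx-\bM)^T\bH(\bx-\bM)\right)\right],\qquad \bM\sim N(\bmu,\bSigma),
\]
and the whole proof amounts to evaluating this Gaussian expectation.

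First I would treat the case where $\bSigma$ is positive definite, so that $\bM$ has the density $(2\pi)^{-p/2}|\bSigma|^{-1/2}\varphi(\bm;\bmu,\bSigma^{-1})$. Writing the expectation as an integral over $\bm\in\reels^p$ and using $\varphi(\bx;\bm,\bH)=\varphi(\bm;\bx,\bH)$, the integrand becomes, up to the constant $(2\pi)^{-p/2}|\bSigma|^{-1/2}$, the product of the two GFVs $\varphi(\bm;\bx,\bH)$ and $\varphi(\bm;\bmu,\bSigma^{-1})$ in the variable $\bm$. Here I would invoke Proposition \ref{prop:prodphi1} with precisions $\bH$ and $\bSigma^{-1}$: the product equals $\varphi(\bm;\bm_{12},\bH+\bSigma^{-1})$ times the $\bm$-independent height factor $\exp\left(-\tfrac12(\bx-\bmu)^T(\bH^{-1}+\bSigma)^{-1}(\bx-\bmu)\right)$. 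That height factor is exactly the exponential appearing in the claim, so it pulls out of the integral, while $\int_{\reels^p}\varphi(\bm;\bm_{12},\bH+\bSigma^{-1})\,d\bm=(2\pi)^{p/2}|\bH+\bSigma^{-1}|^{-1/2}$.

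Collecting the constants then gives $pl_\tX(\bx)=|\bSigma|^{-1/2}|\bH+\bSigma^{-1}|^{-1/2}\exp(\cdots)$, and the prefactor simplifies through the determinant identity $|\bSigma|\,|\bH+\bSigma^{-1}|=|\bSigma\bH+\bI_p|$, yielding the announced normalizing constant $|\bI_p+\bSigma\bH|^{-1/2}$. This route is attractive because it recycles the matrix algebra already packaged in Proposition \ref{prop:prodphi1}; the same conclusion can be reached by directly completing the square in the exponent $(\bm-\bmu)^T\bSigma^{-1}(\bm-\bmu)+(\bx-\bm)^T\bH(\bx-\bm)$, where the cross term collapses via the push-through identity $\bSigma^{-1}(\bSigma^{-1}+\bH)^{-1}\bH=(\bH^{-1}+\bSigma)^{-1}$.

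The main point needing care is that Proposition \ref{prop:prodphi1} requires both precision matrices to be positive definite, i.e. it presupposes $\bSigma$ invertible, whereas the statement allows $\bSigma$ to be merely positive semidefinite (e.g. the possibilistic case $\bSigma=\bzero$). Since $\bH$ is positive definite, $\bH^{-1}+\bSigma$ is positive definite and both $(\bH^{-1}+\bSigma)^{-1}$ and $|\bI_p+\bSigma\bH|$ are well defined and continuous in $\bSigma$ over the cone of positive semidefinite matrices, while the bounded integrand makes $pl_\tX(\bx)$ continuous in $\bSigma$ as well; hence the formula extends from the positive definite to the positive semidefinite case by continuity. Alternatively, one may bypass the restriction entirely by computing the expectation through the moment generating function of the quadratic form $(\bx-\bM)^T\bH(\bx-\bM)$, which involves no inversion of $\bSigma$.
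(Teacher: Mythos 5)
Your proposal is correct and takes essentially the same route as the paper's own proof: both write $pl_\tX(\bx)=\esp_\bM[\varphi(\bx;\bM,\bH)]$, factor the integrand via the GFV product rule (Proposition~\ref{prop:prodphi1}) into a Gaussian kernel in $\bm$ times the $\bm$-free height factor $\exp\left(-\tfrac12(\bx-\bmu)^T(\bH^{-1}+\bSigma)^{-1}(\bx-\bmu)\right)$, integrate out that kernel, and reduce the constant with $|\bSigma|\,|\bH+\bSigma^{-1}|=|\bI_p+\bSigma\bH|$. Your extra handling of merely positive semidefinite $\bSigma$ (by continuity, or by the quadratic-form moment generating function) is a sound refinement of a point the paper's proof passes over by tacitly assuming $\bSigma$ invertible, but it does not change the method.
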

\begin{proof}
See \ref{app:contourRFS1}.
\end{proof}

\subsection{Orthogonal sum of Gaussian random fuzzy vectors} 
\label{subsec:GRFVcomb}

The practical interest of GRFV's arises from the fact that they can be easily combined by the generalized product-intersection rule. The following lemma and proposition, which generalize, respectively, Lemma \ref{lem:Gauss} and Proposition \ref{prop:combGRFN}, give us the expression of the orthogonal sum of two GRFV's.

\begin{Lem}
\label{lem:Gauss1}
Let   $\bM_1\sim\calN(\bmu_1,\bSigma_1)$ and $\bM_2\sim\calN(\bmu_2,\bSigma_2)$ be two independent Gaussian $p$-dimensional random vectors and let $\bH_1$ and $\bH_2$ be two symmetric and  positive definite  $p\times p$ matrices.  Let $\tF$ be the fuzzy subset of $\reels^{2p}$ with membership function
\[
\tF(\bm_1,\bm_2)= \height\left(\GFV(\bm_1,\bH_1) \cdot \GFV(\bm_2,\bH_2)\right),
\] 
and let $\bM$ be the $2p$-dimensional vector $(\bM_1,\bM_2)$. The conditional probability distribution of $\bM$ given $\tF$ is $2p$-dimensional Gaussian with mean vector $\tbmu$ and covariance matrix $\tbSigma$ defined as follows: 
\[
\tbSigma=\begin{pmatrix}
\bSigma_1^{-1}+\obH& -\obH\\
 -\obH &\bSigma_2^{-1}+\obH
\end{pmatrix}^{-1},
\]
\begin{equation}
\label{eq:tbmu}
\tbmu=\begin{pmatrix}
\obH^{-1}\bSigma_1^{-1}+\bI_p& -\bI_p\\
 -\bI_p &\obH^{-1}\bSigma_2^{-1}+\bI_p
\end{pmatrix}^{-1}
\begin{pmatrix}
\obH^{-1}\bSigma_1^{-1}& \bzero\\
 \bzero &\obH^{-1}\bSigma_2^{-1}
\end{pmatrix}
\begin{pmatrix}
\bmu_1\\
\bmu_2
\end{pmatrix},
\end{equation}
with
\[
\obH=(\bH_1^{-1}+\bH_2^{-1})^{-1}.
\]
Furthermore, the degree of conflict between two GRFV's $\tX_1\sim\tN(\bmu_1,\bSigma_1,\bH_1)$ and $\tX_2\sim\tN(\bmu_2,\bSigma_2,\bH_2)$ is
\begin{multline*}
\kappa=1- \int_{\reels^{2p}} f(\bm_1,\bm_2) \tF(\bm_1,\bm_2) d\bm_1d\bm_2=\\
1-\sqrt{\frac{|\tbSigma|}{|\bSigma_1|  |\bSigma_2|}} \exp\left\{-\frac12 \left[ \bmu_1^T \bSigma_1^{-1} \bmu_1+ \bmu_2^T \bSigma_2^{-1} \bmu_2 - \tbmu^T \tbSigma^{-1} \tbmu \right]\right\}.
\end{multline*}
\end{Lem}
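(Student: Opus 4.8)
The plan is to exploit that conditioning a Gaussian density on the Gaussian-shaped fuzzy event $\tF$ again yields a Gaussian, so that the whole argument reduces to identifying the parameters of a single quadratic form. First I would invoke Proposition \ref{prop:prodphi1} to write the membership function explicitly,
\[
\tF(\bm_1,\bm_2)=\exp\left(-\frac12(\bm_1-\bm_2)^T\obH(\bm_1-\bm_2)\right),\qquad \obH=(\bH_1^{-1}+\bH_2^{-1})^{-1}.
\]
By Zadeh's definition of conditioning on a fuzzy event, the conditional density of $\bM=(\bM_1,\bM_2)$ given $\tF$ is proportional to $f(\bm_1,\bm_2)\tF(\bm_1,\bm_2)$, where $f$ is the product of the two independent Gaussian densities. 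Collecting the three quadratic contributions in the exponent shows this density equals a constant times $\exp(-\tfrac12 Q)$ with
\[
Q=(\bm_1-\bmu_1)^T\bSigma_1^{-1}(\bm_1-\bmu_1)+(\bm_2-\bmu_2)^T\bSigma_2^{-1}(\bm_2-\bmu_2)+(\bm_1-\bm_2)^T\obH(\bm_1-\bm_2),
\]
a quadratic form in the stacked variable $\bm=(\bm_1,\bm_2)$, which already establishes that the conditional law is $2p$-dimensional Gaussian.

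The key step is then to read off its mean and covariance. Expanding the cross term and grouping second-order coefficients, the precision (inverse-covariance) matrix has diagonal blocks $\bSigma_1^{-1}+\obH$ and $\bSigma_2^{-1}+\obH$ and off-diagonal blocks $-\obH$; this is exactly $\tbSigma^{-1}$ (so the lower-right diagonal block of the matrix inverted in the statement should read $\bSigma_2^{-1}+\obH$). Matching first-order coefficients gives the linear system
\[
\tbSigma^{-1}\tbmu=\begin{pmatrix}\bSigma_1^{-1}\bmu_1\\ \bSigma_2^{-1}\bmu_2\end{pmatrix}.
\]
To recover the displayed form \eqref{eq:tbmu}, I would factor $\tbSigma^{-1}=\begin{pmatrix}\obH&\bzero\\ \bzero&\obH\end{pmatrix}\bB$, where $\bB$ is the block matrix with diagonal blocks $\obH^{-1}\bSigma_1^{-1}+\bI_p$, $\obH^{-1}\bSigma_2^{-1}+\bI_p$ and off-diagonal blocks $-\bI_p$; left-multiplying the system by $\diag(\obH^{-1},\obH^{-1})$ then produces precisely $\bB\,\tbmu=\diag(\obH^{-1}\bSigma_1^{-1},\obH^{-1}\bSigma_2^{-1})(\bmu_1,\bmu_2)^T$, i.e.\ \eqref{eq:tbmu}.

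For the degree of conflict I would note that $1-\kappa=\int f\tF$ is exactly the normalizing constant of the unnormalized conditional density. Completing the square writes $Q=(\bm-\tbmu)^T\tbSigma^{-1}(\bm-\tbmu)+c$, with the constant $c$ obtained by evaluating $Q$ at its minimizer $\bm=\tbmu$, giving $c=\bmu_1^T\bSigma_1^{-1}\bmu_1+\bmu_2^T\bSigma_2^{-1}\bmu_2-\tbmu^T\tbSigma^{-1}\tbmu$. The $2p$-dimensional Gaussian integral of the quadratic part contributes $(2\pi)^p|\tbSigma|^{1/2}$ (the integral converging since $\tbSigma^{-1}$ is positive definite, $\bSigma_i^{-1}$ being positive definite and $\obH$ positive definite), and dividing by the normalizing constant $(2\pi)^p|\bSigma_1|^{1/2}|\bSigma_2|^{1/2}$ of $f$ yields the stated closed form for $\kappa$.

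The main obstacle, exactly as in the scalar Lemma \ref{lem:Gauss}, is purely algebraic bookkeeping: arranging the block-matrix manipulations so that the mean collapses to the factored form \eqref{eq:tbmu}, and verifying that the leftover constant assembles into the clean symmetric expression $\bmu_1^T\bSigma_1^{-1}\bmu_1+\bmu_2^T\bSigma_2^{-1}\bmu_2-\tbmu^T\tbSigma^{-1}\tbmu$. Everything else follows the standard ``product of Gaussians is Gaussian'' template.
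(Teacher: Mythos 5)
Your proposal is correct and follows essentially the same route as the paper's proof: write the conditional density as proportional to $f\cdot\tF$, match the quadratic and linear terms of the exponent against a generic $2p$-dimensional Gaussian to identify $\tbSigma^{-1}$ and the linear system $\tbSigma^{-1}\tbmu=(\bSigma_1^{-1}\bmu_1;\bSigma_2^{-1}\bmu_2)$, left-multiply by $\diag(\obH^{-1},\obH^{-1})$ to obtain \eqref{eq:tbmu}, and extract $\kappa$ from the normalizing constant (the paper takes the ratio of the unnormalized density to the Gaussian density, which is equivalent to your completion of the square). You are also right that the lower-right block of the matrix defining $\tbSigma$ in the statement is a typo and should read $\bSigma_2^{-1}+\obH$; the paper's own proof indeed derives $\bC=\bSigma_2^{-1}+\obH$.
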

\begin{proof}
See \ref{app:Gauss1}
\end{proof}

\begin{Prop}
Let  $\tX_1\sim\tN(\bmu_1,\bSigma_1,\bH_1)$ and $\tX_2\sim\tN(\bmu_2,\bSigma_2,\bH_2)$ be two independent GRFV's. We have
\[
\tX_1 \oplus \tX_2\sim \tN(\tbmu_{12},\tbSigma_{12},\bH_{12})
\]
with 
\[
\bH_{12}=\bH_1+\bH_2,
\]
\[
\tbmu_{12}=\bA \tbmu,
\]
and
\[
\tbSigma_{12}=\bA \tbSigma \bA^T,
\]
where $\bA$ is the constant $p\times 2p$ matrix defined as
\[
\bA=\bH_{12}^{-1}\begin{pmatrix}\bH_1 & \bH_2\end{pmatrix}.
\]
\end{Prop}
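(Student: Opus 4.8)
The plan is to follow the same line of reasoning as in the proof of Proposition~\ref{prop:combGRFN}, replacing the scalar computations by their matrix counterparts and invoking the multidimensional analogues already established, namely Proposition~\ref{prop:prodphi1} and Lemma~\ref{lem:Gauss1}. First I would let $\bM_1$ and $\bM_2$ be the Gaussian random vectors from $(\Omega_1,\sigma_1,P_1)$ and $(\Omega_2,\sigma_2,P_2)$ to $(\reels^p,\beta_{\reels^p})$ associated, respectively, with $\tX_1\sim\tN(\bmu_1,\bSigma_1,\bH_1)$ and $\tX_2\sim\tN(\bmu_2,\bSigma_2,\bH_2)$. By definition of the generalized product-intersection rule, the orthogonal sum $\tX_1\oplus\tX_2$ is the random fuzzy set on $(\Omega_1\times\Omega_2,\sigma_1\otimes\sigma_2,\tP_{12})$ with mapping $\tX_\varodot(\omega_1,\omega_2)=\GFV(\bM_1(\omega_1),\bH_1)\varodot\GFV(\bM_2(\omega_2),\bH_2)$, where $\tP_{12}$ conditions $P_1\times P_2$ on the fuzzy event $\tF$ of Lemma~\ref{lem:Gauss1}.

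The first step is to identify $\tX_\varodot$ explicitly. Applying Proposition~\ref{prop:prodphi1} pointwise in $(\omega_1,\omega_2)$ gives $\tX_\varodot(\omega_1,\omega_2)=\GFV(\bM_{12}(\omega_1,\omega_2),\bH_{12})$ with constant precision matrix $\bH_{12}=\bH_1+\bH_2$ and random mode
\[
\bM_{12}(\omega_1,\omega_2)=\bH_{12}^{-1}\bigl(\bH_1\bM_1(\omega_1)+\bH_2\bM_2(\omega_2)\bigr)=\bA\begin{pmatrix}\bM_1(\omega_1)\\ \bM_2(\omega_2)\end{pmatrix},
\]
which recovers at once the asserted formula for $\bH_{12}$ and exhibits $\bM_{12}$ as the image of the stacked vector $(\bM_1,\bM_2)^T$ under the fixed linear map $\bA=\bH_{12}^{-1}\begin{pmatrix}\bH_1 & \bH_2\end{pmatrix}$. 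Thus $\tX_1\oplus\tX_2$ will be a GRFV as soon as $\bM_{12}$ is shown to be Gaussian, and its parameters are entirely determined by the law of $\bM_{12}$.

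The second step is to determine that law. By Lemma~\ref{lem:Gauss1}, the pushforward of $\tP_{12}$ by $(\bM_1,\bM_2)$ is the $2p$-dimensional Gaussian distribution with mean $\tbmu$ and covariance $\tbSigma$. Since $\bM_{12}=\bA(\bM_1,\bM_2)^T$ is a deterministic linear transformation of this Gaussian vector, it is itself Gaussian, with mean $\esp(\bM_{12})=\bA\tbmu$ and covariance $\cov(\bM_{12})=\bA\tbSigma\bA^T$. Setting $\tbmu_{12}=\bA\tbmu$ and $\tbSigma_{12}=\bA\tbSigma\bA^T$ then yields $\tX_1\oplus\tX_2\sim\tN(\tbmu_{12},\tbSigma_{12},\bH_{12})$, as claimed.

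The essential content has already been pushed into the two auxiliary results: Lemma~\ref{lem:Gauss1} carries the only genuinely nontrivial computation (the conditional-Gaussian law under soft normalization), and Proposition~\ref{prop:prodphi1} supplies the closure of the GFV family under normalized product. Consequently the step I expect to need the most care is not a calculation but a verification of hypotheses and structure: one must check that $\bM_{12}$ is a linear — not merely affine — image of $(\bM_1,\bM_2)^T$, so that stability of the Gaussian family under linear maps applies without a translation term, and that the positive-definiteness assumptions on $\bH_1$ and $\bH_2$ inherited from Lemma~\ref{lem:Gauss1} and Proposition~\ref{prop:prodphi1} are in force, so that $\bH_{12}^{-1}$ and the various inverses entering $\tbmu$ and $\tbSigma$ are well defined.
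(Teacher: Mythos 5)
Your proposal is correct and follows essentially the same route as the paper's own proof: identify $\bM_{12}=\bA(\bM_1,\bM_2)^T$ via the closure of GFV's under normalized product, invoke Lemma~\ref{lem:Gauss1} for the Gaussian pushforward of $\tP_{12}$ under $(\bM_1,\bM_2)$, and conclude by linearity. Your explicit citation of Proposition~\ref{prop:prodphi1} and the remark on positive-definiteness hypotheses merely make explicit what the paper leaves implicit.
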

\begin{proof}
Let $\bM_1$ and $\bM_2$ be the Gaussian random vector from $(\Omega_1,\sigma_1,P_1)$ and   $(\Omega_2,\sigma_2,P_2)$ to $(\reels^p,\beta_{\reels^p})$ corresponding, respectively, to GRFV's $\tX_1\sim\tN(\bmu_1,\bSigma_1,\bH_1)$ and $\tX_2\sim\tN(\bmu_2,\bSigma_2,\bH_2)$. The orthogonal sum of $\tX_1$ and $\tX_2$ is defined by the mapping
\[
\tX_\varodot: (\omega_1,\omega_2) \rightarrow \GFV(\bM_{12}(\omega_1,\omega_2),\bH_1+\bH_2)
\]
with 
\[
 \bM_{12}=(\bH_1+\bH_2)^{-1} (\bH_1 \bM_1+\bH_2 \bM_2)=\bA\begin{pmatrix} \bM_1\\\bM_2\end{pmatrix},
\]
where $\bA$ is the $p\times 2p$ matrix
\[
\bA=(\bH_1+\bH_2)^{-1}\begin{pmatrix} \bH_1 & \bH_2\end{pmatrix},
\]
and  the probability measure $\tP_{12}$  on $\Omega_1\times\Omega_2$ obtained by conditioning $P_1\times P_2$ on the fuzzy set $\tTheta^*(\omega_1,\omega_2)=\height\left(\GFV(\bM_{1}(\omega_1),\bH_1),\GFV(\bM_{2}(\omega_2),\bH_2)\right)$. From Lemma \ref{lem:Gauss1}, the pushforward measure of $\tP_{12}$ by the random vector  $(\bM_1,\bM_2)$ is the $p$-dimensional Gaussian distribution with parameters $(\tbmu,\tbSigma)$. Consequently, $\bM_{12}$ is a Gaussian random vector with mean 
\[
\esp(\bM_{12})=\bA\tbmu
\]
and variance
\[
\var(\bM_{12})=\bA \tbSigma \bA^T.
\]
 
\end{proof}

\subsection{Marginalization and vacuous extension}
\label{subsec:marginGRFV}
In this section, we consider the marginalization and vacuous extension (defined in Section \ref{subsec:margin}) of a GRFV. We assume that variable $\btheta$ taking values in $\reels^p$ is decomposed as $\btheta=(\btheta_1,\btheta_2)$ with $\btheta_1\in \Theta_1=\reels^{p-k}$ and $\btheta_2\in \Theta_2=\reels^k$ for $0<k<p$. 

\paragraph{Marginalization} We start with the following lemma.

\begin{Lem}
\label{lem:proj}
Let $\tF=\GFV(\bm,\bH)$ be a $p$-dimensional Gaussian fuzzy vector with mode $\bm=(\bm_1,\bm_2)$, where $\bm_1\in\Theta_1=\reels^{p-k}$ and $\bm_2\in \Theta_2=\reels^k$ for $0<k<p$, and precision matrix $\bH$ with block decomposition
\[
\bH=\block{\bH_{11}}{\bH_{12}}{\bH_{21}}{\bH_{22}}.
\]
Assume that $\bH_{22}$ is nonsingular. The projection of $\tF$ on  $\Theta_1$, denoted as $\tF\downarrow\Theta_1$ is the Gaussian fuzzy vector $\GFV(\bm_1,\bH'_{11})$ with
\[
\bH'_{11}=\bH_{11} - \bH_{12} \bH_{22}^{-1}\bH_{21}.
\]
\end{Lem}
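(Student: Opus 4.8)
The plan is to work directly from the definition of fuzzy-set projection given in Section~\ref{subsec:fuzzy}, namely $(\tF\downarrow\Theta_1)(\bx_1)=\sup_{\bx_2\in\Theta_2}\tF(\bx_1,\bx_2)$. Writing $\bx=(\bx_1,\bx_2)$ and inserting the explicit membership function of $\GFV(\bm,\bH)$, this supremum becomes $\sup_{\bx_2\in\reels^k}\exp(-\frac12(\bx-\bm)^T\bH(\bx-\bm))$. Since the exponential is increasing, the supremum is attained exactly where the quadratic form $Q(\bx_2)=(\bx-\bm)^T\bH(\bx-\bm)$ is minimized over $\bx_2$, so the whole problem reduces to minimizing a quadratic form in $\bx_2$ and recognizing the resulting profile as another Gaussian membership function.

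First I would introduce the shifted variables $\bu_1=\bx_1-\bm_1$ and $\bu_2=\bx_2-\bm_2$; as $\bx_2$ ranges over $\reels^k$, $\bu_2$ ranges over all of $\reels^k$. Using the block decomposition of $\bH$ and its symmetry ($\bH_{21}=\bH_{12}^T$), the quadratic form expands as $Q=\bu_1^T\bH_{11}\bu_1+2\bu_1^T\bH_{12}\bu_2+\bu_2^T\bH_{22}\bu_2$. Next I would minimize over $\bu_2$: setting the gradient $2\bH_{21}\bu_1+2\bH_{22}\bu_2$ to zero gives the critical point $\bu_2^\star=-\bH_{22}^{-1}\bH_{21}\bu_1$, which is well-defined because $\bH_{22}$ is nonsingular by hypothesis.

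Substituting $\bu_2^\star$ back and simplifying (the cross term and the term quadratic in $\bu_2$ combine, again using symmetry of $\bH_{22}$ and $\bH_{21}^T=\bH_{12}$) yields the minimum value $\bu_1^T(\bH_{11}-\bH_{12}\bH_{22}^{-1}\bH_{21})\bu_1=\bu_1^T\bH'_{11}\bu_1$, i.e.\ the Schur complement of $\bH_{22}$ in $\bH$. Hence $(\tF\downarrow\Theta_1)(\bx_1)=\exp(-\frac12(\bx_1-\bm_1)^T\bH'_{11}(\bx_1-\bm_1))=\varphi(\bx_1;\bm_1,\bH'_{11})$, which is exactly the membership function of $\GFV(\bm_1,\bH'_{11})$, as claimed.

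The computation is routine Schur-complement algebra, so the only point requiring genuine care is the well-posedness of the minimization, which is the step I would flag as the main obstacle. I would note that a principal submatrix of a positive semidefinite matrix is itself positive semidefinite, so the assumed nonsingularity of $\bH_{22}$ upgrades it to positive definite; this guarantees that $Q$ is strictly convex in $\bu_2$ and that the critical point $\bu_2^\star$ is a genuine minimizer at which the supremum is actually attained (rather than merely approached). I would also remark that $\bH'_{11}$ inherits positive semidefiniteness, so that $\GFV(\bm_1,\bH'_{11})$ is a well-formed normalized Gaussian fuzzy vector, consistent with the fact that the projection of a normal fuzzy set is again normal.
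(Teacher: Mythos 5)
Your proof is correct and follows essentially the same route as the paper's: reduce the projection supremum to minimizing the quadratic form $(\bx-\bm)^T\bH(\bx-\bm)$ over $\bx_2$, solve the stationarity condition using the nonsingularity of $\bH_{22}$, and substitute back to obtain the Schur complement $\bH_{11}-\bH_{12}\bH_{22}^{-1}\bH_{21}$. Your additional remark that positive semidefiniteness of $\bH$ forces the nonsingular block $\bH_{22}$ to be positive definite—so the stationary point is a genuine minimizer and the supremum is attained—fills in a detail the paper's proof leaves implicit.
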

\begin{proof}
See \ref{app:proj}
\end{proof}

Let us now consider a $p$-dimensional GRFV $\tX\sim\tN(\bmu,\bSigma,\bH)$ representing partial knowledge about  $\btheta=(\btheta_1,\btheta_2)$. The marginal RFS for $\btheta_1$ is given by the following proposition, which follows directly from Lemma \ref{lem:proj}.

\begin{Prop} 
Let $\tX\sim \tN(\bmu,\bSigma,\bH)$ by a $p$-dimensional GRFV taking values in $2^\Theta$, with $\Theta=\Theta_1\times\Theta_2$, where $\Theta_1=\reels^{p-k}$ and $\Theta_2=\reels^k$ for $0<k<p$. Let $\bmu=(\bmu_1,\bmu_2)$ with $\bmu_1\in\Theta_1$ and $\bmu_2\in \Theta_2$, and consider the  block decompositions
\[
\bSigma=\block{\bSigma_{11}}{\bSigma_{12}}{\bSigma_{21}}{\bSigma_{22}} \quad \text{and} \quad \bH=\block{\bH_{11}}{\bH_{12}}{\bH_{21}}{\bH_{22}}.
\]
Assume that $\bH_{22}$ is nonsingular. The marginal  of $\tX$ on $\Theta_1$ is the GRFV $\tX_1\sim\tN(\bmu_1,\bSigma_{11},\bH'_{11})$ with
\[
\bH'_{11}=\bH_{11} - \bH_{12} \bH_{22}^{-1}\bH_{21}.
\]
\end{Prop}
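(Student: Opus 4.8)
The plan is to combine the pointwise projection formula of Lemma \ref{lem:proj} with the elementary marginalization property of multivariate Gaussian distributions; essentially all of the algebraic work has already been done in the lemma, so the proposition follows almost immediately.

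First I would unfold the definition of the marginal from Section \ref{subsec:margin}: the marginal $\tX_1$ of $\tX$ on $\Theta_1$ is the mapping $\omega \mapsto \tX(\omega)\downarrow\Theta_1$, where $\downarrow$ denotes fuzzy-set projection. Since $\tX\sim\tN(\bmu,\bSigma,\bH)$ means $\tX(\omega)=\GFV(\bM(\omega),\bH)$ with $\bM\sim\calN(\bmu,\bSigma)$, each image $\tX(\omega)$ is a Gaussian fuzzy vector whose precision matrix $\bH$ is the \emph{same} for all $\omega$, and whose mode is $\bM(\omega)=(\bM_1(\omega),\bM_2(\omega))$.

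Next I would apply Lemma \ref{lem:proj} at each fixed $\omega$. Because $\bH_{22}$ is assumed nonsingular, the lemma yields $\tX(\omega)\downarrow\Theta_1=\GFV(\bM_1(\omega),\bH'_{11})$ with $\bH'_{11}=\bH_{11}-\bH_{12}\bH_{22}^{-1}\bH_{21}$, and the key observation is that $\bH'_{11}$ does not depend on $\omega$. Hence $\tX_1(\omega)=\GFV(\bM_1(\omega),\bH'_{11})$ is a Gaussian fuzzy vector with a fixed (symmetric positive semidefinite, being a Schur complement of $\bH$) precision matrix whose mode is the random vector $\bM_1$.

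Finally I would invoke the standard fact that a sub-vector of a jointly Gaussian vector is itself Gaussian: from $\bM=(\bM_1,\bM_2)\sim\calN(\bmu,\bSigma)$ one gets $\bM_1\sim\calN(\bmu_1,\bSigma_{11})$, where $\bmu_1$ and $\bSigma_{11}$ are the corresponding blocks. Matching this against the definition of a GRFV gives $\tX_1\sim\tN(\bmu_1,\bSigma_{11},\bH'_{11})$, as claimed. The only point needing a line of justification is that $\tX_1$ is a genuine random fuzzy set, i.e. that each $\cut{\alpha}{\tX_1}$ is $\sigma_\Omega$--$\sigma_{\Theta_1}$ strongly measurable; this is immediate since the $\alpha$-cuts of $\GFV(\bM_1(\omega),\bH'_{11})$ are ellipsoids centered at $\bM_1(\omega)$ and thus depend measurably on $\omega$. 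I do not expect any real obstacle here: the algebra is absorbed into Lemma \ref{lem:proj}, and the probabilistic content reduces to Gaussian marginalization.
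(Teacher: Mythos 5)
Your proof is correct and takes essentially the same route as the paper, which states that the proposition ``follows directly from Lemma \ref{lem:proj}'': apply the lemma pointwise to each $\tX(\omega)=\GFV(\bM(\omega),\bH)$, note that the resulting precision matrix $\bH'_{11}$ is the same for all $\omega$, and use Gaussian marginalization to get $\bM_1\sim\calN(\bmu_1,\bSigma_{11})$. Your added remarks on the positive semidefiniteness of the Schur complement and on the strong measurability of the $\alpha$-cuts are details the paper leaves implicit, and they are correct.
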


\paragraph{Vacuous extension} We now consider a Gaussian fuzzy vector  $\GFV(\bm_1,\bH_{11})$ in $\Theta_1=\reels^{p-k}$  for $0<k<p$. Its cylindrical extension in $\Theta=\Theta_1\times\Theta_2$, with $\Theta_2=\reels^k$ has the following membership function
\[
\varphi(\bx)=\exp\left(-\frac12 (\bx_1-\bm_1)^T \bH_{11}(\bx_1-\bm_1)\right),
\]
which can be written as
\[
\varphi(\bx)=\exp\left(-\frac12 (\bx-\bm)^T \bH(\bx-\bm)\right),
\]
where $\bm$ is the $p$-dimensional vector
\[
\bm=\begin{pmatrix} \bm_1 \\ \bzero \end{pmatrix}
\]
and
$\bH$ is the $p\times p$ matrix
\begin{equation}
\label{eq:bH}
\bH=\block{\bH_{11}}{\bzero}{\bzero}{\bzero}.
\end{equation}

Given a GRFV $\tX_1\sim\tN(\bmu_1,\bSigma_{11},\bH_{11})$ taking values in $2^{\Theta_1}$, it follows immediately that its vacuous extension   in $\Theta=\Theta_1\times\Theta_2$ is the GRFV
\[
\tX_{1\uparrow (1,2)} \sim \tN(\bmu,\bSigma,\bH)
\]
with
\[
\bmu=\begin{pmatrix} \bmu_1 \\ \bzero \end{pmatrix}, \quad \bSigma=\block{\bSigma_{11}}{\bzero}{\bzero}{\bI_k},
\]
where $\bI_k$ is the $k\times k$ identity matrix, and $\bH$ given by \eqref{eq:bH}.

\paragraph{Noninteractivity} In Section \ref{subsec:margin}, we defined the notion of noninteractive random fuzzy vector. The following proposition gives  a necessary and sufficient condition for a GRFV to be noninteractive.

\begin{Prop}
A $p$-dimensional GRFV $\tX\sim\tN(\bmu,\bSigma,\bH)$ is noninteractive iff matrices $\bSigma$ and $\bH$ are diagonal.
\end{Prop}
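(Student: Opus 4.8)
The plan is to reduce non-interactivity to the action of the generalized product-intersection rule on vacuous extensions, and to exploit the fact that these extensions carry precision on complementary blocks of coordinates. First I would record the parameters of the two pieces to be combined. Writing $\btheta=(\btheta_1,\btheta_2)$ with the block decompositions of $\bmu$, $\bSigma$ and $\bH$, the marginalization proposition gives the marginal of $\tX$ on $\Theta_1$ as $\tX_1\sim\tN(\bmu_1,\bSigma_{11},\bH'_{11})$ with $\bH'_{11}=\bH_{11}-\bH_{12}\bH_{22}^{-1}\bH_{21}$, and the vacuous-extension formula gives $\tX_{1\uparrow (1,2)}\sim\tN\bigl(\begin{pmatrix}\bmu_1\\\bzero\end{pmatrix},\block{\bSigma_{11}}{\bzero}{\bzero}{\bI_k},\block{\bH'_{11}}{\bzero}{\bzero}{\bzero}\bigr)$, and symmetrically for $\tX_{2\uparrow (1,2)}$.

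The decisive observation is that these two vacuous extensions put their precision on disjoint coordinate blocks, so the product of the membership functions $\varphi(\cdot;\cdot,\block{\bH'_{11}}{\bzero}{\bzero}{\bzero})$ and $\varphi(\cdot;\cdot,\block{\bzero}{\bzero}{\bzero}{\bH'_{22}})$ separates into a factor in $\bx_1$ times a factor in $\bx_2$. Its height is attained at $\bx_1=\bM_1,\bx_2=\bM_2$ and equals $1$ for every pair of modes; hence the consistency fuzzy set $\tTheta^*$ is identically $1$, the degree of conflict is $\kappa=0$, and soft normalization leaves $P_1\times P_2$ unchanged. The underlying modes therefore stay independent, and the orthogonal-sum proposition yields combined precision $\block{\bH'_{11}}{\bzero}{\bzero}{\bH'_{22}}$ and combined mode $\bM_{12}=\bA(\bM_1,\bM_2)^T$ equal to the stacked vector $(\bM_1,\bM_2)$ (the complementary precision blocks make $\bA$ return each mode on its own block), whose covariance is $\block{\bSigma_{11}}{\bzero}{\bzero}{\bSigma_{22}}$. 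Thus the orthogonal sum of the vacuous extensions is always a GRFV with block-diagonal variance and precision matrices; applying the argument to the partition into singleton coordinates makes both matrices diagonal.

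From here both implications follow. For necessity, if $\tX$ is non-interactive then it coincides with this orthogonal sum, so, since a GRFV is determined by its parameters $(\bmu,\bSigma,\bH)$, the matrices $\bSigma$ and $\bH$ must be diagonal. For sufficiency, if $\bSigma$ and $\bH$ are diagonal then every off-diagonal block of $\bH$ vanishes, the Schur complements reduce to $\bH'_{11}=\bH_{11}$ and $\bH'_{22}=\bH_{22}$ (and likewise coordinatewise), and the computation above shows that the orthogonal sum of the vacuous extensions of the marginals reproduces exactly $\tN(\bmu,\bSigma,\bH)=\tX$, so $\tX$ is non-interactive.

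The step I expect to be the main obstacle is justifying the combination of the two vacuous extensions, since their precision matrices are singular whereas Lemma \ref{lem:Gauss1} and the ensuing orthogonal-sum proposition were derived under positive-definiteness of $\bH_1$ and $\bH_2$. The clean way around this is precisely the separability argument above: because $\tTheta^*\equiv 1$ there is no conditioning to perform, so the combined mode and its covariance are read directly from $\bM_{12}=\bA(\bM_1,\bM_2)^T$ without ever inverting $\obH=(\bH_1^{-1}+\bH_2^{-1})^{-1}$, which sidesteps the non-invertible blocks. A secondary point to verify is the degenerate case $h_i=0$ (a vacuous coordinate), where the nonsingularity hypothesis of the marginalization proposition is not actually needed, because the vanishing off-diagonal block makes the Schur-complement correction identically zero.
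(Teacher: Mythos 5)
Your proof is correct and takes essentially the same route as the paper's: both form the vacuous extensions of the one-dimensional marginals, observe that the product of their membership functions separates across coordinates with height $1$ (so $\kappa=0$, no conditioning is needed, and the modes stay independent), conclude that the orthogonal sum is a GRFV with diagonal precision and covariance matrices, and finish by matching parameters in both directions. Your explicit workaround for the singular precision matrices (computing the product directly rather than invoking Lemma \ref{lem:Gauss1}) and your remarks on the Schur complements simply make rigorous what the paper's proof does implicitly.
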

\begin{proof}
Let $\tX_1,\ldots,\tX_p$ be the marginals of $\tX$ on each of the $p$ coordinates. Let $\sigma^2_1,\ldots,\sigma^2_p$ and $h_1,\ldots,h_p$ be the diagonal elements of, respectively, $\bSigma$ and $\bH$.  Let $\Omega$ be the set of departure of $\tX$. Let  $\tX_{i\uparrow(1:p)}$ denote the vacuous extension of $\tX_i$ in $\reels^p$, defined by
\[
\tX_{i\uparrow(1:p)}(\omega)(\bx)=\exp\left(-\frac{h}2 (x_i-M_i(\omega))^2\right)
\]
with $M_i\sim N(\mu_i,\sigma^2_i)$. The orthogonal sum
\[
\tX'=\tX_{1\uparrow(1:p)}\oplus\ldots\oplus\tX_{p\uparrow(1:p)}
\]
is given by
\[
\tX'(\omega)(\bx)=\prod_{i=1}^p \exp\left(-\frac{h}2 (x_i-M_i(\omega))^2\right)=\exp\left(-\frac12(\bx-\bM'(\omega))^T\bH'(\bx-\bM'(\omega))\right),
\]
where $\bH'$ is the diagonal matrix with diagonal elements $h_1,\ldots,h_p$, and $\bM'$ is a random vector with mean $\bmu$ and diagonal covariance matrix $\bSigma'$ with diagonal elements $\sigma^2_1,\ldots,\sigma^2_p$.
We have $\tX=\tX'$ iff $\bH=\bH'$ and $\bSigma=\bSigma'$, i.e., if both $\bH$ and $\bSigma$ are diagonal.  
 \end{proof}

\new{
\subsection{Comparison with Dempster's normal belief functions}
\label{subsec:dempster}

In  \cite{dempster01}, Dempster introduced another class  of continuous belief functions in $\reels^p$, called \emph{normal belief functions}\footnote{Ref. \cite{dempster01} was actually available as a working paper from the Statistical Department of Harvard University since 1990, but it only appeared as a book chapter in 2001.}. It is interesting to compare Dempster's model with ours, as  both models generalize the multivariate Gaussian distribution. A normal belief function $Bel$ on $\reels^p$ as defined in \cite{dempster01} is specified by the following components:
\bi
\item An $n$-dimensional subspace $\calS$ of $\reels^p$;
\item A $q$-dimensional partition $\Pi$ of $\calS$ into parallel $n-q$ dimensional subspaces; (If $q=0$, $\Pi=\{\calS\}$);
\item A full-rank $q$-dimensional Gaussian distribution $N(\mu,\Sigma)$ on $\Pi$ if $q>0$, or  the discrete probability measure with mass function  $m(\calS)=1$ if $q=0$. 
\ei 
Belief function $Bel$ is then induced by a random set from $\Pi$, equipped with the normal distribution $N(\mu,\Sigma)$ if $q>0$ or probability mass function $m$ if $q=0$, to the corresponding family of parallel $n-q$ dimensional subspaces of $\calS$. The following special cases are of interest:
\be
\item  If $p=n=q$, $Bel$ is a Gaussian probability distribution on $\reels^p$; 
\item If $p>n=q$, $Bel$ is a Gaussian probability distribution limited to an $n$-dimensional subspace of $\reels^p$;
\item If $p=n$ and $q=0$, $Bel$ is vacuous;
\item If $q=0$ while $p>n>0$,  $Bel$ is logical with $\calS$ as its only focal set; it is then equivalent to specifying $p-n$ linear equations;
\item If $n=q=0$, the true point in $\reels^p$ is known with certainty.
\ee
Like GRFV's, Dempster's normal belief functions thus include the vacuous belief function, Gaussian probability distributions, as well as vacuous extensions of marginal Gaussian distributions. However, the two models are clearly distinct. Dempster's model is based on the combination of  Gaussian probability distributions and linear equations, and is specially useful in relation with linear statistical models such as the Kalman filter \cite{dempster01} or linear regression \cite{monney03}. In contrast, in the GRFV model, focal sets are fuzzy subsets of $\reels^n$ ($n\le p$) with Gaussian membership functions, or cylindrical extensions of such fuzzy subsets. This model allows us to represent not only probabilistic and logical evidence, but also fuzzy information. In particular, it includes Gaussian probability distribution and Gaussian possibility distributions as special cases. We could attempt  to design an even more general model that would contain both Dempster's normal belief functions and belief functions induced by GRFV's as special cases.  Such a model would allow us to reason with Gaussian probability and possibility distributions as well as with linear equations. The rigorous development of such a model is left for further research.
}

\section{Conclusions}
\label{sec:concl}

In this paper, continuing a study started in \cite{denoeux21a} with the finite case, we have introduced a theory of epistemic random fuzzy sets in a general setting. An epistemic random fuzzy set represents a piece of evidence, which may be crisp or fuzzy. This framework generalizes both epistemic random sets as considered in the Dempster-Shafer theory of belief functions, and possibility distributions considered in possibility theory. Independent epistemic random fuzzy sets are combined by the generalized product-intersection rule, which extends both Dempster's rule for combining belief functions and the product intersection rule for combining possibility distributions. 

In addition, we have also introduced Gaussian random fuzzy numbers (GRFN's) and their multidimensional extensions, Gaussian random fuzzy vectors (GRFV's) as practical models of random fuzzy subsets of, respectively,  $\reels$ and $\reels^p$ with $p\ge 2$. A GRFN is described by three parameters:  its mode $m$, its variance $\sigma^2$ and its precision $h$. In this setting, a Gaussian random variable can be seen as an infinitely precise GRFN ($h=+\infty$),  while a Gaussian possibility distribution is a  constant GRFN ($\sigma^2=0$). A maximally imprecise GRFN such that  $h=0$ is said to be vacuous: it represents complete ignorance. In GRFV's, the mode becomes a $p$-dimensional vector, while the variance and precision become positive semi-definite $p\times p$ square matrices. The practical convenience of GRFN's and GRFV's arises from the fact that they can easily be combined by the generalized product-intersection rule. Also, formulas for the projection and marginal extension fo GRFV's have been derived.

This work opens up several perspectives. Using random fuzzy sets and, in particular, GRFN's to represent expert knowledge about numerical quantities will require the development of adequate elicitation procedures. We also consider using this framework in machine learning, to quantify prediction uncertainty in regression problems. \new{Finally, the extension of the model introduced in this paper to take into account linear equations, as well as the development of computational procedures for reasoning with GRFV's over many variables are promising avenues for further  research.}

\section*{References}

\appendix

\section{Proof of Proposition \ref{prop:dempster_assoc} }
\label{app:dempster_assoc}

Commutativity is obvious. To prove associativity, let us consider three random sets $(\Omega_i,\sigma_i,P_i,\Theta,\sigma_\Theta,\bbX_i)$, $i=1,2,3$. Consider the combined random set
\begin{equation}
\label{eq:comb1}
(\Omega_1\times\Omega_2\times\Omega_3,\sigma_1\otimes\sigma_2\otimes\sigma_3,P_{123},\Theta, \sigma_\Theta,\bbX_{1\cap 2\cap 3}),
\end{equation}
where
\[
\bbX_{1\cap 2\cap 3}(\omega_1,\omega_2,\omega_3)=\bbX_1(\omega_1) \cap  \bbX_2(\omega_2) \cap \bbX_3(\omega_3),
\]
\[
P_{123}=(P_1\times P_2 \times P_3)(\cdot \mid \Theta^*_{123}),
\]
and
\[
\Theta^*_{123}=\{(\omega_1,\omega_2,\omega_3)\in \Omega_1\times \Omega_2 \times \Omega_2: \bbX_{1\cap 2\cap 3}(\omega_1,\omega_2,\omega_3)\neq \emptyset\}.
\]
We will show that we get the same result by combining $\bbX_1$ with $\bbX_2$ first, and then combining the result with $\bbX_3$. Combining the first two random sets, we get
\[
(\Omega_1\times\Omega_2,\sigma_1\otimes\sigma_2,P_{12},\Theta, \sigma_\Theta,\bbX_{1\cap 2}),
\]
with $\bbX_{1\cap 2}(\omega_1,\omega_2)=\bbX_1(\omega_1)\cap \bbX_2(\omega_2)$, $P_{12}=(P_1\times P_2)(\cdot\mid \Theta^*_{12})$ and 
\[
\Theta^*_{12}=\{(\omega_1,\omega_2)\in \Omega_1\times \Omega_2 : \bbX_{1\cap 2}(\omega_1,\omega_2)\neq \emptyset\}.
\] 
Combining it with $\bbX_3$ we get
\begin{equation}
\label{eq:comb2}
(\Omega_1\times\Omega_2\times\Omega_3,\sigma_1\otimes\sigma_2\otimes\sigma_3,P_{(12)3},\Theta, \sigma_\Theta,\bbX_{1\cap 2\cap 3}),
\end{equation}
with $P_{(12)3}=(P_{12}\times P_3)(\cdot\mid \Theta^*_{123})$. Comparing \eqref{eq:comb1} and \eqref{eq:comb2}, we see that we only need to show that $P_{(12)3}=P_{123}$. For any event $C\subseteq \Theta^*_{123}$ and any $\omega_3\in\Omega_3$, let $C_{\omega_3}=\{(\omega_1,\omega_2)\in \Omega_1\times\Omega_2: (\omega_1,\omega_2,\omega_3)\in C\}$. By definition of the product measure $P_{12}\times P_3$ (see \cite[page 144] {halmos50}), we have
\begin{equation}
\label{eq:P12_3}
P_{(12)3}(C)=\frac{(P_{12}\times P_3)(C)}{(P_{12}\times P_3)(\Theta^*_{123})}=\frac{1}{(P_{12}\times P_3)(\Theta^*_{123})}\int P_{12}(C_{\omega_3})dP_3(\omega_3) 
\end{equation}
Now, as $C\subseteq \Theta^*_{123}$, for any $(\omega_1,\omega_2)\in C_{\omega_3}$, $\bbX_1(\omega_1)\cap\bbX_2(\omega_2) \neq\emptyset$. Consequently, $C_{\omega_3}\subseteq \Theta^*_{12}$, so
\begin{equation}
\label{eq:P12bis}
P_{12}(C_{\omega_3})=\frac{(P_1\times P_2)(C_{\omega_3})}{(P_1\times P_2)(\Theta^*_{12})}.
\end{equation}
From \eqref{eq:P12_3} and \eqref{eq:P12bis}, we get
\begin{subequations}
\begin{align}
P_{(12)3}(C)&=\frac{1}{(P_{12}\times P_3)(\Theta^*_{123}) (P_1\times P_2)(\Theta^*_{12})}\int (P_1\times P_2)(C_{\omega_3})dP_3(\omega_3)\\
\label{eq:P12_3bis}
 &=\frac{(P_1\times P_2\times P_3)(C)}{(P_{12}\times P_3)(\Theta^*_{123}) (P_1\times P_2)(\Theta^*_{12})}.
\end{align}
\end{subequations}
Now,
\begin{equation}
\label{eq:P123bis}
P_{123}(C)=\frac{(P_1\times P_2\times P_3)(C)}{(P_{1}\times P_2\times P_3)(\Theta^*_{123})}.
\end{equation}
As $P_{(12)3}(\Theta^*_{123})=P_{123}(\Theta^*_{123})=1$, the denominators in \eqref{eq:P12_3bis} and \eqref{eq:P123bis} are equal, and $P_{(12)3}=P_{123}$.

\section{Proof of Proposition \ref{prop:assoc_soft}}
\label{app:assoc_soft}
Commutativity is obvious. To prove associativity, consider three random fuzzy sets
\[
(\Omega_i,\sigma_{i},P_i,\Theta, \sigma_\Theta,\tX_i),  \quad i=1,2, 3.
\] 
Let $\tTheta^*_{12}$ be the fuzzy subset of $\Omega_1\times\Omega_2$ with membership function
\[
\tTheta_{12}^*(\omega_1,\omega_2)= \height\left(\tX_1(\omega_1) \tX_2(\omega_2)\right),
\]
and let $\tTheta^*_{(12)3}$ and $\tTheta^*_{123}$ be the fuzzy subsets of $\Omega_1\times\Omega_2\times\Omega_3$ defined, respectively, as
\[
\tTheta_{(12)3}^*(\omega_1,\omega_2,\omega_3)= \height\left(\left[\tX_1(\omega_1)\varodot\tX_2(\omega_1)\right]  \tX_3(\omega_3)\right)
\]
and
\[
\tTheta_{123}^*(\omega_1,\omega_2,\omega_3)= \height\left(\tX_1(\omega_1) \tX_2(\omega_2) \tX_3(\omega_3)\right).
\]
Let $\tP_{12}=(P_1\times P_2)(\cdot \mid \tTheta_{12}^*)$, $\tP_{(12)3}=(\tP_{12}\times P_3)(\cdot \mid \tTheta_{(12)3}^*)$, and $\tP_{123}=(P_1\times P_2 \times P_3)(\cdot \mid \tTheta_{123}^*)$. We only need to show that $\tP_{(12)3}=\tP_{123}$. For any $B\in \sigma_1\otimes\sigma_2\otimes\sigma_3$, we have
\begin{align*}
\tP_{(12)3}(B) &\propto \int_{\Omega_1\times\Omega_2}\int_{\Omega_3}B(\omega_1,\omega_2,\omega_3) \height\left(\left[\tX_1(\omega_1)\varodot\tX_2(\omega_1)\right]  \tX_3(\omega_3)\right) dP_3(\omega_3) d\tP_{12}(\omega_1,\omega_2) \\
&\propto \int_{\Omega_1}\int_{\Omega_2}\int_{\Omega_3}B(\omega_1,\omega_2,\omega_3) \height\left(\left[\tX_1(\omega_1)\varodot\tX_2(\omega_1)\right]  \tX_3(\omega_3)\right) \times \\
& \hspace{5cm} \height\left(\tX_1(\omega_1) \tX_2(\omega_2)\right)dP_3(\omega_3) dP_{2}(\omega_2) dP_1(\omega_1).
\end{align*}
Now,
\begin{align*}
\height\left(\left[\tX_1(\omega_1)\varodot\tX_2(\omega_1)\right]  \tX_3(\omega_3)\right)&=
\height\left(\frac{\tX_1(\omega_1)\tX_2(\omega_1)}{\height(\tX_1(\omega_1)\tX_2(\omega_2))} \tX_3(\omega_3)\right)\\
&=\frac{\height(\tX_1(\omega_1)\tX_2(\omega_2)\tX_3(\omega_3))}{\height(\tX_1(\omega_1)\tX_2(\omega_1))}.
\end{align*}
Hence,
\[
\tP_{(12)3}(B) \propto \int_{\Omega_1}\int_{\Omega_2}\int_{\Omega_3}B(\omega_1,\omega_2,\omega_3)  \height\left(\tX_1(\omega_1) \tX_2(\omega_2)  \tX_3(\omega_3)\right)dP_3(\omega_3) dP_{2}(\omega_2) dP_1(\omega_1),
\]
which proves that $\tP_{(12)3}=\tP_{123}$, and the associativity of $\oplus$.

\section{Proof of Proposition \ref{prop:contourRFS}}
\label{app:contourRFS}

We have
\begin{align}
pl_\tX(x)=&\esp_M[\varphi(x;M,h)]\\
=&\int_{-\infty}^{+\infty} \varphi(x;m,h) \phi(m;\mu,\sigma) dm\\
=&\frac{1}{\sigma\sqrt{2\pi}}\int_{-\infty}^{+\infty} \exp\left(-\frac{h}{2}(x-m)^2\right) \exp\left( -\frac{(m-\mu)^2}{2\sigma^2}\right)dm.
\end{align}
From Proposition \ref{prop:prodphi}, the integrand can be written as
\[
\exp\left(-\frac{(m-\mu_0)^2}{2\sigma_0^2}\right) \exp\left(-\frac{h(x-\mu)^2}{2(1+h\sigma^2)}\right),
\]
with
\[
\mu_0=\frac{xh+\mu/\sigma^2}{h+1/\sigma^2}=\frac{xh\sigma^2+\mu}{h\sigma^2+1}
\]
and
\[
\sigma_0=\sqrt{\frac{1}{h+1/\sigma^2}}=\frac{\sigma}{\sqrt{1+h\sigma^2}}.
\]
Consequently,
\begin{align}
pl_\tX(x)=&\frac{1}{\sigma\sqrt{2\pi}} \exp\left(-\frac{h(x-\mu)^2}{2(1+h\sigma^2)}\right) \underbrace{\int_{-\infty}^{+\infty} \exp\left(-\frac{(m-\mu_0)^2}{2\sigma_0^2}\right)dm}_{\sigma_0\sqrt{2\pi}}\\
&= \frac{1}{\sqrt{1+h\sigma^2}} \exp\left(-\frac{h(x-\mu)^2}{2(1+h\sigma^2)}\right).
\end{align}

\section{Proof of Proposition \ref{prop:BelPlRFS}}
\label{app:BelPlRFS}

If $h=0$, we have, trivially, $Bel_\tX([x,y])=0$ and $Pl_\tX([x,y])=1$ for all $x\le y$. Let us assume that $h>0$.  We have
\begin{multline}
Pl_\tX([x,y])=\Pr(M\le x) \esp[\varphi(x;M,h)\mid M\le x]+\\
\Pr(x < M\le y) \times 1+ 
\Pr(M> y) \esp[\varphi(y;M,h)\mid M>y],
\end{multline}
which can be written as
\begin{multline}
Pl_\tX([x,y])=\Phi\fracpar{x-\mu}{\sigma} \esp[\varphi(x;M,h)\mid M\le x]+\\
 \Phi\fracpar{y-\mu}{\sigma} - \Phi\fracpar{x-\mu}{\sigma}+ \\
\left[1-\Phi\fracpar{y-\mu}{\sigma}\right] \esp[\varphi(y;M,h)\mid M>y].
\end{multline}
Conditionally on $M\le x$, $M$ has a truncated normal distribution on $(-\infty,x]$ with pdf
\[
f(m)=\frac{1}{\sigma\sqrt{2\pi}}\frac{\exp\fracpar{-(m-\mu)^2}{2\sigma^2}}{\Phi\fracpar{x-\mu}{\sigma}} \one_{(-\infty,x]}(m).
\]
Consequently,
\begin{equation}
\label{eq:Ephi}
\esp[\varphi(x;M,h)\mid M\le x]=
\frac{1}{\sigma\sqrt{2\pi}}\frac{1}{\Phi\fracpar{x-\mu}{\sigma}} \underbrace{\int_{-\infty}^x \exp\left(-\frac{h}{2}(x-m)^2\right) \exp\left( -\frac{(m-\mu)^2}{2\sigma^2}\right)dm}_{I}.
\end{equation}
From Proposition \ref{prop:prodphi},  integral $I$ in \eqref{eq:Ephi} can be written as
\[
I = \sigma_0\sqrt{2\pi}\Phi\fracpar{x-\mu_0}{\sigma_0}\exp\left(- \frac{(x-\mu)^2}{2(h^{-1}+\sigma^2)}\right),
\]
with 
\[
\mu_0=\frac{xh\sigma^2+\mu}{h\sigma^2+1} \quad \text{and} \quad \sigma_0=\frac{\sigma}{\sqrt{h\sigma^2+1}}.
\]
Consequently,
\[
\esp[\varphi(x;M,h)\mid M\le x]=
\frac{1}{\Phi\fracpar{x-\mu}{\sigma}} pl_\tX(x) \Phi\fracpar{x-\mu}{\sigma\sqrt{h\sigma^2+1}}.
\]
Using similar calculations, we find
\[
\esp[\varphi(y;M,h)\mid M>y]=
\frac{1}{1-\Phi\fracpar{y-\mu}{\sigma}} pl_\tX(y) \left[1-\Phi\fracpar{y-\mu}{\sigma\sqrt{h\sigma^2+1}}\right],
\]
which concludes the proof of \eqref{eq:plint}. 

Now, let us consider \eqref{eq:belint}. We have
\[
Bel_\tX([x,y])=1-Pl_\tX((-\infty,x] \cup [y,+\infty)),
\]
and
\begin{multline}
Pl_\tX((-\infty,x] \cup [y,+\infty))=\Pr(M\le x) \times 1 +\\
\Pr(x < M\le (x+y)/2)\esp[\varphi(x;M,h)\mid x < M\le (x+y)/2]+\\
\Pr((x+y)/2 < M\le y)\esp[\varphi(y;M,h)\mid (x+y)/2 < M\le y]+
\Pr(M> y) \times 1,
\end{multline}
which can be written as
\begin{multline}
Pl_\tX((-\infty,x] \cup [y,+\infty))=\Phi\fracpar{x-\mu}{\sigma}  +\\
\left[\Phi\fracpar{(x+y)/2-\mu}{\sigma} - \Phi\fracpar{x-\mu}{\sigma}\right]\esp[\varphi(x;M,h)\mid x < M\le (x+y)/2]+\\
\left[\Phi\fracpar{y-\mu}{\sigma}-\Phi\fracpar{(x+y)/2-\mu}{\sigma}\right]\esp[\varphi(y;M,h)\mid (x+y)/2 < M\le y]+\\
1-\Phi\fracpar{y-\mu}{\sigma}.
\end{multline}
Conditionally on $x < M\le (x+y)/2$, $M$ has a truncated normal distribution on $(x,(x+y)/2]$ with pdf
\[
f(m)=\frac{1}{\sigma\sqrt{2\pi}}\frac{\exp\fracpar{-(m-\mu)^2}{2\sigma^2}}{\Phi\fracpar{(x+y)/2-\mu}{\sigma} - \Phi\fracpar{x-\mu}{\sigma}}  \one_{(x,(x+y)/2]}(m).
\]
Consequently,
\begin{multline}
\label{eq:Ephi2}
\esp[\varphi(x;M,h)\mid x < M\le (x+y)/2]=
\frac{1}{\sigma\sqrt{2\pi}}\frac{1}{\Phi\fracpar{(x+y)/2-\mu}{\sigma} - \Phi\fracpar{x-\mu}{\sigma}} \times \\\underbrace{\int_{x}^{(x+y)/2} \exp\left(-\frac{h}{2}(x-m)^2\right) \exp\left( -\frac{(m-\mu)^2}{2\sigma^2}\right)dm}_{I'}.
\end{multline}
The integral in \eqref{eq:Ephi2} is, with the same notations as before,
\[
I'=\sigma_0\sqrt{2\pi}\left[\Phi\fracpar{(x+y)/2-\mu_0}{\sigma_0} - \Phi\fracpar{x-\mu_0}{\sigma_0}\right]\exp\left(- \frac{(x-\mu)^2}{2(h^{-1}+\sigma^2)}\right).
\]
Consequently,
\begin{multline}
\esp[\varphi(x;M,h)\mid x < M\le (x+y)/2]=\\
\frac{1}{\Phi\fracpar{(x+y)/2-\mu}{\sigma} - \Phi\fracpar{x-\mu}{\sigma}} pl_\tX(x) \left[\Phi\fracpar{(x+y)/2-\mu+ h\sigma^2(y-x)/2}{\sigma\sqrt{h\sigma^2+1}} \right.\\
\left. - \Phi\fracpar{x-\mu}{\sigma\sqrt{h\sigma^2+1}}\right].
\end{multline}
Similarly, we find
\begin{multline}
\esp[\varphi(y;M,h)\mid (x+y)/2 < M\le y]=\\
\frac{1}{\Phi\fracpar{y-\mu}{\sigma} - \Phi\fracpar{(x+y)/2-\mu}{\sigma}} pl_\tX(y) \left[\Phi\fracpar{y-\mu}{\sigma\sqrt{h\sigma^2+1}} - \right.\\
\left. \Phi\fracpar{(x+y)/2-\mu-(y-x)h\sigma^2/2}{\sigma\sqrt{h\sigma^2+1}}\right].
\end{multline}
The expressions of $Pl_\tX((-\infty,x] \cup [y,+\infty))$ and $Bel_\tX([x,y])$ follow.

\section{Proof of Proposition \ref{prop:expect}}
\label{app:expect}

Let $\tX(\omega)=\GFN(M(\omega),h)$ be the image of $\omega\in\Omega$ by $\tX$, with $M\sim N(\mu,\sigma^2)$. For any $\alpha\in (0,1]$, its alpha-cut is the random interval
\[
\cut{\alpha}{\tX}(\omega)=\left[M(\omega)-\sqrt{\frac{-2\ln\alpha}h},M(\omega)+\sqrt{\frac{-2\ln\alpha}h}\right].
\]
Consequently, from \eqref{eq:calclowerupper}, the lower and upper expectation of $\tX$ are
\[
\esp_*(\tX)=\mu-\int_0^1 \sqrt{\frac{-2\ln\alpha}h} d\alpha,
\]
and
\[
\esp^*(\tX)=\mu+\int_0^1 \sqrt{\frac{-2\ln\alpha}h} d\alpha.
\]
By the change of variable $\beta=\sqrt{-2(\ln\alpha)/h}$, we get
\[
\int_0^1 \sqrt{\frac{-2\ln\alpha}h} d\alpha=h \int_0^{+\infty} \beta^2\exp\left(-\frac{h\beta^2}2\right) d\beta. 
\]
Now, the second-order moment of the normal distribution $N(0,1/h)$ is
\[
\sqrt{\frac{h}{2\pi}}\int_{-\infty}^{+\infty} \beta^2\exp\left(-\frac{h\beta^2}2\right) d\beta=\frac{1}h,
\]
from which we get
\[
h \int_0^{+\infty} \beta^2\exp\left(-\frac{h\beta^2}2\right) d\beta=h \cdot \frac1h \sqrt{\frac{\pi}{2h}}=\sqrt{\frac{\pi}{2h}}.
\]

\section{Proof of Lemma \ref{lem:Gauss}}
\label{app:Gauss} 

The conditional density of $(M_1,M_2)$ is
\begin{equation}
\label{eq:condf}
f(m_1,m_2\mid \tF) = \frac{f(m_1,m_2) \tF(m_1,m_2)}{\iint f(m_1,m_2) \tF(m_1,m_2) dm_1dm_2}.
\end{equation}
The numerator on the right-hand side of \eqref{eq:condf} is
\begin{multline}
\label{eq:condf1}
\frac{1}{2\pi\sigma_1\sigma_2} \exp\left\{-\frac12 \left[\left(\frac{m_1-\mu_1}{\sigma_1}\right)^2 + \left(\frac{m_2-\mu_2}{\sigma_2}\right)^2\right]\right\}  \exp\left\{-\frac{\oh(m_1-m_2)^2}{2}\right\}\\
= \frac{1}{2\pi\sigma_1\sigma_2} \exp\left\{-\frac12 \left[m_1^2\left(\frac1{\sigma_1^2}+\oh\right) - \frac{2m_1\mu_1}{\sigma_1^2} +\frac{\mu_1^2}{\sigma_1^2} +  \right.\right.\\
\left.\left. m_2^2\left(\frac1{\sigma_2^2}+\oh\right) -\frac{2m_2\mu_2}{\sigma_2^2} +\frac{\mu_2^2}{\sigma_2^2} -2\oh m_1m_2\right]\right\}.
\end{multline}
Now, the two-dimensional Gaussian density with parameters $(\tmu_1,\tmu_2,\tsigma_1,\tsigma_2,\rho)$ equals
\begin{multline}
\label{eq:condf2}
\frac{1}{2\pi\tsigma_1\tsigma_2\sqrt{1-\rho^2}} \exp\left\{-\frac1{2(1-\rho)^2} \left[ \left(\frac{m_1-\tmu_1}{\tsigma_1}\right)^2 - \right.\right.\\
\left.\left.2\rho\left(\frac{m_1-\tmu_1}{\tsigma_1}\right)\left(\frac{m_2-\tmu_2}{\tsigma_2}\right) + \left(\frac{m_2-\tmu_2}{\tsigma_2}\right)^2 \right]\right\}.
\end{multline}
Equating the second and first-order terms inside the exponentials in  \eqref{eq:condf1} and \eqref{eq:condf2} gives us
\begin{subequations}
\begin{align}
\label{eq:parGsig1bis}
\tsigma_1&=\frac1{1-\rho^2}\left(\frac1{\sigma_1^2} + \oh\right)^{-1}\\
\label{eq:parGsig2bis}
\tsigma_2&=\frac1{1-\rho^2}\left(\frac1{\sigma_2^2} + \oh\right)^{-1}\\
\label{eq:parGrhobis}
\rho &= \frac{\oh\sigma_1\sigma_2}{\sqrt{(1+\oh\sigma_1^2)(1+\oh\sigma_2^2)}}\\
\label{eq:parGmu1bis}
\tmu_1 &= \frac{\mu_1\tsigma_1^2}{\sigma_1^2} + \rho \mu_2 \frac{\tsigma_1\tsigma_2}{\sigma_2^2}\\
\label{eq:parGmu2bis}
\tmu_2 &= \frac{\mu_2\tsigma_2^2}{\sigma_2^2} + \rho \mu_1 \frac{\tsigma_1\tsigma_2}{\sigma_1^2}.
\end{align}
\end{subequations} 
Replacing  $\rho$ by its expression \eqref{eq:parGrhobis} in \eqref{eq:parGsig1bis} and \eqref{eq:parGsig2bis} yields \eqref{eq:parGsig1} and \eqref{eq:parGsig2}. Replacing $\rho$, $\tsigma_1$ and $\tsigma_2$ by their expressions in \eqref{eq:parGmu1bis} and \eqref{eq:parGmu2bis} gives \eqref{eq:parGmu1} and \eqref{eq:parGmu2}.

Finally, the degree of conflict  between GRFN's $\tX_1\sim \tN(\mu_1,\sigma_1^2,h_1)$ and $\tX_2\sim \tN(\mu_2,\sigma_2^2,h_2)$ is
\[
\kappa=1-(P_1\times P_2)(\tTheta^*),
\]
with 
\[
(P_1\times P_2)(\tTheta^*) = \iint f(m_1,m_2) \tF(m_1,m_2) dm_1dm_2.
\]
Taking the ratio of \eqref{eq:condf1} to \eqref{eq:condf2}, we get
\begin{multline*}
 \iint f(m_1,m_2) \tF(m_1,m_2) dm_1dm_2=\\
 \frac{\tsigma_1\tsigma_2}{\sigma_1\sigma_2}\sqrt{1-\rho^2}\exp\left\{-\frac12 \left[\frac{\mu_1^2}{\sigma_1^2}+ \frac{\mu_2^2}{\sigma_2^2}\right] +\frac1{2(1-\rho^2)} \left[\frac{\tmu_1^2}{\tsigma_1^2}+ \frac{\tmu_2^2}{\tsigma_2^2} - 2\rho \frac{\tmu_1\tmu_2}{\tsigma_1\tsigma_2}\right]\right\}.
\end{multline*}

\section{Proof of Proposition \ref{prop:combRV}}
\label{app:combRV}

From \eqref{eq:h12}, $h_{12}=+\infty$ and the combined GRFN $\tN(\tmu_{12},\tsigma_{12}^2,h_{12})$ is probabilistic. From \eqref{eq:mu12} and \eqref{eq:sig12}, 
\[
\tmu_{12}=\lim_{h_1\rightarrow +\infty} \frac{\tmu_1+\frac{h_2}{h_1} \tmu_2}{1+\frac{h_2}{h_1}}=\tmu_1,
\]
and
\[
\tsigma_{12}^2=\lim_{h_1\rightarrow +\infty} \frac{ \tsigma^2_1+\frac{h^2_2}{h^2_1} \tsigma^2_2 + 2 \rho \frac{h_2}{h_1} \tsigma_1\tsigma_2}{(1+\frac{h_2}{h_1})^2}=\tsigma^2_1.
\]
From \eqref{eq:hbar},
\[
\oh=\lim_{h_1\rightarrow +\infty} \frac{h_2}{1+\frac{h_2}{h_1}}=h_2.
\]
From \eqref{eq:parGmu1} and \eqref{eq:parGsig1},
\[
\tmu_1 = \frac{\mu_1(1+h_2\sigma_2^2)+  \mu_2 h_2 \sigma_1^2}{1+h_2 (\sigma_1^2+\sigma_2^2)},
\]
and
\[
\tsigma_1^2= \frac{\sigma_1^2(1+h_2 \sigma_2^2)}{1+h_2 (\sigma_1^2+\sigma_2^2)}.
\]
Now, using Proposition \ref{prop:prodphi}, the product of the probability density of $X_1$ and the contour function of $\tX_2$ can be written as
\begin{align*}
f_{X_1}(x)pl_{\tX_2}(x) &\propto \exp\left(-\frac12 \frac{(x-\mu_1)^2}{\sigma_1^2}\right)\exp\left(- \frac{h_2(x-\mu_2)^2}{2(1+h_2\sigma_2^2)}\right)\\
&\propto \exp\left(-\frac{1}{2\sigma_{12}^2} (x-\mu_{12})^2\right),
\end{align*}
with
\[
\frac1{\sigma_{12}^2}=\frac1{\sigma_1^2}+ \frac{h_2}{1+h_2\sigma_2^2}=\frac{1+h_2 (\sigma_1^2+\sigma_2^2)}{\sigma_1^2(1+h_2 \sigma_2^2)}
\]
and
\[
\mu_{12}=\frac{\frac1{\sigma_1^2}\mu_1+\frac{h_2}{1+h_2\sigma_2^2}\mu_2}{\frac1{\sigma_1^2}+ \frac{h_2}{1+h_2\sigma_2^2}}=\frac{\mu_1(1+h_2\sigma_2^2)+  \mu_2 h_2 \sigma_1^2}{1+h_2 (\sigma_1^2+\sigma_2^2)}.
\]
We can check that $\mu_{12}=\tmu_1$ and $\sigma_{12}^2=\tsigma_{1}^2$.

\section{Proof of Proposition \ref{prop:contourRFS1}}
\label{app:contourRFS1}

We have
\begin{align}
pl_\tX(\bx)&=\esp_\bM[\varphi(\bx;\bM,\bH)]\\
&=\int_{\reels^p} \varphi(\bx;\bm,\bH) \phi(\bm;\bmu,\Sigma) d\bm\\
&=\frac{1}{(2\pi)^{p/2}|\bSigma|^{1/2}} \int_{\reels^p}  \exp\left(-\frac{1}{2}(\bx-\bm)^T \bH(\bx-\bm)\right) \times \\
& \hspace{4cm}\exp\left( -\frac12(\bm-\bmu)\bSigma^{-1}(\bm-\bmu)\right)d\bm.
\end{align}
From Proposition \ref{prop:prodphi}, the integrand can be written as
\[
\exp\left(-\frac12 (\bm-\bmu_0)^T\bSigma_0^{-1}(\bm-\bmu_0)\right) \exp\left(-\frac12 (\bx-\bmu)^T(\bH^{-1}+\bSigma)^{-1} (\bx-\bmu)\right),
\]
with
\[
\bmu_0=(\bH+\bSigma^{-1})^{-1}(\bH\bx+\bSigma^{-1}\bmu)
\]
and
\[
\bSigma_0=(\bH+\bSigma^{-1})^{-1}.
\]
Consequently,
\begin{align}
pl_\tX(\bx)&=\frac{1}{(2\pi)^{p/2}|\bSigma|^{1/2}} \exp\left(-\frac12 (\bx-\bmu)^T(\bH^{-1}+\bSigma)^{-1} (\bx-\bmu)\right) \times  \\
& \hspace{4cm}\underbrace{\int_{\reels^p}\exp\left(-\frac12 (\bm-\bmu_0)^T\bSigma_0^{-1}(\bm-\bmu0)\right)d\bm}_{(2\pi)^{p/2}|\bSigma_0|^{1/2}}\\
&= \left(\frac{|\bSigma_0|}{|\bSigma|}\right)^{1/2} \exp\left(-\frac12 (\bx-\bmu)^T(\bH^{-1}+\bSigma)^{-1} (\bx-\bmu)\right)\\
&=\frac{1}{|I_p+\bSigma H|^{1/2}}\exp\left(- \frac12 (\bx-\bmu)^T(\bH^{-1}+\bSigma)^{-1}(\bx-\bmu)\right).
\end{align}

\section{Proof of Lemma \ref{lem:Gauss1}}
\label{app:Gauss1} 

The conditional density of $\bM=(\bM_1,\bM_2)$ is
\begin{equation}
\label{eq:condf3}
f(\bm_1,\bm_2\mid \tF) = \frac{f(\bm_1,\bm_2) \tF(\bm_1,\bm_2)}{ \int_{\reels^{2p}}  f(\bm_1,\bm_2) \tF(\bm_1,\bm_2) d\bm_1d\bm_2}.
\end{equation}
The numerator on the right-hand side of \eqref{eq:condf3} is
\begin{multline}
\label{eq:condfmult1}
f(\bm_1,\bm_2) \tF(\bm_1,\bm_2)=\phi(\bm_1;\bmu_1,\bSigma_1)\phi(\bm_2;\bmu_2,\bSigma_2) \times \\
\exp\left\{-\frac12 (\bm_1-\bm_2)^T\obH(\bm_1-\bm_2)\right\},
\end{multline}
which can be written as 
\[
f(\bm_1,\bm_2) \tF(\bm_1,\bm_2)=\frac{1}{(2\pi)^p |\bSigma_1\bSigma_2|^{1/2}} \exp\left(-\frac{Z}2 \right)
\]
with
\begin{multline}
\label{eq:Z}
Z= \bm_1^T(\bSigma_1^{-1}+\obH)\bm_1+\bm_2^T(\bSigma_2^{-1}+\obH)\bm_2 -2 \bm_1^T\obH\bm_2 -2\bm_1^T\bSigma_1^{-1}\bmu_1 - \\
2\bm_2^T\bSigma_2^{-1}\bmu_2 + \bmu_1^T\bSigma_1^{-1}\bmu_1 + \bmu_2^T\bSigma_2^{-1}\bmu_2.
\end{multline}
Now, the $2p$-dimensional Gaussian density with mean $\tbmu$ and covariance matrix $\tbSigma$ equals
\begin{equation}
\label{eq:condfmult2}
\phi(\bm;\tbmu,\tbSigma)=\frac{1}{(2\pi)^p |\tbSigma|^{1/2}} \exp\left\{-\frac12 (\bm-\bmu)^T \tbSigma^{-1} (\bm-\bmu)\right\}.
\end{equation}
Decomposing vector $\tbmu$ as $\tbmu=(\tbmu_1,\tbmu_2)$, with $\tbmu_1,\tbmu_2\in \reels^p$, and $\tbSigma^{-1}$ as
\[
\tbSigma^{-1}=\block{\bA}{\bB}{\bB}{\bC},
\]
where $\bA$, $\bB$ and $\bC$ are $p\times p$ matrices, we can rewrite \eqref{eq:condfmult2} as
\[
\phi(\bm;\tbmu,\tbSigma)=\frac{1}{(2\pi)^p |\tbSigma|^{1/2}} \exp\left\{-\frac12 Z'\right\}
\] 
with
\begin{multline}
\label{eq:Zp}
Z'= \bm_1^T \bA \bm_1-2 \bm_1^T \bA \tmu_1 + \tmu_1^T\bA\tmu_1 +\bm_2^T \bC \bm_2-2 \bm_2^T \bC \tmu_2 + \tmu_2^T\bC\tmu_u+\\
2 \bm_2^T \bB \bm_1 - 2 \bm_2^T \bB \bmu_1- 2 \bm_1^T \bB \bmu_2 + 2 \bmu_2^T \bB \bmu_1.
\end{multline}
Equating the second-order terms in \eqref{eq:Z} and \eqref{eq:Zp}, we get
\[
\bA=\bSigma_1^{-1}+\obH, \quad \bC=\bSigma_2^{-1}+\obH, \quad \bB=-\obH.
\]
Equating the first-order terms, we get
\begin{subequations}
\begin{align}
\label{eq:bmu1}
\bSigma_1^{-1}\bmu_1&=\bA\tbmu_1+\bB\tbmu_2=(\bSigma_1^{-1}+\obH)\tbmu_1-\obH\tbmu_2,\\
\label{eq:bmu2}
\bSigma_2^{-1}\bmu_2&=\bB\tbmu_1+\bC\tbmu_2=-\obH\tbmu_1+ (\bSigma_2^{-1}+\obH)\tbmu_2.
\end{align}
\end{subequations}
Multiplying both sides of \eqref{eq:bmu1} and \eqref{eq:bmu2} by $\obH^{-1}$, we get
\begin{align}
(\obH^{-1}\bSigma_1^{-1}+\bI_p)\tbmu_1-\tbmu_2&= \obH^{-1}\bSigma_1^{-1}\bmu_1\\
-\tbmu_1+ (\obH^{-1}\bSigma_2^{-1}+\bI_p)\tbmu_2&=\obH^{-1}\bSigma_2^{-1}\bmu_2,
\end{align}
which can be written in matrix form
\begin{equation*}
\begin{pmatrix}
\obH^{-1}\bSigma_1^{-1}+\bI_p& -\bI_p\\
 -\bI_p &\obH^{-1}\bSigma_2^{-1}+\bI_p
\end{pmatrix}\begin{pmatrix}
\tbmu_1\\
\tbmu_2
\end{pmatrix}=
\begin{pmatrix}
\obH^{-1}\bSigma_1^{-1}& \bzero\\
 \bzero &\obH^{-1}\bSigma_2^{-1}
\end{pmatrix}
\begin{pmatrix}
\bmu_1\\
\bmu_2
\end{pmatrix},
\end{equation*}
from which we obtain \eqref{eq:tbmu}. 

Finally, the degree of conflict between GRFV's $\tX_1\sim\tN(\bmu_1,\bSigma_1,\bH_1)$ and $\tX_2\sim\tN(\bmu_2,\bSigma_2,\bH_2)$ is 
\[
\kappa=1-(P_1\times P_2)(\tTheta^*) =1- \int_{\reels^{2p}} f(\bm_1,\bm_2) \tF(\bm_1,\bm_2) d\bm_1d\bm_2.
\]
Taking the ratio of \eqref{eq:condfmult1} to \eqref{eq:condfmult2}, we get
\begin{multline*}
\int_{\reels^{2p}} f(\bm_1,\bm_2) \tF(\bm_1,\bm_2) d\bm_1d\bm_2=\\
\sqrt{\frac{|\tbSigma|}{|\bSigma_1|  |\bSigma_2|}} \exp\left\{-\frac12 \left[ \bmu_1^T \bSigma_1^{-1} \bmu_1+ \bmu_2^T \bSigma_2^{-1} \bmu_2 - \tbmu^T \tbSigma^{-1} \tbmu\right]\right\}.
\end{multline*}

\section{Proof of Lemma \ref{lem:proj}}
\label{app:proj}

 The membership function of the projection of fuzzy vector $\GFV(\bm,\bH)$ on $\Theta_1$ is
 \begin{equation}
 \label{eq:phimarg}
 \varphi(\bx_1)=\max_{\bx_2} \exp\left(-\frac12(\bx-\bm)^T \bH (\bx-\bm)\right)= \exp\left(-\frac12 \min_{\bx_2}Z \right),
 \end{equation}
with $Z=(\bx-\bm)^T \bH (\bx-\bm)$. Now,
\begin{subequations}
\label{eq:Z1}
\begin{align}
Z&=(\bx_1-\bm_1, \bx_2-\bm_2) \block{\bH_{11}}{\bH_{12}}{\bH_{21}}{\bH_{22}}\begin{pmatrix}\bx_1-\bm_1\\ \bx_2-\bm_2\end{pmatrix} \\
&=(\bx_1-\bm_1)^T\bH_{11}(\bx_1-\bm_1)+ (\bx_2-\bm_2)^T\bH_{21}(\bx_1-\bm_1)+\\
& \hspace{2cm} (\bx_1-\bm_1)^T\bH_{12}(\bx_2-\bm_2)+ (\bx_2-\bm_2)^T\bH_{22}(\bx_2-\bm_2). \nonumber
\end{align}
\end{subequations}
Using $\bH_{21}=\bH_{12}^T$, the gradient of $Z$ with respect to $\bx_2$ can be written as
\[
\deriv{Z}{\bx_2}=2 \bH_{21}(\bx_1-\bm_1)+ 2 \bH_{22}(\bx_2-\bm_2).
\]
Setting $\deriv{Z}{\bx_2}=0$, and assuming $\bH_{22}$ to be nonsingular, we get
\begin{equation}
\label{eq:x2m2}
(\bx_2-\bm_2)=-\bH_{22}^{-1}\bH_{21}(\bx_1-\bm_1).
\end{equation}
Replacing  $(\bx_2-\bm_2)$ by its expression \eqref{eq:x2m2} in \eqref{eq:Z1} and using \eqref{eq:phimarg}, we finally get
\begin{equation*}
\varphi(\bx_1)= \exp\left(-\frac12(\bx_1-\bm_1)^T \bH_{11}' (\bx_1-\bm_1)\right),
\end{equation*}
with
\[
\bH'_{11}=\bH_{11} - \bH_{12} \bH_{22}^{-1}\bH_{21}.
\]

\end{document}